\documentclass{article}

\PassOptionsToPackage{numbers, compress}{natbib}
\usepackage[preprint]{neurips_2026}

\usepackage[utf8]{inputenc} 
\usepackage[T1]{fontenc}    
\usepackage{hyperref}       
\usepackage{url}            
\usepackage{booktabs}       
\usepackage{amsfonts}       
\usepackage{nicefrac}       
\usepackage{microtype}      
\usepackage{xcolor}         
\usepackage{algorithm}
\usepackage{algorithmicx}
\usepackage{algpseudocode}
\usepackage{amsmath}
\usepackage{amssymb}
\usepackage{amsthm}
\usepackage{hyperref}
\usepackage{graphicx}
\usepackage{wrapfig}
\usepackage{caption}
\usepackage[table]{xcolor}
\usepackage{etoc}
\etocdepthtag.toc{mtchapter}
\etocsettagdepth{mtchapter}{subsubsection}
\etocsettagdepth{mtappendix}{none}
\usepackage{enumitem}

\newtheorem{theorem}{Theorem}

\def\eg{\mbox{\textit{e.g.}}}

\title{Zeta: Dual Whitening for Matrix Optimization via Coordinate-Adaptive Preconditioning}

\author{
    Kaiwen Chen\textsuperscript{\rm 1}\thanks{Equal contribution.Email: kchen584272225@gmail.com, shuhaizhangshz@gmail.com}~~
    Shuhai Zhang\textsuperscript{\rm 1}\footnotemark[1]~~
    Zimo Liu\textsuperscript{\rm 2}~~
    Linxiao Li\textsuperscript{\rm 2}~~
    \textbf{Ying Sun}\textsuperscript{\rm 2}\\
    \textbf{Yuchen Li}\textsuperscript{\rm 2}~~
    \textbf{Yifan Zhang}\textsuperscript{\rm 2}~~
    \textbf{Bo Han}\textsuperscript{\rm 3}~~
    \textbf{Mingkui Tan}\textsuperscript{\rm 1}\footnotemark[2]~~
    \textbf{Qiuwu Chen}\textsuperscript{\rm 2}\thanks{Corresponding author. Email: chenqiuwu@aigcode.net, mingkuitan@scut.edu.cn}\\
    \textsuperscript{\rm 1} South China University of Technology,
    \textsuperscript{\rm 2} AIGCode,
    \textsuperscript{\rm 3} Hong Kong Baptist University
}

\begin{document}

\maketitle

\begin{abstract}

  Large-scale neural network training increasingly relies on matrix-aware optimizers that exploit the structure of weight parameters beyond element-wise adaptation. However, existing matrix-aware methods such as Muon have an underappreciated vulnerability: their core operation, Newton–Schulz iteration, depends critically on input conditioning, yet the raw momentum matrices exhibit severe coordinate-wise scale heterogeneity. In this paper, we first verify this scale heterogeneity through a chi-square uniformity test, showing that intra-matrix scale imbalance is prevalent across Transformer layers and that coordinate whitening effectively corrects it. Motivated by this finding, we propose \textit{Zeta}, a dual whitening optimizer that applies coordinate whitening and spectral whitening in a strictly ordered pipeline.   The ordering is not a tunable choice but follows from a mathematical dependency: coordinate whitening establishes the statistical isotropy that spectral whitening requires to function reliably. We further prove that this dual pipeline strictly reduces orthogonalization error relative to pure spectral methods by improving the condition number of the input. Empirically, Zeta matches or surpasses strong baselines across language modeling (0.6B to 8B parameters), mixture-of-experts architectures, and vision tasks, demonstrating that resolving scale imbalance before orthogonalization leads to faster convergence and better generalization. The source code is available at \url{https://github.com/AIGCodeOS/aigcode_zeta_optimizer}.

\end{abstract}

\section{Introduction}
\label{sec:intro}


Neural network training \cite{bottou2018optimization,goodfellow2016deep} relies heavily on optimization algorithms that determine how each parameter is updated. For example, adaptive optimizers\cite{duchi2011adaptive,shazeer2018adafactor,liu2019variance} and their variants (\eg, AdamW\cite{loshchilov2017decoupled}) operate coordinate by coordinate, using the running first and second moments of each parameter's gradient to rescale updates element-wise. This handles the wide range of gradient magnitudes\cite{glorot2010understanding} that can arise across different parts of the network \cite{zhang2020adaptive}. However, by treating weight matrices as flattened vectors, these methods ignore the spatial correlations and spectral properties that define the internal representations of deep neural networks, properties that are intrinsic to effective learning.

Several recent optimizers aim to capture the matrix structure that scalar methods discard. Muon \cite{jordan2024muon} is a leading example. It enforces orthogonal constraints on weight updates via \textit{spectral whitening}, a process that equalizes the singular values of the momentum matrix, preventing any single direction from dominating the update. This design has proven effective at scale, most notably in the training of the Kimi-1.1T model \cite{team2025kimi}. By operating directly on the matrix rather than on flattened entries, Muon preserves the geometry of the update that scalar methods cannot, leading to more stable training.

Despite these strengths, Muon exhibits an underappreciated vulnerability. Its spectral operation, the Newton–Schulz iteration \cite{kovarik1970some,bjorck1971iterative,higham2008functions}, is a projection onto the Stiefel manifold \cite{zhang2026mousse, gonon2026insights}, whose accuracy is sensitive to input conditioning \cite{nakatsukasa2012backward, he2025root}. The iteration works well when the singular values are well-distributed. But raw momentum matrices in deep network training are often dominated by a few coordinates with disproportionately large magnitudes. When such a matrix enters the Newton–Schulz step, the large entries skew the singular spectrum and orthogonalization degrades \cite{cheng2026trasmuon}. The update direction no longer faithfully reflects the gradient geometry. In short, Muon applies a powerful geometric operation to a signal with uncontrolled coordinate-wise scales.

This reveals a deeper mismatch, not a matter of tuning. Spectral whitening demands a scale-normalized input, but the raw momentum offers no such guarantee. The missing ingredient is a step that homogenizes the coordinate-wise scales before entering the Newton–Schulz iteration. To verify that this scale heterogeneity is both real and correctable, we apply a chi-square uniformity test \cite{pearson1900x} to the row energies of representative weight matrices in a Transformer. The null hypothesis is that all blocks of the matrix share the same scale. As in Figure~\ref{fig:motivation_pvalue}, the $p$-values before any preprocessing are consistently small across layers, confirming that \textit{significant intra-matrix scale imbalance} exists. After coordinate whitening, where each entry is normalized by its running second moment, the $p$-values shift markedly toward 1, indicating that the imbalance has been largely removed.

\begin{figure*}[t]
\vspace{-15pt}
    \centering
    \includegraphics[width=0.98\textwidth]{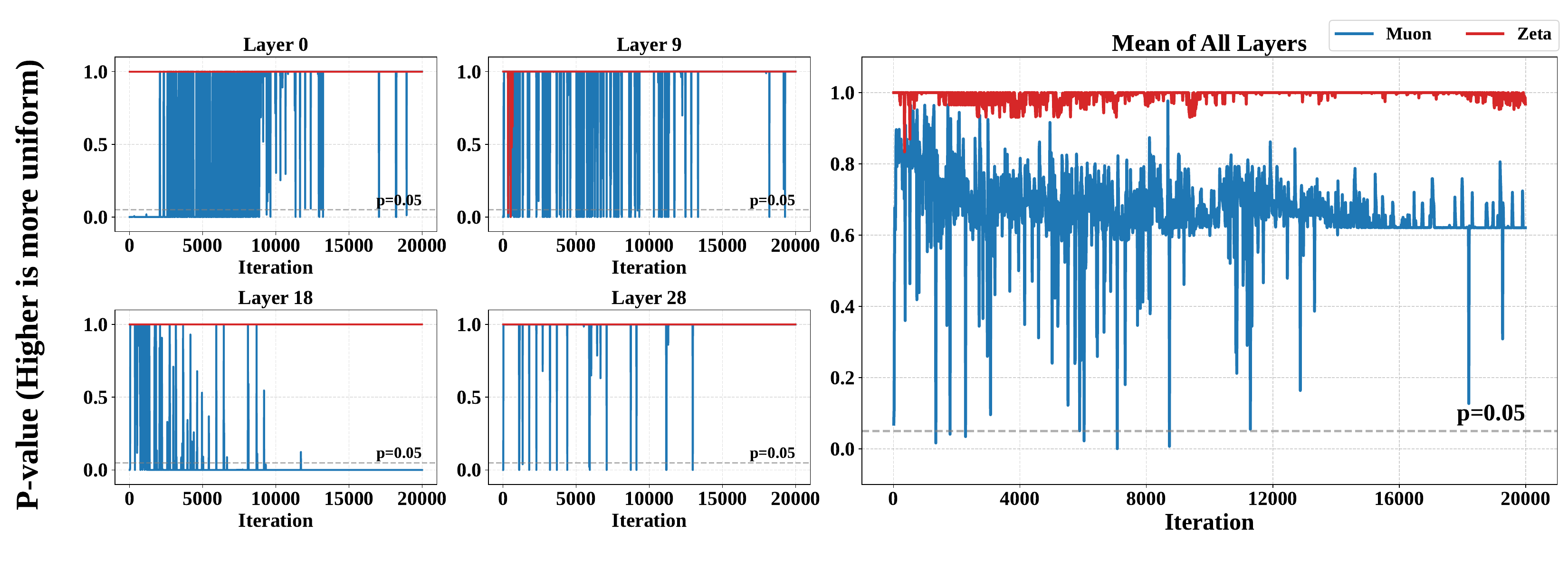}
    \vspace{-10pt}
    \caption{
    Illustration of intra-matrix scale heterogeneity in Muon. We apply a chi-square uniformity test to the momentum matrices of a Qwen3-0.6B model trained with Muon, where the null hypothesis is that all blocks share the same scale and $p$-values closer to 1 indicate stronger uniformity. The left panel shows representative layers from shallow, middle, and deep parts of the network, while the right panel reports the mean across all layers. The $p$-values before whitening are consistently small, confirming that the raw momentum matrices exhibit significant scale imbalance. After coordinate whitening, the $p$-values shift markedly toward 1, indicating that the imbalance can be corrected.
    }
    \label{fig:motivation_pvalue}
    \vspace{-1.0em}
\end{figure*}


Motivated by this finding, we propose \textbf{Zeta}, a \textit{dual whitening} optimizer. The core insight is straightforward: spectral whitening must be built upon statistical stabilization. Zeta implements this as a sequential pipeline of two operations. The first, \textbf{coordinate whitening}, normalizes each entry of the momentum by its running second moment, reducing scale disparity across coordinates. The second, \textbf{spectral whitening}, applies Newton–Schulz orthogonalization to the now well-conditioned matrix, equalizing its singular values to produce a balanced update direction. These two operations are complementary halves of a single requirement: coordinate whitening establishes the statistical isotropy that spectral whitening depends on, so the ordering is not a free choice. A rotation is only meaningful if the input has been freed from extreme scale distortion beforehand.

We further support this design with theoretical analysis. Specifically, we show that coordinate whitening acts as an approximate diagonal preconditioner under standard assumptions (Theorem \ref{thm:coord_whitening}), giving the first stage a principled interpretation rather than a heuristic step. We then prove that this preconditioning improves the condition number of the input, which reduces the orthogonalization error of the Newton–Schulz iteration relative to operating on raw momentum (Theorem \ref{thm:ns_orthogonality}). Finally, we confirm that the RMS scaling preserves the spectral direction of the update (Theorem \ref{thm:rms_scaling}), so the pipeline produces a well-conditioned update without distorting its geometry.

Empirically, we evaluate Zeta on language model pretraining across scales from 0.6B to 8B parameters, on mixture-of-experts architectures, and on vision tasks. Across these settings, Zeta consistently matches or surpasses AdamW \cite{loshchilov2017decoupled}, Muon \cite{jordan2024muon}, and AdaMuon \cite{si2025adamuon} in both convergence speed and downstream performance. These results show that resolving scale imbalance before orthogonalization leads to faster convergence and better generalization, without sacrificing the matrix-level geometric structure that makes spectral whitening valuable.
Our contributions are summarized as: 

\begin{itemize}[leftmargin=*]
\item We identify and empirically verify that spectral whitening methods like Muon implicitly require scale-normalized input, a condition that raw momentum matrices systematically violate. Through a chi-square uniformity test, we demonstrate that significant intra-matrix scale heterogeneity exists across transformer layers and that coordinate whitening effectively corrects it, revealing a previously overlooked prerequisite for stable spectral orthogonalization.
\item We propose Zeta, a dual whitening optimizer, which organizes coordinate whitening and spectral whitening into a strictly ordered pipeline. The ordering is not a tunable choice but follows from a mathematical dependency: coordinate whitening establishes the statistical isotropy that spectral whitening requires to function reliably. This design unifies what have been treated as competing optimization paradigms into complementary stages of a single requirement.
\item  We prove that the dual whitening pipeline strictly reduces orthogonalization error relative to pure spectral methods, by showing that the coordinate whitening stage improves the condition number of the input to the Newton–Schulz iteration. We evaluate Zeta on language modeling (0.6B to 8B parameters), mixture-of-experts architectures, and vision tasks, showing consistent improvements in convergence speed and downstream performance over AdamW, Muon, and AdaMuon.
\end{itemize}


\section{Related Work}
\label{sec:related_work}

Optimization algorithms in deep learning primarily follow two trajectories: coordinate-wise adaptive scaling and methods that exploit the structural correlations within parameter matrices.

\textbf{First-order Adaptive Optimizers}.
First-order methods remain central to large-scale training because of their efficiency. Beyond classical Stochastic Gradient Descent (SGD\cite{bottou2010large}), adaptive methods such as \textbf{Adam}\cite{kingma2014adam} and \textbf{AdamW}\cite{loshchilov2017decoupled} are standard for Transformers, using second-moment estimates to scale coordinates independently. \textbf{Lion}\cite{chen2023symbolic} offers a memory-efficient alternative based on the sign of momentum, while \textbf{Sophia}\cite{liu2023sophia} incorporates diagonal Hessian estimates to capture loss curvature. Despite their effectiveness for coordinate-wise variance, these methods treat parameters as flattened vectors and thus ignore the spatial and spectral structure of weight matrices.

\textbf{Matrix-structured and Second-order Optimizers}.
To capture the correlations between parameters, another line of research focuses on matrix-structured optimization. \textbf{Shampoo}\cite{gupta2018shampoo} and its variants like \textbf{SOAP}\cite{vyas2024soap} employ Kronecker-factored preconditioners to approximate the inverse of the full second-moment matrix, effectively whitening the gradient across rows and columns. However, these methods often involve expensive matrix square root or inversion operations. \textbf{Muon}\cite{jordan2024muon} provides a more efficient alternative by using Newton-Schulz iteration for spectral whitening, which forces the update direction to be an orthogonal matrix. Following this, hybrid approaches like \textbf{AdaMuon}\cite{si2025adamuon} attempt to combine adaptive scaling with spectral methods. In contrast, our Zeta positions itself at the intersection of these two trajectories. By coupling coordinate whitening with spectral orthogonalization in a unified hierarchical pipeline, Zeta addresses the gradient scale imbalance that remains a challenge for pure spectral methods, while preserving the structural benefits of matrix-based optimization.

\section{Proposed Method}
\label{sec:proposed}

The Muon optimizer \cite{jordan2024muon} enforces orthogonal constraints on weight updates via spectral whitening, a matrix-aware design that has proven highly effective at scale \cite{team2025kimi}. Despite this success, Muon suffers from a critical vulnerability. The Newton–Schulz iteration \cite{kovarik1970some,bjorck1971iterative,higham2008functions} it relies on is a projection onto the Stiefel manifold \cite{zhang2026mousse, gonon2026insights}, whose accuracy is sensitive to input conditioning \cite{nakatsukasa2012backward,he2025root}. When the momentum matrix is dominated by a few coordinates with large magnitudes, its singular spectrum becomes skewed and orthogonalization degrades \cite{cheng2026trasmuon}, producing distorted updates. In essence, Muon applies a powerful geometric operation to an insufficiently conditioned signal, without a mechanism to stabilize the input beforehand. This is not an implementation flaw but a structural gap: spectral whitening requires a well-conditioned input, yet the raw momentum provides no such guarantee.

To address the above issue, we propose \textbf{Zeta}, a \textit{dual} whitening optimizer. Our core insight is that spectral whitening should be built upon statistical stabilization: the coordinate-wise scales of the momentum matrix must be homogenized before it enters the Newton–Schulz iteration. This principle leads to a design where two complementary operations are applied sequentially: \textit{coordinate whitening}, which normalizes each entry to reduce its scale disparity across coordinates; \textit{spectral whitening}, which applies Newton–Schulz orthogonalization to equalize the singular values of the update. These two operations are not competing alternatives but sequential necessities. One operates in the coordinate basis to establish statistical isotropy, the other in the spectral basis to enforce geometric structure.

Crucially, Zeta is not a heuristic combination of existing techniques but is based on the dual whitening principle. We highlight three key aspects. 1) \textbf{The two operations are complementary}. Coordinate whitening reduces entry-wise scale disparity (Theorem \ref{thm:coord_whitening}); spectral whitening corrects directional imbalance among singular values. Neither alone covers both, and together they form a complete normalization of the update.
2) \textbf{The sequential ordering is a logical necessity, not a tunable choice}. Spectral whitening performs a rotation, whose fidelity depends on the input being scale-normalized. Reversing the sequence would feed an unnormalized matrix into the Newton–Schulz step, returning the instability Zeta removes. This ordering is provably necessary: Theorem \ref{thm:ns_orthogonality} shows that coordinate whitening strictly reduces the orthogonalization error of the spectral stage.
3) \textbf{The pipeline is unified, not modular}. Coordinate whitening does not merely preprocess input for spectral whitening. It provides the scale-normalized input that spectral whitening requires to function reliably. The two operations are not independent modules; they are two halves of one preconditioning requirement.


\subsection{Preliminaries and Motivations}
\label{subsec:muon_baseline}

Modern adaptive optimizers such as AdamW \cite{loshchilov2017decoupled} scale each parameter independently using its running second moment. This element-wise operation treats weight matrices as flat vectors and ignores their matrix structure. Muon \cite{jordan2024muon} departs from this paradigm by directly exploiting the fact that parameters like attention and feed-forward projections are matrices (Algorithm \ref{alg:muon}). Its core operation, spectral whitening, computes the orthogonal matrix closest to the current momentum $G_t \in \mathbb{R}^{m \times n}$:
\begin{equation}
    \text{Ortho}(G_t) = \arg\min_{O} \|O - G_t\|_F \quad \text{s.t. } O^\top O = I \text{ or } O O^\top = I.
\end{equation}
Rather than computing this projection via expensive SVD \cite{bernstein2024old}, Muon approximates it via the Newton–Schulz iteration \cite{kovarik1970some,bjorck1971iterative,higham2008functions}. Starting from a normalized matrix $X_0 {=} G_t / \|G_t\|_F$, it iterates by
\begin{equation}
    X_{k+1} = aX_k + b(X_k X_k^\top)X_k + c(X_k X_k^\top)^2 X_k,
    \label{eq:newton_schulz}
\end{equation}
where $a = 3.4445$, $b = -4.7750$, $c = 2.0315$. After a small number of steps (typically $K = 5$), the output $U_t \approx X_K$ is an approximate orthogonal matrix whose singular values are close to 1. This equalization prevents any single direction from dominating the update and promotes \textit{balanced} learning across \textit{all} intrinsic dimensions of the representation.

\textbf{Implicit assumption of coordinate-scale uniformity in Muon.}
The Newton–Schulz iteration is a projection onto the Stiefel manifold \cite{zhang2026mousse, gonon2026insights}, whose accuracy depends on the conditioning of its input \cite{nakatsukasa2012backward, he2025root}. When the singular values are well-distributed, the iteration converges rapidly to a faithful orthogonal approximation. This assumption, however, is often \textit{violated} in practice. In Transformer training, gradients within a single weight matrix often exhibit vast disparities across attention heads, feed-forward channels, and individual rows. When these imbalanced gradients are accumulated into the momentum $G_t$, the resulting matrix is dominated by a few coordinates with large magnitudes, its singular spectrum becomes skewed, and the orthogonalization degrades \cite{cheng2026trasmuon}. The structural benefit of spectral whitening is thus undermined by the coordinate-scale heterogeneity it never addresses.

\textbf{Empirical evidence of scale heterogeneity.}
To verify that this scale heterogeneity is real and can be corrected, we conduct a \textit{chi-square uniformity test} \cite{pearson1900x} on the row energies of representative weight matrices. We partition the matrix into equal-sized blocks and test the null hypothesis that all blocks share the same scale. Under the null, $p$-values close to 1 indicate scale uniformity. As shown in Figure \ref{fig:motivation_pvalue}, the $p$-values before any preprocessing are consistently small across layers, confirming significant intra-matrix scale imbalance. After applying coordinate whitening (element-wise normalization by the running second moment), the $p$-values shift markedly closer to 1, showing that the matrix has been brought substantially closer to the scale-uniform regime that the Newton–Schulz iteration requires.

\textbf{From diagnosis to design principle.}
This experiment reveals a failure mode in spectral whitening and points to its remedy: the Newton–Schulz iteration is not flawed in itself, but it operates on an input that violates its implicit precondition.  The missing ingredient is a preprocessing stage that homogenizes coordinate-wise scales before spectral whitening is applied. 
This insight directly motivates Zeta, where coordinate whitening and spectral whitening are arranged in sequence, the former providing the statistical condition the latter depends on to function reliably.

\begin{figure}[t]
    \centering
    \vspace{-0.8em}
    \begin{minipage}[t]{0.46\linewidth}
    \centering
    \footnotesize
    \begin{algorithm}[H]\small
        \caption{Muon Optimizer}
        \label{alg:muon}
        \begin{algorithmic}[1]
            \Require Learning rate $\eta$, momentum $\beta$, weight decay $\lambda$, Newton-Schulz steps $K$
            \Ensure Optimized matrix weight $W$
            \State Initialize $G_0 = 0$
            \While{$t < T_{max}$}
                \State $g_t \gets \nabla_W \mathcal{L}(W_t)$
                \State $G_t \gets g_t + \beta G_{t-1}$
                \State $U_t \gets \text{Newton-Schulz}(G_t, K)$
                \State $\eta' \gets 0.2 \eta \cdot \sqrt{\max(m,n)}$
                \State $W_t \gets (1 - \eta \lambda) W_t$
                \State $W_{t+1} \gets W_t - \eta' U_t$
                \State $t \gets t + 1$
            \EndWhile
            \State \Return $W_t$
        \end{algorithmic}
    \end{algorithm}
\end{minipage}\hspace{0.02\linewidth}
\begin{minipage}[t]{0.5\linewidth}
    \centering
    \footnotesize
    \begin{algorithm}[H]\small
        \caption{Zeta Optimizer}
        \label{alg:Zeta}
        \begin{algorithmic}[1]
            \Require Learning rate $\eta$, coefficients $\beta_1, \beta_2$, weight decay $\lambda$, Newton-Schulz steps $K$, $\varepsilon$
            \Ensure Optimized weight matrix $W$
            \State Initialize $M_0 \gets 0, V_0 \gets 0$
            \While{$t < T_{max}$}
                \State $g_t \gets \nabla_W \mathcal{L}(W_t)$
                \State $M_t \gets \beta_1 M_{t-1} + (1 - \beta_1) g_t$
                \State $V_t \gets \beta_2 V_{t-1} + (1 - \beta_2) g_t^2$
                \State $\tilde{G}_t \gets M_t / (\sqrt{V_t} + \varepsilon)$
                \State $U_t \gets \text{Newton-Schulz}(\tilde{G}_t, K)$
                \State $\Delta W_t \gets (0.2 \cdot \sqrt{m\times n}) / (\|U_t\|_F {+} \varepsilon) \cdot U_t$
                \State $W_{t+1} \gets (1 - \eta\lambda) W_t - \eta \cdot \Delta W_t$
                \State $t \gets t + 1$
            \EndWhile
            \State \Return $W_t$
        \end{algorithmic}
    \end{algorithm}
\end{minipage}
\vspace{-1em}
\end{figure}

\subsection{Zeta: Dual Whitening for Stable Orthogonal Updates}
\label{subsec:Zeta}

The principle established in Section \ref{subsec:muon_baseline} identifies a gap: spectral whitening demands a scale-normalized input that the raw momentum does not provide. We propose \textbf{Zeta} to close this gap with a preprocessing step that homogenizes coordinate-wise scales before entering Newton–Schulz iterations. Its core principle is that spectral whitening must be built upon statistical stabilization. Rather than feeding raw momentum into the Newton–Schulz iteration, Zeta first applies coordinate whitening to produce a statistically isotropic matrix, then performs spectral whitening on the result.

\textbf{Coordinate whitening.}
Given a matrix-parameter gradient $g_t \in \mathbb{R}^{m \times n}$ at step $t$, Zeta maintains two exponential moving averages, mirroring the standard moments in Adam \cite{loshchilov2017decoupled}:
\begin{equation}
    M_t = \beta_1 M_{t-1} + (1 - \beta_1) g_t, \qquad
    V_t = \beta_2 V_{t-1} + (1 - \beta_2) g_t^2,
\end{equation}
where all operations are element-wise. Instead of using $M_t$ directly, Zeta normalizes each entry by its running second moment:
\begin{equation}
    \tilde{G}_t = \frac{M_t}{\sqrt{V_t} + \varepsilon}.
    \label{eq:coord_whitening}
\end{equation}
Eqn. (\ref{eq:coord_whitening}) is the coordinate whitening step. It reduces the scale disparity across entries, bringing the matrix closer to the scale-uniform regime that spectral operators require. 
Theorem \ref{thm:coord_whitening} formalizes this operation as an approximate diagonal preconditioner: under a diagonal-covariance assumption, the element-wise second-moment estimator yields an approximation to the ideal whitening matrix, making this step a principled normalization rather than an ad-hoc heuristic.

\textbf{Spectral whitening.}
The homogenized matrix $\tilde{G}_t$ then enters the Newton–Schulz iteration (Eqn. (\ref{eq:newton_schulz})), which produces an orthogonal update direction:
\begin{equation}
    U_t = \text{Newton–Schulz}(\tilde{G}_t).
    \label{eq:spectral_whitening}
\end{equation}
Because $\tilde{G}_t$ is free of extreme entry-wise scale disparities, its singular spectrum is better conditioned than that of the raw momentum, and the Newton–Schulz iteration converges to a faithful orthogonal approximation. Theorem \ref{thm:ns_orthogonality} provides the formal guarantee: coordinate whitening reduces the condition number of the input, which decreases the orthogonalization error of the spectral stage. The two stages are thus complementary—the first reduces spectral anisotropy, the second exploits the improved spectrum to produce an update closer to the ideal orthogonal factor under the same iteration budget.

\textbf{Update construction.}
Zeta applies an RMS-normalized scaling to the orthogonal direction:
\begin{equation}
    \Delta W_t = \frac{0.2 \cdot \sqrt{\max(m, n)}}{\|U_t\|_F + \varepsilon} \cdot U_t.
    \label{eq:rms_scaling}
\end{equation}
Theorem~\ref{thm:rms_scaling} shows that this scaling preserves the spectral direction of the orthogonalized update while adjusting its global magnitude to a level compatible with standard optimizer configurations, synchronizing the update scale between the matrix branch and the rest of the network.

\textbf{Dual-path execution.}
For matrix-valued parameters (\eg, attention and feed-forward projections), Zeta applies the full coordinate-then-spectral pipeline described above. For non-matrix parameters such as biases and LayerNorm scales, the spectral whitening step is skipped, and Zeta falls back to an AdamW-style update. This dual-path strategy ensures that matrix parameters benefit from the dual whitening procedure while scalar parameters incur no unnecessary overhead.

\subsection{Theoretical Guarantees for Zeta}
\label{sec:theory}

In this section, we provide a concise theoretical interpretation of why dual whitening improves matrix optimization. We focus on three observations: first, the coordinate normalization induced by the second-moment estimator can be interpreted as an approximate whitening operation; second, this preprocessing improves the quality of the subsequent Newton-Schulz (NS) orthogonalization and makes the resulting update matrix closer to an ideal orthogonal matrix; third, the final RMS calibration enforces a controlled update scale while preserving the orthogonalized direction.

\begin{theorem}[Relation between second-moment and whitening]\label{thm:coord_whitening}
Let $M_t \in \mathbb{R}^{m \times n}$ be the momentum matrix at iteration $t$, with vectorized form $m_t = \mathrm{vec}(M_t)$. Assume its covariance is approximately diagonal, $\mathrm{Cov}(m_t) \approx \Sigma_t = \mathrm{diag}(\sigma_{t,1}^2, \dots, \sigma_{t,mn}^2)$. The ideal whitening transform is
\begin{equation}
    \hat{m}_t = \Sigma_t^{-1/2} m_t
\end{equation}
In Zeta, we instead use the second-moment estimator $V_t = \beta_2 V_{t-1} + (1-\beta_2) G_t^2$ and apply the coordinate whitening step in Eq.~\eqref{eq:coord_whitening}. If $V_t$ consistently estimates the diagonal second-order moment, then $\mathrm{vec}(\tilde{G}_t) \approx \mathrm{diag}(V_t)^{-1/2} \, \mathrm{vec}(M_t)$ is a diagonal approximation to $\Sigma_t^{-1/2} m_t$. Hence, Zeta implements approximate coordinate whitening.
\end{theorem}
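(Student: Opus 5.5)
The argument is essentially a bookkeeping one: make the hypothesis ``$V_t$ consistently estimates the diagonal second-order moment'' precise, then check coordinate by coordinate that the Zeta map $M_t \mapsto M_t/(\sqrt{V_t}+\varepsilon)$ agrees, up to controlled factors, with the ideal map $m_t \mapsto \Sigma_t^{-1/2} m_t$. Throughout, index the vectorized entries by $i \in \{1,\dots,mn\}$ and write $v_{t,i} = \mathrm{vec}(V_t)_i$ and $m_{t,i} = \mathrm{vec}(M_t)_i$.

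\textbf{Step 1: fix the hypothesis.} Read ``consistently estimates'' as
\[
v_{t,i} = s_{t,i}^2(1+\delta_{t,i}), \qquad |\delta_{t,i}|\le\delta<1,
\]
where $s_{t,i}^2 = \mathbb{E}[g_{t,i}^2]$ is the diagonal second moment. Unrolling the recursion gives $V_t=(1-\beta_2)\sum_{k\le t}\beta_2^{t-k}g_k^2$. Its expectation is a bias-weighted average of the $s_{k,i}^2$, so under slowly varying statistics (and after the usual bias correction) it concentrates around $s_{t,i}^2$. I will simply take this as the stated assumption and not prove it.

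\textbf{Step 2: link the second moment to $\Sigma_t$.} This is the main obstacle. The theorem's $\Sigma_t$ is the covariance of $m_t$, but $V_t$ tracks the raw second moment of the gradient. Two gaps have to be closed.

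The first gap is centering: $\mathbb{E}[g^2]=\mathrm{Var}(g)+(\mathbb{E}g)^2$. I would assume the gradient noise dominates the mean, i.e.\ $(\mathbb{E}g_{t,i})^2\le\rho\,\mathrm{Var}(g_{t,i})$, so that the variance and the second moment agree up to a factor $(1+\rho)$.

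The second gap is the momentum averaging. For approximately independent gradient noise with stationary per-coordinate variance, $\mathrm{Var}(m_{t,i})=\frac{1-\beta_1}{1+\beta_1}\mathrm{Var}(g_{t,i})$. The factor $\frac{1-\beta_1}{1+\beta_1}$ is a constant shared by all coordinates, so
\[
\sigma_{t,i}^2 = c\, s_{t,i}^2\,(1+O(\rho)), \qquad c=\tfrac{1-\beta_1}{1+\beta_1}.
\]

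\textbf{Step 3: combine the two maps.} The diagonal assumption on $\mathrm{Cov}(m_t)$ makes the ideal transform $\Sigma_t^{-1/2}$ entrywise, so both transforms act coordinatewise and can be compared entry by entry. With $\varepsilon\ll\sqrt{v_{t,i}}$ we have
\[
\tilde G_{t,i}=\frac{m_{t,i}}{\sqrt{v_{t,i}}}\,\bigl(1+O(\varepsilon/\sqrt{v_{t,i}})\bigr).
\]
Substituting Steps 1 and 2 gives
\[
\tilde G_{t,i}=\sqrt{c}\,\hat m_{t,i}\,(1+\eta_{t,i}), \qquad |\eta_{t,i}| = O(\delta+\rho+\varepsilon/s_{t,i}).
\]
Hence
\[
\mathrm{vec}(\tilde G_t)=\sqrt{c}\,(I+E_t)\,\Sigma_t^{-1/2}m_t
\]
with $E_t$ diagonal and $\|E_t\|_2$ small. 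The global scalar $\sqrt{c}$ is irrelevant: the Newton--Schulz step normalizes by $\|\cdot\|_F$, and the RMS scaling in Eq.~\eqref{eq:rms_scaling} fixes the magnitude anyway. It follows that $\mathrm{Cov}(\mathrm{vec}\,\tilde G_t)\approx c\,(I+E_t)^2$, which is near-isotropic, and this is the precise sense of ``approximate coordinate whitening.''
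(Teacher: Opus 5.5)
Your argument is sound at the heuristic level the statement operates at, but it takes a different route from the paper.

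The paper reads the hypothesis as saying that $V_t$ tracks the second moment of the momentum itself, $\mathrm{diag}(V_t)\approx\mathrm{diag}(\mathbb{E}[m_{t,i}^2])$. It adds the centering assumption $\mathbb{E}[m_t]\approx 0$ so that this second moment coincides with $\Sigma_t$. It then concludes $\mathrm{diag}(V_t)^{-1/2}\approx\Sigma_t^{-1/2}$ directly, treating $\varepsilon$ as a perturbation, with no momentum-averaging step and no leftover constant.

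You instead take seriously that $V_t$ is an EMA of $g_t^2$ rather than of $M_t^2$. You bridge the gap with a stationary, temporally independent noise model giving $\mathrm{Var}(m_{t,i})=\frac{1-\beta_1}{1+\beta_1}\mathrm{Var}(g_{t,i})$, and you impose centering on $g$ rather than on $m$. That costs you modeling assumptions the paper never states, but it buys three things:
\begin{itemize}
\item explicit error terms in $\delta$, $\rho$ and $\varepsilon/s_{t,i}$;
\item a centering condition that is actually weaker than the paper's once both are written in your model: momentum shrinks the variance by $c$ but leaves the mean essentially unchanged, so $\mathbb{E}[m_t]\approx 0$ relative to $\mathrm{Var}(m_t)$ demands $(\mathbb{E}g)^2\ll c\,\mathrm{Var}(g)$;
\item an honest accounting of the global factor $\sqrt{c}$ (about $0.16$ at $\beta_1=0.95$). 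Under the recursion Zeta actually runs, $\tilde G_t\approx\Sigma_t^{-1/2}m_t$ holds only up to this scalar. That is harmless precisely because Newton--Schulz normalizes by $\|\cdot\|_F$ and the RMS rescaling fixes the magnitude.
\end{itemize}

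The same observation lets you drop the appeal to bias correction, which Zeta does not perform. The missing factors $1-\beta_2^t$ in $V_t$ and $1-\beta_1^{2t}$ in $\mathrm{Var}(m_t)$ are coordinate-independent and fold into that scalar.

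One caveat: your closing claim $\mathrm{Cov}(\mathrm{vec}\,\tilde G_t)\approx c\,(I+E_t)^2$ treats $E_t$ as deterministic. In fact $V_t$ is random and built from the same gradients as $M_t$. This is acceptable as a heuristic, and the paper makes no covariance claim about $\tilde G_t$ at all.
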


Theorem~\ref{thm:coord_whitening} shows that Zeta replaces exact whitening, which would require the inverse square root of the full covariance matrix, with an efficient diagonal approximation based on the element-wise second moment. Thus, the coordinate-normalization stage is a principled approximation to whitening rather than an Adam-style heuristic. It also differs from sign-based preprocessing: second-moment normalization remains tied to a stable second-order statistic, whereas a sign operator mainly discards magnitude information; see Appendix~\ref{app:adam_vs_sign} for details.

\begin{theorem}[Dual Preconditioning Improves Spectral Orthogonalization]\label{thm:ns_orthogonality}
Let $G \in \mathbb{R}^{m \times n}$ be the input momentum matrix that would be fed to Newton-Schulz iteration in a pure spectral method, and let $D$ be the diagonal second-moment matrix estimated by the coordinate preconditioner. Consistent with Eq.~\eqref{eq:coord_whitening}, let $\tilde{G}$ denote the coordinate-whitened version of $G$. Let $NS_K(\cdot)$ denote the output of $K$ steps of Newton-Schulz iteration. Assume that the diagonal preconditioner reduces the condition number, i.e., $\kappa(\tilde{G}) \leq \kappa(G)$, where $\kappa(\cdot)$ is the $2$-norm condition number given by the ratio of the largest to the smallest non-zero singular value. Then, for any fixed number of iterations $K$,
\begin{equation}
    \|NS_K(\tilde{G})^\top NS_K(\tilde{G}) - I\|_F
    \leq
    \|NS_K(G)^\top NS_K(G) - I\|_F.
\end{equation}
\end{theorem}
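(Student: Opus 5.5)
The plan is to reduce both sides of the inequality to scalar statements about singular values and then use $\kappa(\tilde{G})\le\kappa(G)$ to compare them. Throughout I write $NS_K$ for the map defined by Eq.~\eqref{eq:newton_schulz} started from $X_0=G/\|G\|_F$.

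\textbf{Step 1 (spectral reduction).} Let $G=P\Sigma Q^\top$ be an SVD. Each Newton--Schulz step is an odd matrix polynomial in $X_k$, so every iterate keeps the singular vectors of $G$. Hence
\[
NS_K(G)=P\,\phi_K(\Sigma/\|G\|_F)\,Q^\top,
\qquad \phi_K=p\circ\cdots\circ p\ (K\text{ times}),
\qquad p(s)=as+bs^3+cs^5 .
\]
It follows that
\[
\|NS_K(G)^\top NS_K(G)-I\|_F^2=\sum_{i\le r}\bigl(\phi_K(\hat s_i)^2-1\bigr)^2+(n-r),
\]
where $\hat s_i=s_i/\|G\|_F$ are the normalized nonzero singular values and $r=\mathrm{rank}(G)$. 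The same formula holds for $\tilde{G}$. Since $\tilde{G}=G\oslash(\sqrt{D}+\varepsilon)$ is a Hadamard rescaling by strictly positive entries, it has the same support pattern as $G$. I would argue $\mathrm{rank}(\tilde{G})=r$ generically, so the constant $n-r$ cancels.

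\textbf{Step 2 (locating the normalized spectrum via $\kappa$).} Frobenius normalization gives $\hat s_{\max}\in[1/\sqrt r,1]$. It also gives $\hat s_i\ge \hat s_{\min}\ge 1/(\kappa\sqrt r)$. So every normalized singular value lies in the window $I(\kappa)=[1/(\kappa\sqrt r),1]$. The hypothesis $\kappa(\tilde G)\le\kappa(G)$ gives $I(\kappa(\tilde G))\subseteq I(\kappa(G))$: the whitened spectrum is confined to a subinterval that stays away from $0$.

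\textbf{Step 3 (scalar analysis of $h_K(s)=(\phi_K(s)^2-1)^2$).} I would study $h_K$ on $(0,1]$. On $(0,s_K^\ast]$, where $s_K^\ast$ is the first point at which $\phi_K$ enters its attracting band around $1$, $h_K$ is decreasing because $\phi_K$ increases from $0$. On the band, $h_K$ is bounded by a constant $\delta_K$ fixed by the coefficients $a,b,c$. This yields the comparison in the regime that drives the error: the small singular values that NS has not yet lifted. Moving the lower endpoint of the window upward, from $1/(\kappa(G)\sqrt r)$ to $1/(\kappa(\tilde G)\sqrt r)$, can only decrease each such term.

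\textbf{Step 4 (pairing and summing).} I would order both spectra increasingly and compare $h_K(\hat{\tilde s}_i)$ with $h_K(\hat s_i)$ term by term. The plan is to prove $\hat{\tilde s}_i\ge\hat s_i$ for the indices lying in the decreasing region of $h_K$, then sum over $i$ and take square roots.

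\textbf{Main obstacle.} Step 4 is where I expect the argument to fail as stated. The single scalar $\kappa$ controls only the extreme ratio $\hat s_{\min}/\hat s_{\max}$. It does not give the index-wise dominance $\hat{\tilde s}_i\ge\hat s_i$, and it does not control how many singular values fall in the non-monotone band. Because the Muon coefficients make $\phi_K$ oscillate rather than converge, $h_K$ is not monotone on all of $(0,1]$.

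My first attempt would be to show that Frobenius normalization together with $\kappa(\tilde G)\le\kappa(G)$ forces majorization-type dominance of the lower part of the spectrum. Failing that, the statement needs one of two repairs. One is to read the inequality with both sides replaced by their suprema over spectra of the given condition number; Steps 1--3 then give it directly via $I(\kappa(\tilde G))\subseteq I(\kappa(G))$. The other is to add the assumption that all normalized singular values of both matrices lie in $(0,s_K^\ast]$, where $h_K$ is monotone. I would make clear which of these readings is actually proved.
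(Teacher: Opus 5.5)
Your Steps 1--3 track the paper's argument. The paper also normalizes by $\|G\|_F$, reduces Newton--Schulz to the scalar map $f(x)=ax+bx^3+cx^5$, writes the squared defect as $\sum_i g(\sigma_{0,i})$ with $g=\bigl((f^{(K)})^2-1\bigr)^2$, and expands $g(x)=1-2a^{2K}x^2+O(x^4)$. Your $n-r$ term, which the paper drops, is correct. The paper then closes your Step 4 by assertion: $g$ is ``decreasing on the NS convergence interval relevant here,'' and a smaller $\kappa$ gives a ``more balanced'' spectrum, so the sum drops ``up to higher-order truncation terms.''

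You are right not to accept this. In fact nothing can close Step 4, because the statement is false. For a full-rank $2\times2$ matrix the normalized spectrum is $(\kappa,1)/\sqrt{1+\kappa^2}$, so the theorem says exactly that $\kappa\mapsto h_K(\kappa/\sqrt{1+\kappa^2})+h_K(1/\sqrt{1+\kappa^2})$ is nondecreasing, and it is not.
\begin{itemize}
\item For $K=1$, take $G=\mathrm{diag}(\sqrt{21},2)$ and $\tilde G=\mathrm{diag}(\sqrt3,1)$, obtained with $\sqrt D+\varepsilon$ having diagonal $(\sqrt7,2)$. Then $\kappa(\tilde G)=\sqrt3<\sqrt{21}/2=\kappa(G)$. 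The normalized spectra $(0.917,0.4)$ and $(0.866,0.5)$ are mapped by $f$ to about $(0.795,1.093)$ and $(0.871,1.189)$. The squared defects are $\approx0.174$ for $G$ but $\approx0.229$ for $\tilde G$.
\item At the paper's $K=5$, $G=\mathrm{diag}(\sqrt3,1)$ and $\tilde G\propto\mathrm{diag}(4,3)$ give $\approx0.216$ versus $\approx0.292$.
\end{itemize}

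The mechanism is that $a+b+c\approx0.701$, so $1$ is not a fixed point. Iterates keep oscillating in a band of roughly $[0.68,1.2]$, and $h_K$ is non-monotone on every input range that lands there. For example, lifting $0.4$ to $0.5$ makes $f$ overshoot, so $h_1(0.4)\approx0.038<h_1(0.5)\approx0.171$.

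In both pairs the whitened squared spectrum is majorized by the raw one, and its smaller singular value is larger. So your ``first attempt'' (majorization or lower-part dominance) would not rescue the claim. Neither does the appendix's extra hypothesis of a ``more concentrated'' spectrum. The scalar reduction is exact, so the ``truncation terms'' hedge does not help either.

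Your ``generically'' in Step 1 is also load-bearing. Rescaling the matrix with rows $(1,1),(1,2)$ entrywise by rows $(1,1),(1,1/2)$ gives the all-ones matrix. It has $\kappa=1$ but squared defect $h_K(1)+1\ge1$, while the original's squared defect is below $1$ for every $K$.

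Your repairs also need adjusting.
\begin{itemize}
\item Reading (i) fails if the supremum is over spectra with condition number exactly $\kappa$: for $r=2$ that set is a single spectrum, which puts you back at the counterexample. Over condition number at most $\kappa$ it holds trivially by set inclusion, but then it says nothing about the specific pair $(G,\tilde G)$, which is what the theorem asserts.
\item Reading (ii) removes only the non-monotonicity, not the obstruction you named first. With equal rank and $\sum_i\hat s_i^2=\sum_i\hat{\tilde s}_i^2=1$, dominance $\hat{\tilde s}_{(i)}\ge\hat s_{(i)}$ for all $i$ forces the spectra to coincide. So lifting the small singular values necessarily lowers some large ones, and when every value lies where $h_K$ is decreasing, those terms go up. Monotonicity alone cannot decide the net sign.
\end{itemize}

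What you would need is majorization of the squared spectra together with convexity of $u\mapsto h_K(\sqrt u)$ on the occupied range (Karamata). $\kappa$ does not give majorization once $r\ge3$: the squared spectra $(4,4,4,1)/13$ and $(3.61,1,1,1)/6.61$, with $\kappa=2$ and $1.9$, are incomparable.

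Moreover, for $K=5$ the region where $h_5$ decreases ends near $s\approx2.5\times10^{-3}$ (about $a^{-5}$), while $\hat s_{\max}\ge1/\sqrt r$. So (ii) is vacuous unless $r$ is of order $10^5$. A true version of this theorem has to constrain the whole normalized spectrum (equal rank, a majorization relation, and a shape condition on $h_K$ over the occupied range), not just $\kappa$.
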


Theorem~\ref{thm:ns_orthogonality} shows that coordinate whitening first improves the conditioning of the matrix passed to Newton-Schulz iteration, and benefits from this better-balanced input. Thus, the two stages are complementary: the first reduces spectral anisotropy, and the second uses the improved spectrum to produce an update that is closer to the ideal orthogonal factor under the same iteration budget.

\begin{theorem}[RMS-Compatible Update Scaling]\label{thm:rms_scaling}
Let $U_t \in \mathbb{R}^{m\times n}$ denote the orthogonalized matrix produced by the Newton-Schulz step. Define the scalar
\begin{equation}
    s_t = \frac{0.2}{\mathrm{RMS}(U_t)} = \frac{0.2\sqrt{mn}}{\|U_t\|_F},
\end{equation}
where the target RMS value $0.2$ follows the scaling rule used in prior Muon-style updates \cite{liu2025muon}. Then the rescaled update $s_t U_t$ preserves the direction of $U_t$ while matching its overall magnitude to the RMS level used by the Adam-style branch.
\end{theorem}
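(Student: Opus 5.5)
The statement to prove is Theorem~\ref{thm:rms_scaling}, which is essentially definitional, so the plan is a short direct verification. First I will make explicit what ``direction'' and ``RMS level'' mean. Assuming $U_t \neq 0$, the direction is the normalized matrix $U_t/\|U_t\|_F$. The RMS is $\mathrm{RMS}(X) = \|X\|_F/\sqrt{mn}$, which is the identity already used in the displayed definition of $s_t$.

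\textbf{Direction.} Since $\|U_t\|_F>0$, we have $s_t>0$. Positive homogeneity of the Frobenius norm then gives
\[
\frac{s_tU_t}{\|s_tU_t\|_F}=\frac{U_t}{\|U_t\|_F},
\]
so the direction is unchanged. I will also note the stronger spectral fact. If $U_t = P\Sigma Q^\top$ is an SVD, then $s_tU_t = P(s_t\Sigma)Q^\top$, so the left and right singular subspaces are unchanged. All singular values are scaled by the same factor $s_t$, so their ratios, and hence the condition number and the near-orthogonality obtained in Theorem~\ref{thm:ns_orthogonality}, are preserved up to this global scalar.

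\textbf{Magnitude.} Direct computation gives
\[
\mathrm{RMS}(s_tU_t) = s_t\,\|U_t\|_F/\sqrt{mn} = 0.2.
\]
The Adam-style branch has per-coordinate updates $M_t/(\sqrt{V_t}+\varepsilon)$. Each entry of this ratio is of order at most one, with typical RMS of roughly $0.2$ to $1$ under the heuristics of \cite{liu2025muon}. I will therefore state the ``matching'' claim as an equality to the prescribed target RMS $0.2$ that the cited scaling rule adopts for the Adam branch, rather than attempt to prove an exact empirical identity.

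\textbf{Caveats.} The main obstacle is not mathematical difficulty but interpretive precision, in two places. First, the algorithm adds $\varepsilon$ to $\|U_t\|_F$ and uses $\sqrt{\max(m,n)}$ in Eq.~\eqref{eq:rms_scaling}, whereas the theorem uses $\sqrt{mn}$ with no $\varepsilon$. I will prove the theorem for the idealized $s_t$. I will then remark that with $\varepsilon$ the direction is still exactly preserved, since the scalar remains positive, and the RMS equals $0.2\,\|U_t\|_F/(\|U_t\|_F+\varepsilon)\to 0.2$. Second, the degenerate case $U_t=0$ must be excluded, or handled via $\varepsilon$, where the update is simply $0$.
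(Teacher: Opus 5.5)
Your proposal is correct and follows essentially the same route as the paper. The paper computes $\mathrm{RMS}(s_tU_t)=s_t\|U_t\|_F/\sqrt{mn}=0.2$ by positive homogeneity of the Frobenius norm, and notes that $s_t>0$ puts $s_tU_t$ on the same ray as $U_t$; your caveats go beyond the paper, which in its appendix simply takes the ``Adam-branch matching'' claim to mean $\mathrm{RMS}(\Delta_t)=0.2$. Those caveats are the exclusion of $U_t=0$, the $\varepsilon$ and $\sqrt{\max(m,n)}$ discrepancies with Eq.~\eqref{eq:rms_scaling}, and that reading of ``matching'' as equality to the target $0.2$.
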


Theorem~\ref{thm:rms_scaling} establishes that, after coordinate whitening and spectral orthogonalization determine a well-conditioned update direction, the final RMS calibration aligns its global scale with the Adam-compatible branch without altering that direction. Thus, the matrix branch is not only geometrically structured but also magnitude-compatible with the rest of the optimizer, ensuring that Zeta couples directional improvement with explicit scale synchronization.

\section{Experiments}
\label{sec:experiments}



\textbf{Models and Datasets.} 
We evaluate Zeta on both dense and mixture-of-experts (MoE) language models spanning multiple scales. The dense models include GPT2-Large\cite{radford2019language} and three Qwen3-based\cite{yang2025qwen3} models with 0.6B, 1.7B, and 8B parameters. We further include a Qwen3-based MoE model with 1.3B total parameters and 0.6B activated parameters per forward pass. For GPT2-Large and Qwen3-0.6B, we use OpenWebText\footnote{\url{http://Skylion007.github.io/OpenWebTextCorpus}}as the pretraining corpus. For the larger Qwen3 models, including Qwen3-1.7B, Qwen3-8B, and Qwen3-MoE, we use the DCLM-baseline\cite{li2024datacomp} dataset, which provides high-quality filtered web text for scalable training.

\textbf{Compared Optimizers.} 
All reported pretraining results use the same four optimizers: AdamW\cite{loshchilov2017decoupled}, Muon\cite{jordan2024muon}, AdaMuon\cite{si2025adamuon}, and Zeta. We keep the optimizer set fixed across GPT2-Large, Qwen3-0.6B, Qwen3-1.7B, Qwen3-8B, and Qwen3-MoE so that differences in convergence and final performance can be attributed to the optimizer itself rather than to changes in the comparison protocol. 

\textbf{Training Configuration.} 
Unless otherwise specified, all Qwen-family models are trained with sequence length 4096, global batch size 256, cosine learning-rate decay, and a warmup ratio of 1\% of total training steps. GPT2-Large follows the same optimization schedule but uses sequence length 1024 and global batch size 480. For learning-rate selection, we tune and use the best learning rate for each optimizer on the Qwen3-0.6B model, while adopting a unified learning rate for the larger-scale experiments to keep the comparison protocol consistent: Qwen3-1.7B and Qwen3-8B use $9\times10^{-4}$, whereas GPT2-Large and Qwen3-MoE use $3\times10^{-4}$. Weight decay 0.1 is applied across all experiments. For optimizer-specific hyperparameters, AdamW uses $\beta_1 = 0.9$ and $\beta_2 = 0.95$, while Muon and AdaMuon both use $\beta = 0.95$ with Newton-Schulz iterations $K = 5$. Zeta uses $\beta_1 = 0.95$, $\beta_2 = 0.99$, and Newton-Schulz iterations $K = 5$.

\subsection{Comparisons on Model Training}
\label{subsec:main_results}

\textbf{Convergence Efficiency on Dense Models.} 
Figures \ref{fig:dense_results_06b_group}, \ref{fig:main_1.7b_dashboard}, and \ref{fig:main_8b_dashboard} present the training loss curves and performance speedup for the 0.6B, gpt2-large, 1.7B, and 8B dense models. Across all scales, Zeta consistently reduces loss faster than both Muon and AdamW. 
While Muon benefits from spectral whitening, its convergence is slower in the early phase, when the momentum matrix is most imbalanced and the Newton–Schulz iteration faces a poorly conditioned input. Zeta avoids this difficulty by applying coordinate whitening first, yielding a sharper and more sustained loss decrease.

Using AdamW as the baseline (1.00$\times$ speedup) to reach the same loss level, Zeta achieves a \textbf{1.67$\times$} speedup on the \textbf{1.7B model}, outperforming Muon (1.52$\times$) and AdaMuon (1.61$\times$), and a \textbf{1.25$\times$} speedup on the \textbf{8B model}, compared with 1.19$\times$ for Muon.
These gains do not incur a large runtime penalty. As shown in Figures \ref{fig:main_1.7b_dashboard} and \ref{fig:main_8b_dashboard}, Zeta's time per iteration remains close to AdamW (25.47s vs. 24.46s for the 8B model on 16 GPUs), suggesting that dual whitening achieves faster convergence at nearly the same per-step cost as standard adaptive methods.

\begin{figure}[t]
    \centering
    \begin{minipage}{0.48\textwidth}
        \centering
        \includegraphics[width=\textwidth]{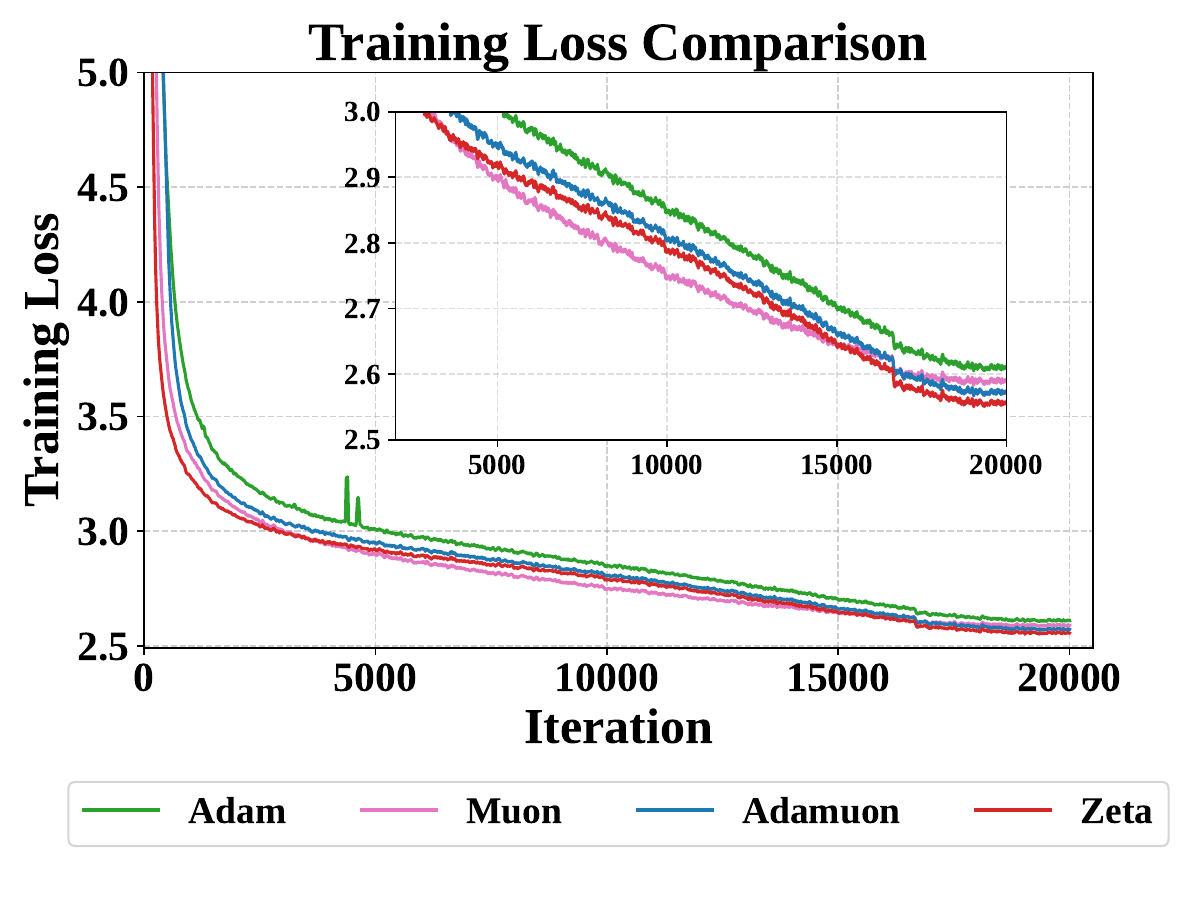}
    \end{minipage}
    \hfill 
    \begin{minipage}{0.48\textwidth}
        \centering
        \includegraphics[width=\textwidth]{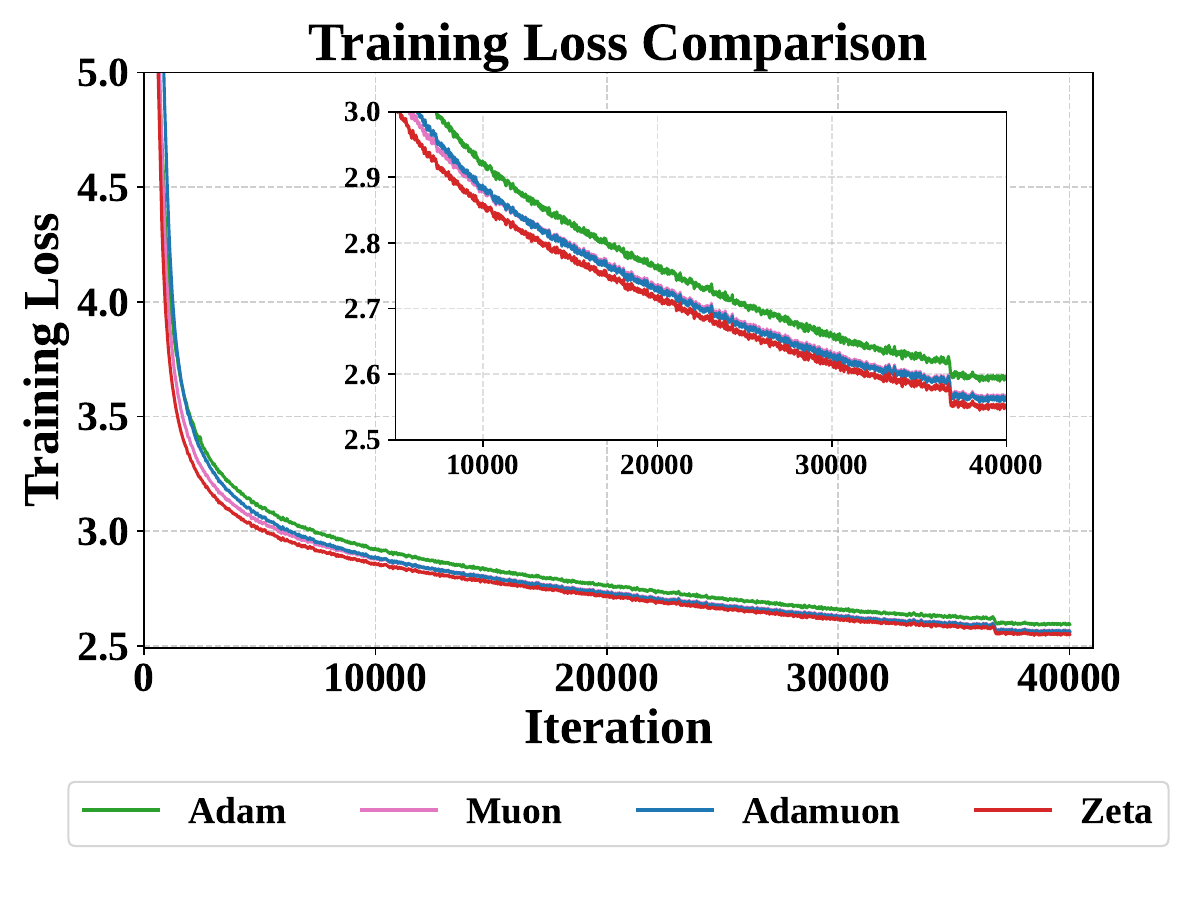} 
    \end{minipage}
    
    \caption{Training loss curves for Qwen3-0.6B (left) and GPT-2 Large (right).}
    \label{fig:dense_results_06b_group}
    \vspace{-1em}
\end{figure}

\begin{figure}[t]
\vspace{-15pt}
    \centering
    \begin{minipage}{0.55\textwidth}
        \centering
        \includegraphics[width=\textwidth]{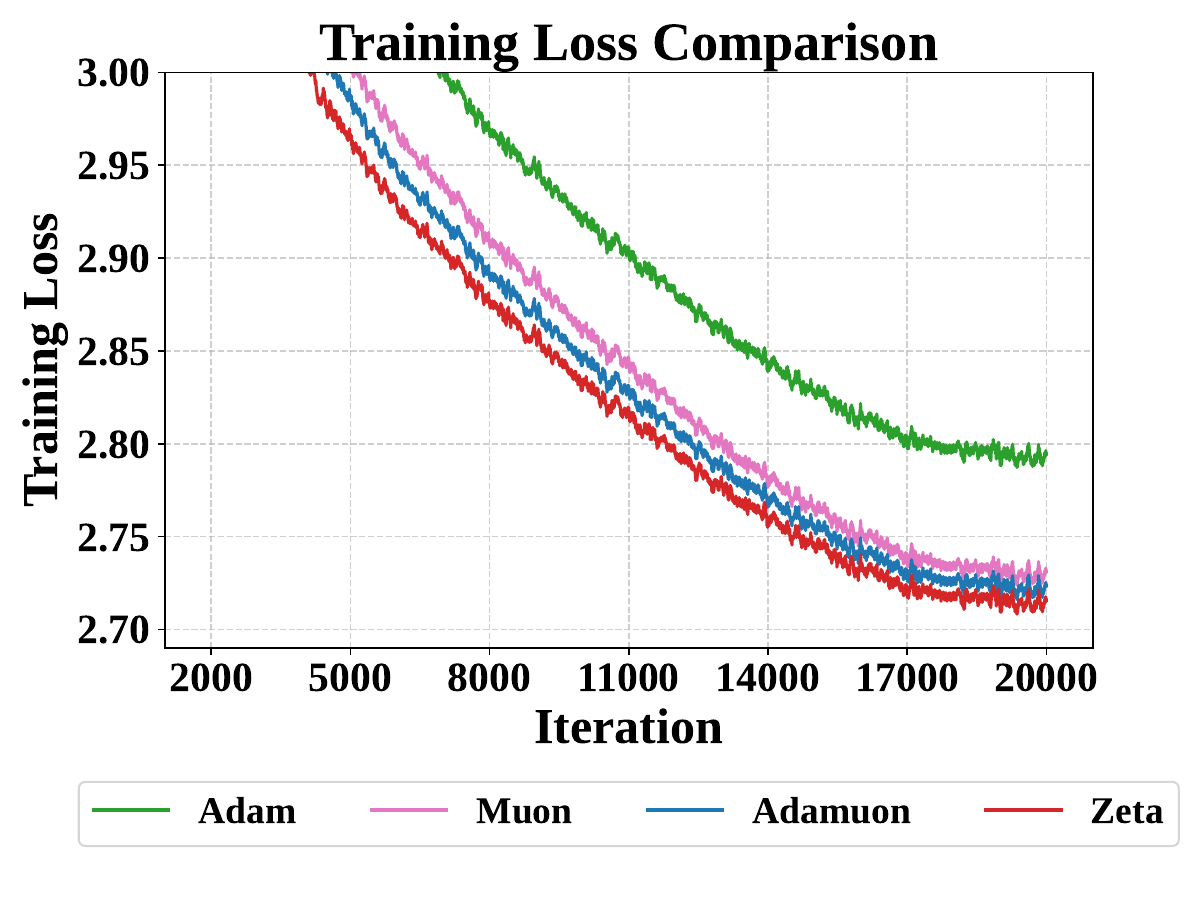}
    \end{minipage}
    \hfill 
    \begin{minipage}{0.42\textwidth}
        \small 
        \raggedright 
        
        \textbf{Speedup} (base on iteration, AdamW: 1.00x) \\
        \rule{\linewidth}{0.5pt} \\ 
        Muon: \hfill 1.52x \\
        AdaMuon: \hfill 1.61x \\
        Zeta: \hfill \textbf{1.67x} \\
        \vspace{0.6em}

        \textbf{Speedup} (base on time, AdamW: 1.00x) \\
        \rule{\linewidth}{0.5pt} \\ 
        Muon: \hfill 1.52x \\
        AdaMuon: \hfill 1.60x \\
        Zeta: \hfill \textbf{1.64x} \\
        \vspace{0.6em}
        
        \textbf{Time per Iteration} (wall-clock on 8 GPU) \\
        \rule{\linewidth}{0.5pt} \\
        AdamW: \hfill 15.3s \\
        Muon: \hfill 15.35s \\
        AdaMuon: \hfill 15.44s \\
        Zeta: \hfill 15.59s \\
        \vspace{0.6em}
        
    \end{minipage}

    \vspace{-5pt}
    \caption{Training loss curves for Qwen3-1.7B (left) and speedup metrics (right).}
    \label{fig:main_1.7b_dashboard}
    \vspace{-0.2em}
\end{figure}

\begin{figure}[t]
    \centering
    \begin{minipage}{0.55\textwidth}
        \centering
        \includegraphics[width=\textwidth]{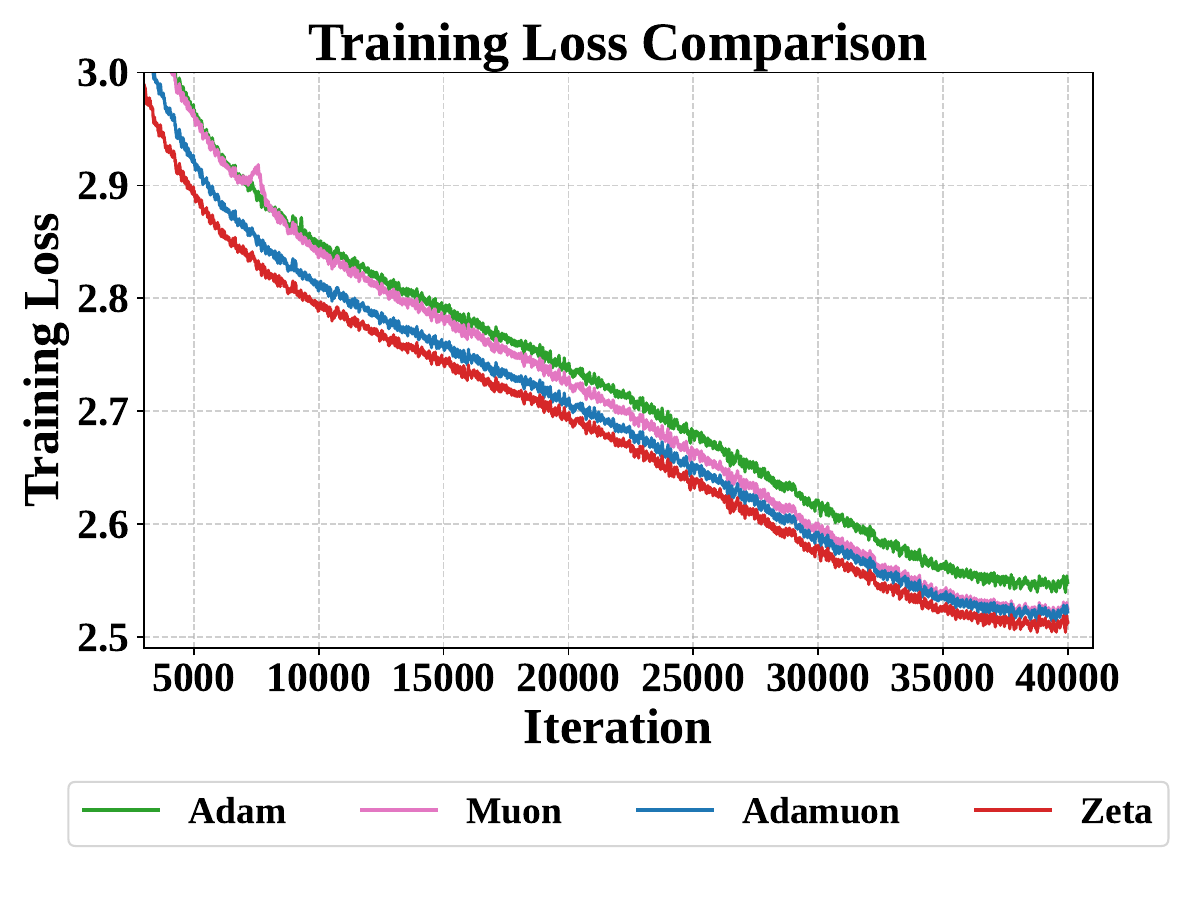}
    \end{minipage}
    \hfill 
    \begin{minipage}{0.42\textwidth}
        \small 
        \raggedright 
        
        \textbf{Speedup} (base on iteration, AdamW: 1.00x) \\
        \rule{\linewidth}{0.5pt} \\ 
        Muon: \hfill 1.19x \\
        AdaMuon: \hfill 1.21x \\
        Zeta: \hfill \textbf{1.25x} \\
        \vspace{0.6em}

        \textbf{Speedup} (base on time, AdamW: 1.00x) \\
        \rule{\linewidth}{0.5pt} \\ 
        Muon: \hfill 1.10x \\
        AdaMuon: \hfill 1.13x \\
        Zeta: \hfill \textbf{1.20x} \\
        \vspace{0.6em}
        
        \textbf{Time per Iteration} (wall-clock on 16 GPU) \\
        \rule{\linewidth}{0.5pt} \\
        AdamW: \hfill 24.46s \\
        Muon: \hfill 26.19s \\
        AdaMuon: \hfill 26.26s \\
        Zeta: \hfill 25.47s \\
        \vspace{0.6em}
        
    \end{minipage}

    \vspace{-7pt}
    \caption{Training loss curves for Qwen3-8B (left) and speedup metrics (right).}
    \label{fig:main_8b_dashboard}
    \vspace{-0.7em}
\end{figure}

\textbf{Performance on MoE Architectures.}
MoE training introduces unique optimization challenges because the sparse expert activation combined with shared dense blocks leads to substantial gradient heterogeneity across parameters.
Figure~\ref{fig:moe_results} shows that, under this more challenging setting, Zeta maintains a clear advantage. On the Qwen3-1.3B-A0.6B model, Zeta converges to a lower loss with less fluctuation than both Muon and AdaMuon, and achieves the strongest iteration-based speedup (1.47$\times$ over AdamW vs. 1.18$\times$ for Muon and 1.30$\times$ for AdaMuon) while keeping per-step time close to AdamW (11.93s vs. 10.87s). This suggests that dual whitening is particularly helpful under the dense–sparse statistical imbalance typical of MoE training.

\begin{figure}[t]
\vspace{-15pt}
    \centering
    \begin{minipage}{0.55\textwidth}
        \centering
        \includegraphics[width=\textwidth]{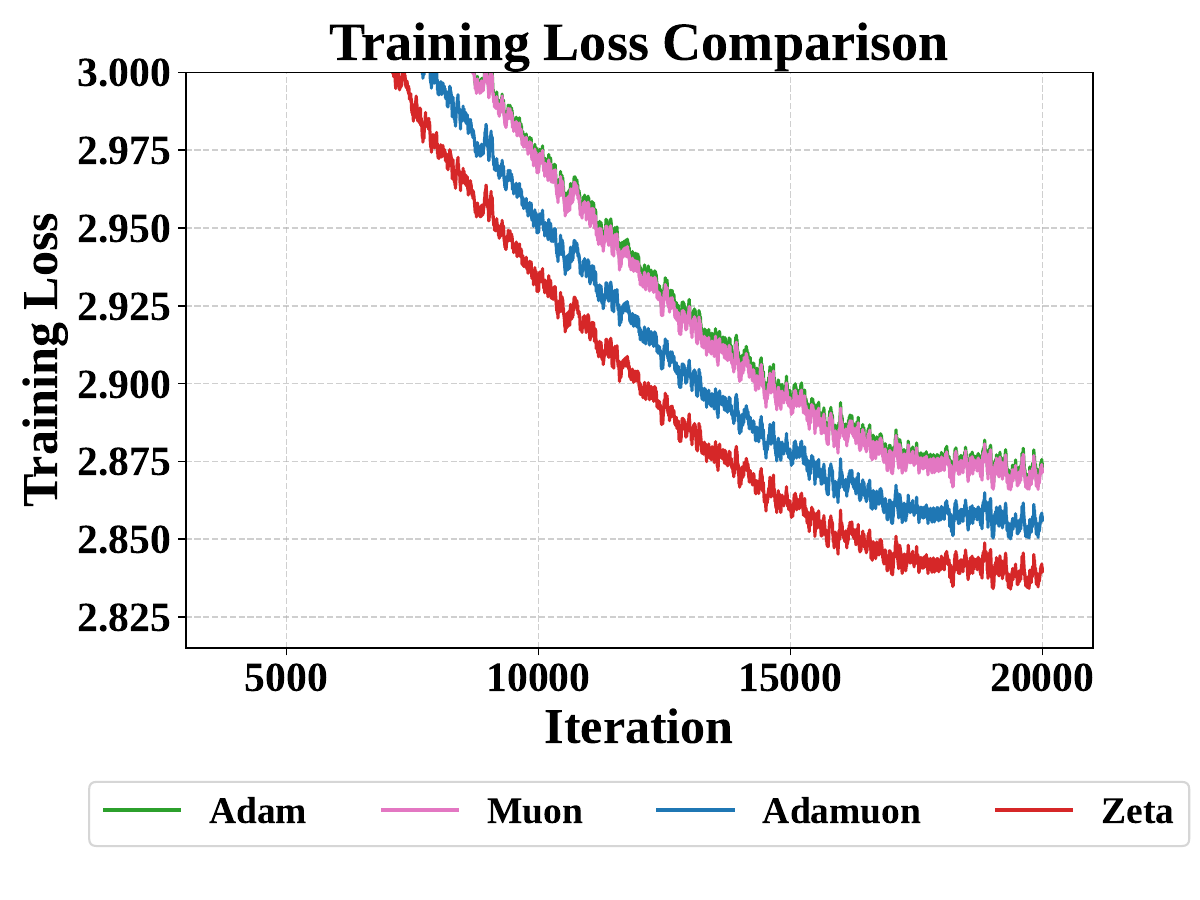}
    \end{minipage}
    \hfill 
    \begin{minipage}{0.42\textwidth}
        \small 
        \raggedright 
        
        \textbf{Speedup} (base on iteration, AdamW: 1.00x) \\
        \rule{\linewidth}{0.5pt} \\ 
        Muon: \hfill 1.18x \\
        AdaMuon: \hfill 1.30x \\
        Zeta: \hfill \textbf{1.47x} \\
        \vspace{0.6em}

        \textbf{Speedup} (base on time, AdamW: 1.00x) \\
        \rule{\linewidth}{0.5pt} \\ 
        Muon: \hfill 1.17x \\
        AdaMuon: \hfill 1.19x \\
        Zeta: \hfill \textbf{1.32x} \\
        \vspace{0.6em}
        
        \textbf{Time per Iteration} (wall-clock on 8 GPU) \\
        \rule{\linewidth}{0.5pt} \\
        AdamW: \hfill 10.87s \\
        Muon: \hfill 11.00s \\
        AdaMuon: \hfill 12.10s \\
        Zeta: \hfill 11.93s \\
        \vspace{0.6em}
        
    \end{minipage}

    \vspace{-5pt}
    \caption{Training loss curves for Qwen3-1.3B-A0.6B (left) and speedup metrics (right).}
    \label{fig:moe_results}
    \vspace{-1em}
\end{figure}


\subsection{Comparisons on Language Understanding Tasks}
\label{subsec:downstream}

We evaluate the trained models on 12 downstream benchmarks spanning commonsense reasoning, world knowledge, mathematical problem-solving, Chinese language understanding, and long-context retrieval. As shown in Table~\ref{tab:downstream_results}, Zeta achieves the best average accuracy on both model scales. On Qwen3-8B, Zeta reaches 26.29\%, outperforming AdamW, Muon, and AdaMuon by $12.26\%\!\uparrow$, $1.00\%\!\uparrow$, and $5.08\%\!\uparrow$. On Qwen3-1.7B, Zeta also ranks first at 23.69\%, with gains of $1.76\%\!\uparrow$ over AdamW, $4.22\%\!\uparrow$ over Muon, and $7.83\%\!\uparrow$ over AdaMuon. These results indicate that the dual-whitening design benefits downstream generalization across model scales and task types.

\begin{table*}[t]
    \centering
    \caption{Language understanding tasks results (Accuracy \%) on Qwen3-8B and Qwen3-1.7B.}
    \vspace{-5pt}
    \label{tab:downstream_results}
    \setlength{\tabcolsep}{4pt}
    \footnotesize
    \resizebox{\textwidth}{!}{
        \begin{tabular}{lcccc c cccc}
            \toprule
            & \multicolumn{4}{c}{\textbf{Qwen3-8B}} & & \multicolumn{4}{c}{\textbf{Qwen3-1.7B}} \\
            \cmidrule(lr){2-5} \cmidrule(lr){7-10}
            \textbf{Task} & \textbf{AdamW} & \textbf{Muon} & \textbf{AdaMuon} & \cellcolor{blue!10}\textbf{Zeta} & & \textbf{AdamW} & \textbf{Muon} & \textbf{AdaMuon} & \cellcolor{blue!10}\textbf{Zeta} \\
            \midrule
            HellaSwag\cite{zellers2019hellaswag} & 24.79 & 25.54 & 24.80 & \cellcolor{blue!10}\textbf{26.12} & & 24.98 & \textbf{25.31} & 25.13 & \cellcolor{blue!10}24.62 \\
            MMLU\cite{li2024cmmlu} & 25.48 & \textbf{25.88} & 24.96 & \cellcolor{blue!10}25.08 & & 23.66 & \textbf{24.63} & 23.09 & \cellcolor{blue!10}23.63 \\
            NIAH\cite{li2024needlebench} & 16.70 & 17.70 & 18.90 & \cellcolor{blue!10}\textbf{20.50} & & 6.50 & \textbf{7.65} & 6.35 & \cellcolor{blue!10}7.15 \\
            PIQA\cite{bisk2020piqa} & 49.13 & 48.37 & \textbf{49.18} & \cellcolor{blue!10}48.97 & & 48.86 & 49.13 & 47.77 & \cellcolor{blue!10}\textbf{50.65} \\
            ARC\_c\cite{clark2018think} & 24.75 & \textbf{26.44} & 25.42 & \cellcolor{blue!10}26.10 & & 24.75 & 24.07 & \textbf{25.08} & \cellcolor{blue!10}\textbf{25.08} \\
            C-Eval\cite{huang2023c} & 15.48 & 20.26 & 21.73 & \cellcolor{blue!10}\textbf{21.77} & & \textbf{21.81} & 21.16 & 9.87 & \cellcolor{blue!10}20.51 \\
            TriviaQA\cite{joshi2017triviaqa} & 34.43 & 37.11 & 36.70 & \cellcolor{blue!10}\textbf{37.48} & & 19.81 & 23.22 & 23.46 & \cellcolor{blue!10}\textbf{24.96} \\
            OBQA\cite{mihaylov2018can} & 25.60 & \textbf{29.80} & 24.80 & \cellcolor{blue!10}26.00 & & 25.60 & 19.80 & 25.60 & \cellcolor{blue!10}\textbf{28.00} \\
            WinoG\cite{sakaguchi2021winogrande} & 50.99 & 48.78 & \textbf{51.14} & \cellcolor{blue!10}49.41 & & 48.86 & 48.93 & 49.01 & \cellcolor{blue!10}\textbf{49.25} \\
            CHID\cite{zheng2019chid} & 6.49 & 13.54 & \textbf{14.29} & \cellcolor{blue!10}13.64 & & \textbf{14.29} & \textbf{14.29} & 13.79 & \cellcolor{blue!10}9.90 \\
            CMMLU\cite{li2024cmmlu} & 5.40 & 16.34 & 5.86 & \cellcolor{blue!10}\textbf{17.54} & & 18.75 & 12.74 & 12.49 & \cellcolor{blue!10}\textbf{18.91} \\
            GSM8k\cite{cobbe2021training} & 1.82 & 2.65 & 2.43 & \cellcolor{blue!10}\textbf{2.81} & & 1.52 & 1.82 & \textbf{1.97} & \cellcolor{blue!10}1.67 \\
            \rowcolor{red!10}
            \textbf{AVG} & 23.42 & 26.03 & 25.02 & \textbf{26.29} & & 23.28 & 22.73 & 21.97 & \textbf{23.69} \\
            \bottomrule
        \end{tabular}
    }
    \vspace{-1em}
\end{table*}


\subsection{Comparisons on Vision Tasks}
\label{subsec:vision}

\begin{table}[!h]
    \centering
    \begin{minipage}{0.45\textwidth}
        \centering
        \captionof{table}{Top-1 validation accuracy (\%) on CIFAR-100 for vision classification models.}
        \label{tab:vision_results}
        \vspace{0.4ex}
        \resizebox{0.88\textwidth}{!}{
        \begin{tabular}{lcc}
            \toprule
            \textbf{Optimizer} & \textbf{ViT-Tiny} & \textbf{ViT-Base} \\
            \midrule
            AdamW   & 57.02 & 57.98 \\
            Muon    & 64.68 & 64.53 \\
            AdaMuon & 63.96 & 62.03 \\
            \rowcolor{red!10}
            \textbf{Zeta (Ours)} & \textbf{64.98} & \textbf{65.34} \\
            \bottomrule
        \end{tabular}}
    \end{minipage}
    \hfill
    \begin{minipage}{0.48\textwidth}
        \centering
        \captionof{table}{Effect of final training loss to $\beta_1$ and $\beta_2$ on the Qwen3-0.6B dense model.}
        \label{tab:beta_sensitivity}
        \vspace{0.4ex}
        \resizebox{0.80\textwidth}{!}{
        \begin{tabular}{l|ccc}
            \toprule
            $\beta_1$ / $\beta_2$ & 0.9 & 0.95 & 0.99 \\ \midrule
            0.9  & 2.571 & 2.570 & \textbf{2.564} \\
            0.95 & 2.565 & 2.565 & \textbf{2.564} \\
            0.99 & 2.593 & 2.580 & 2.579 \\ \bottomrule
        \end{tabular}}
    \end{minipage}
    \vspace{-1em}
\end{table}


We evaluate ViT-Tiny and ViT-Base on CIFAR-100 \cite{krizhevsky2009learning} using the Vision Transformer (ViT) \cite{dosovitskiy2020image} architecture. All models are trained from scratch for 50 epochs with weight decay set to 0.05, while the peak learning rate and batch size are tuned separately for each optimizer. As shown in Table~\ref{tab:vision_results}, Zeta consistently achieves the highest Top-1 validation accuracy on both scales, reaching 64.98\% on ViT-Tiny and 65.34\% on ViT-Base. On ViT-Tiny, it outperforms AdamW, Muon, and AdaMuon by $13.96\%\!\uparrow$, $0.46\%\!\uparrow$, and $1.59\%\!\uparrow$, respectively. On ViT-Base, the gains are $12.69\%\!\uparrow$ over AdamW, $1.26\%\!\uparrow$ over Muon, and $5.34\%\!\uparrow$ over AdaMuon. These results indicate that the dual-whitening design generalizes well across vision architectures of different capacities.

\subsection{Effect of Momentum Coefficients}
\label{subsec:sensitivity}


We examine the effect of the momentum coefficients $\beta_1$ and $\beta_2$, since Zeta inherits the first-moment parameter \(\beta_1\) from Muon and the second-moment parameter \(\beta_2\) from AdamW. A grid search over \(\beta_1, \beta_2 \in \{0.9, 0.95, 0.99\}\) on the Qwen3-0.6B model yields the results in Table~\ref{tab:beta_sensitivity}. The final training loss varies by less than \(1.2\%\) across the entire grid, and the best value is obtained at both \((0.9, 0.99)\) and \((0.95, 0.99)\). This indicates that Zeta is robust to momentum hyperparameters and works well under standard settings, requiring little hyperparameter tuning.

\section{Conclusion}

We show raw momentum systematically violates the scale-normalization required by spectral whitening. Thus, we propose Zeta, a dual optimizer sequencing coordinate before spectral whitening. Zeta provably reduces orthogonalization error via improved conditioning, and matches or surpasses baselines across language modeling, mixture-of-experts, and vision tasks.



{
    \bibliographystyle{unsrt}
    \bibliography{refs}
}


\newpage
\appendix

\begin{leftline}
	{
		\LARGE{\textsc{Appendix}}
	}
\end{leftline}

\etocdepthtag.toc{mtappendix}
\etocsettagdepth{mtchapter}{none}
\etocsettagdepth{mtappendix}{subsection}

{
    \hypersetup{linkcolor=black}
        \footnotesize\tableofcontents
}

\newpage

\section{Theoretical Analysis}
\label{sec:proofs}

\subsection{Proof of Theorem~\ref{thm:coord_whitening}}

\textbf{Theorem~\ref{thm:coord_whitening}.}
\emph{Let $M_t \in \mathbb{R}^{m \times n}$ be the momentum matrix at iteration $t$, with vectorized form $m_t = \mathrm{vec}(M_t)$. Assume $m_t$ is approximately centered, $\mathbb{E}[m_t] \approx 0$, and its covariance is approximately diagonal, $\mathrm{Cov}(m_t) \approx \Sigma_t = \mathrm{diag}(\sigma_{t,1}^2, \dots, \sigma_{t,mn}^2)$. The ideal whitening transform is
\[
    \hat{m}_t = \Sigma_t^{-1/2} m_t.
\]
In Zeta, we instead use the element-wise second-moment estimator $V_t = \beta_2 V_{t-1} + (1-\beta_2) G_t^2$ and define
\[
    \tilde{G}_t = \frac{M_t}{\sqrt{V_t}+\varepsilon}.
\]
If $V_t$ consistently estimates the diagonal second-order moment, then, under the centering assumption, this is also a consistent estimate of the diagonal covariance. Therefore, $\mathrm{vec}(\tilde{G}_t) \approx \mathrm{diag}(V_t)^{-1/2} \, \mathrm{vec}(M_t)$ is a diagonal approximation to $\Sigma_t^{-1/2} m_t$. Hence, Zeta implements approximate coordinate whitening.}

\begin{proof}
The ideal whitening transform for the vectorized momentum $m_t$ is
\[
    \hat{m}_t = \Sigma_t^{-1/2} m_t
\]
where $\Sigma_t = \mathrm{Cov}(m_t)$. Since $\mathbb{E}[m_t] \approx 0$, the covariance agrees with the second-order moment up to a negligible centering error, i.e.,
\[
    \Sigma_t = \mathbb{E}[m_t m_t^\top] - \mathbb{E}[m_t]\mathbb{E}[m_t]^\top
    \approx
    \mathbb{E}[m_t m_t^\top].
\]
Under the diagonal-covariance assumption,
\[
    \Sigma_t \approx \mathrm{diag}(\sigma_{t,1}^2, \dots, \sigma_{t,mn}^2)
\]
so its inverse square root reduces to coordinate-wise scaling:
\[
    \Sigma_t^{-1/2} \approx \mathrm{diag}(\sigma_{t,1}^{-1}, \dots, \sigma_{t,mn}^{-1})
\]
Therefore, ideal whitening acts independently on each coordinate.

In Zeta, the second-moment accumulator is updated element-wise as
\[
    V_t = \beta_2 V_{t-1} + (1-\beta_2) G_t^2
\]
If $V_t$ consistently estimates the diagonal second-order moment, then after vectorization,
\[
    \mathrm{diag}(V_t) \approx \mathrm{diag}(\mathbb{E}[m_{t,1}^2], \dots, \mathbb{E}[m_{t,mn}^2])
    \approx \mathrm{diag}(\sigma_{t,1}^2, \dots, \sigma_{t,mn}^2)
\]
Hence,
\[
    \mathrm{diag}(V_t)^{-1/2}
    \approx
    \mathrm{diag}(\sigma_{t,1}^{-1}, \dots, \sigma_{t,mn}^{-1})
    \approx
    \Sigma_t^{-1/2}
\]
Applying this diagonal preconditioner to $m_t$ gives
\[
    \mathrm{vec}(\tilde{G}_t)
    = \mathrm{diag}(V_t)^{-1/2} \, \mathrm{vec}(M_t)
\]
up to the perturbation introduced by the numerical stabilizer $\varepsilon$. Thus, the Zeta update coincides with the ideal whitening transform up to the approximation that off-diagonal covariances are ignored and the second moment is estimated from running statistics. Consequently, Zeta implements approximate coordinate whitening.
\end{proof}

\subsection{Proof of Theorem~\ref{thm:ns_orthogonality}}

\textbf{Theorem~\ref{thm:ns_orthogonality}.}
\emph{Let $X \in \mathbb{R}^{m \times n}$ be the input to Newton-Schulz orthogonalization, and let $Q_K(X)$ be the output after $K$ iterations. Suppose $\tilde{X}$ is the whitened version of $X$, with a more concentrated singular-value spectrum and smaller condition number. Then, for fixed $K$, the orthogonalization error satisfies
\[
    \|Q_K(\tilde{X})^\top Q_K(\tilde{X}) - I\|_F
    \leq
    \|Q_K(X)^\top Q_K(X) - I\|_F
\]
up to higher-order truncation terms. Thus, whitening makes the post-NS update closer to an ideal orthogonal matrix.}

\begin{proof}
We make explicit the scalar spectral dynamics induced by the Newton-Schulz iteration used in Zeta. Let
\[
    X_0 = \frac{G}{\|G\|_F}
\]
be the normalized matrix fed into the NS step, and write its compact singular value decomposition as
\[
    X_0 = U \Sigma_0 V^\top,
    \qquad
    \Sigma_0 = \mathrm{diag}(\sigma_{0,1},\dots,\sigma_{0,d}),
    \qquad d = \min(m,n).
\]
Because $\|X_0\|_F^2 = 1$, we have
\[
    \sum_{i=1}^d \sigma_{0,i}^2 = 1,
\]
and in particular each nonzero singular value lies in $(0,1]$. The NS iteration used in practice has the form
\[
    X_{k+1} = a X_k + b (X_k X_k^\top) X_k + c (X_k X_k^\top)^2 X_k,
\]
with coefficients $a=3.4445$, $b=-4.7750$, and $c=2.0315$. Since this iteration is equivariant under orthogonal changes of basis, the left and right singular vectors remain unchanged and only the singular values evolve. Hence
\[
    X_k = U \Sigma_k V^\top,
    \qquad
    \Sigma_k = \mathrm{diag}(\sigma_{k,1},\dots,\sigma_{k,d}),
\]
where each singular value satisfies the scalar recurrence
\[
    \sigma_{k+1,i} = f(\sigma_{k,i}),
    \qquad
    f(x) = a x + b x^3 + c x^5.
\]
Let $f^{(K)}$ denote the $K$-fold composition of $f$. Then after $K$ iterations,
\[
    \sigma_{K,i} = f^{(K)}(\sigma_{0,i}).
\]
Therefore the orthogonality defect can be written exactly as
\begin{align}
    \|X_K^\top X_K - I\|_F^2
    &= \sum_{i=1}^d \bigl(\sigma_{K,i}^2 - 1\bigr)^2 \\
    &= \sum_{i=1}^d \Bigl(\bigl[f^{(K)}(\sigma_{0,i})\bigr]^2 - 1\Bigr)^2.
    \label{eq:ns_loss_spectral}
\end{align}
Define the single-mode error function
\[
    g(x) = \Bigl(\bigl[f^{(K)}(x)\bigr]^2 - 1\Bigr)^2.
\]
Then Eq.~\eqref{eq:ns_loss_spectral} becomes
\[
    \|X_K^\top X_K - I\|_F^2 = \sum_{i=1}^d g(\sigma_{0,i}).
\]
This shows that the NS orthogonality error is completely determined by the initial singular-value distribution.

We next examine the behavior of $g(x)$ near the small-singular-value regime that dominates the conditioning difficulty. As $x \to 0^+$,
\[
    f(x) = a x + O(x^3),
\]
so repeated composition gives
\[
    f^{(K)}(x) = a^K x + O(x^3).
\]
Substituting into $g$ yields
\[
    g(x)
    = \bigl(a^{2K} x^2 - 1\bigr)^2 + O(x^4)
    = 1 - 2 a^{2K} x^2 + O(x^4).
\]
Hence $g(x) \to 1$ as $x \to 0^+$: very small initial singular values contribute nearly the maximal possible error. This makes the orthogonality defect especially sensitive to poorly conditioned spectra with tiny trailing singular values.

Now compare two inputs, $G$ and its whitened version $\tilde{G}$, both normalized to unit Frobenius norm before entering NS. Let their initial singular values be $\{\sigma_{0,i}\}_{i=1}^d$ and $\{\tilde{\sigma}_{0,i}\}_{i=1}^d$, respectively. By assumption,
\[
    \kappa(\tilde{G}) \le \kappa(G),
\]
and the whitening step makes the spectrum of $\tilde{G}$ more concentrated. Under the common-energy constraint
\[
    \sum_{i=1}^d \sigma_{0,i}^2 = \sum_{i=1}^d \tilde{\sigma}_{0,i}^2 = 1,
\]
a smaller condition number means, in particular, that the smallest singular values of $\tilde{G}$ are lifted away from zero and the spectrum is less dispersed. On the NS convergence interval relevant here, the function $g(x)$ is decreasing in $x$: increasing a small singular value reduces its contribution to the orthogonality defect. Consequently, replacing $\{\sigma_{0,i}\}$ by the more balanced set $\{\tilde{\sigma}_{0,i}\}$ decreases the sum of mode-wise errors, up to the finite-iteration truncation terms already absorbed in the scalar approximation. Therefore,
\[
    \sum_{i=1}^d g(\tilde{\sigma}_{0,i})
    \le
    \sum_{i=1}^d g(\sigma_{0,i}),
\]
which is equivalent to
\[
    \|NS_K(\tilde{G})^\top NS_K(\tilde{G}) - I\|_F^2
    \le
    \|NS_K(G)^\top NS_K(G) - I\|_F^2.
\]
Taking square roots gives
\[
    \|NS_K(\tilde{G})^\top NS_K(\tilde{G}) - I\|_F
    \le
    \|NS_K(G)^\top NS_K(G) - I\|_F,
\]
up to the higher-order truncation terms from the finite-$K$ polynomial approximation. Thus, whitening improves the conditioning of the NS input and makes the post-iteration matrix closer to an ideal orthogonal factor.
\end{proof}

\subsection{Proof of Theorem~\ref{thm:rms_scaling}}

\textbf{Theorem~\ref{thm:rms_scaling}.}
\emph{Let $U_t \in \mathbb{R}^{m \times n}$ be the orthogonalized matrix update produced by Newton-Schulz iteration, and define
\[
    s_t = \frac{0.2}{\mathrm{RMS}(U_t)} = \frac{0.2\sqrt{mn}}{\|U_t\|_F}.
\]
Let $\Delta_t = s_t U_t$. Then $\mathrm{RMS}(\Delta_t)=0.2$, and $\Delta_t$ has the same direction as $U_t$ up to a positive scalar factor.}

\begin{proof}
By definition of RMS for a matrix,
\[
    \mathrm{RMS}(U_t) = \sqrt{\frac{1}{mn}\|U_t\|_F^2} = \frac{\|U_t\|_F}{\sqrt{mn}}
\]
Substituting this identity into the definition of $s_t$ gives
\[
    s_t = \frac{0.2}{\mathrm{RMS}(U_t)} = \frac{0.2\sqrt{mn}}{\|U_t\|_F}
\]
Therefore,
\[
    \mathrm{RMS}(\Delta_t)
    = \sqrt{\frac{1}{mn}\|s_t U_t\|_F^2}
    = s_t \frac{\|U_t\|_F}{\sqrt{mn}}
    = 0.2
\]
where we use the positive homogeneity of the Frobenius norm. Moreover, since $s_t>0$, the scaled update $\Delta_t=s_tU_t$ is a positive scalar multiple of $U_t$, so it lies on the same ray in matrix space and preserves the orthogonalized direction. Thus the post-NS normalization fixes the update RMS while leaving the update direction unchanged.
\end{proof}

\subsection{Statistical Analysis of the Motivation Experiment}

In the motivation experiment of Figure~\ref{fig:motivation_pvalue}, we quantify whether the row-wise energy of a matrix is uniformly distributed before and after coordinate whitening. The goal is to measure the degree of intra-matrix anisotropy from a statistical perspective and to verify whether whitening makes different rows more balanced in scale.

Let $G \in \mathbb{R}^{m \times n}$ denote the matrix under analysis, and let $g_i \in \mathbb{R}^n$ be its $i$-th row. We first compute the row norm
\[
    r_i = \|g_i\|_2
\]
and define the corresponding row energy as
\[
    E_i = r_i^2 = \|g_i\|_2^2, \qquad i = 1,2,\dots,m
\]

We then perform a chi-square uniformity test on the set of row energies. The null hypothesis is
\[
    H_0: E_1 = E_2 = \cdots = E_m
\]
which states that all rows have the same energy and therefore the matrix is statistically uniform across rows.

To make the test invariant to the absolute scale of the matrix, we normalize each row energy by the average row energy,
\[
    O_i = \frac{E_i}{\bar{E}},
    \qquad
    \bar{E} = \frac{1}{m} \sum_{i=1}^{m} E_i
\]
By construction, the normalized observations satisfy
\[
    \frac{1}{m} \sum_{i=1}^{m} O_i = 1,
\]
so under the null hypothesis each row has expected value $1$.

The chi-square statistic is therefore computed as
\[
    X^2 = \sum_{i=1}^{m} \frac{(O_i - 1)^2}{1}
    = \sum_{i=1}^{m} (O_i - 1)^2
\]
with degrees of freedom
\[
    df = m - 1
\]
The associated $p$-value is the upper-tail probability of the chi-square distribution,
\[
    p = \mathbb{P}(\chi^2_{df} \geq X^2)
      = Q\left(\frac{df}{2}, \frac{X^2}{2}\right)
\]
where $Q(a,x)$ denotes the regularized upper incomplete gamma function.

This test has a clear optimization interpretation. When the $p$-value is close to $1$, the row energies are nearly uniform and we cannot reject the hypothesis that the matrix is balanced across rows. In contrast, when the $p$-value approaches $0$, some rows have substantially larger or smaller energy than others, indicating pronounced anisotropy in the gradient distribution. In our motivation experiment, coordinate whitening consistently shifts the $p$-value upward, showing that whitening reduces row-wise energy disparity and makes the matrix more compatible with stable Newton-Schulz orthogonalization.

\subsection{Storage and Time Complexity Comparisons}

We next analyze the per-parameter-state complexity and per-step computational complexity of Zeta relative to AdamW, Muon, and AdaMuon. Consider a matrix parameter $W \in \mathbb{R}^{m\times n}$ and let $r=\min(m,n)$. We treat element-wise additions, multiplications, and square roots as $\mathcal{O}(mn)$ operations. For Newton-Schulz orthogonalization with $K$ iterations, the dominant cost comes from matrix multiplications such as $XX^\top X$ or its transposed counterpart, whose per-iteration cost is $\mathcal{O}(mnr)$. Hence the total orthogonalization cost is $\mathcal{O}(K m n r)$.

Under this notation, AdamW maintains first- and second-moment states for each coordinate, so its optimizer-state storage is $\mathcal{O}(2mn)$ and its per-step update cost is $\mathcal{O}(mn)$. Muon maintains only a momentum matrix before applying Newton-Schulz orthogonalization, so its storage is reduced to $\mathcal{O}(mn)$, while its per-step complexity becomes
\[
    \mathcal{O}(mn) + \mathcal{O}(K m n r) = \mathcal{O}(K m n r)
\]
AdaMuon augments Muon with Adam-style adaptive scaling, so it stores both momentum and second-moment statistics, giving storage $\mathcal{O}(2mn)$, and its per-step complexity is
\[
    \mathcal{O}(mn) + \mathcal{O}(K m n r) = \mathcal{O}(K m n r)
\]

Zeta has the same asymptotic storage as AdaMuon, because it also keeps a first-moment matrix $M_t$ and a second-moment matrix $V_t$, resulting in optimizer-state storage $\mathcal{O}(2mn)$. Its additional coordinate-whitening step,
\[
    \tilde{G}_t = \frac{M_t}{\sqrt{V_t}+\varepsilon}
\]
requires only element-wise operations and therefore costs $\mathcal{O}(mn)$. After whitening, Zeta performs the same $K$-step Newton-Schulz orthogonalization as Muon-type methods. Therefore the total per-step complexity of Zeta is
\[
    \mathcal{O}(mn) + \mathcal{O}(mn) + \mathcal{O}(K m n r)
    = \mathcal{O}(K m n r)
\]
This shows that Zeta does not increase the asymptotic time complexity relative to AdaMuon: its extra whitening stage changes only the lower-order element-wise cost, while the dominant term remains the spectral orthogonalization.

The comparison can be summarized as follows:
\begin{table}[t]
    \centering
    \caption{Comparison of optimizer-state storage complexity and per-step time complexity for a matrix parameter $W \in \mathbb{R}^{m \times n}$, where $r = \min(m,n)$.}
    \label{tab:complexity_comparison}
    \begin{tabular}{lcc}
        \toprule
        Optimizer & Storage Complexity & Per-step Time Complexity \\
        \midrule
        AdamW & $\mathcal{O}(2mn)$ & $\mathcal{O}(mn)$ \\
        Muon & $\mathcal{O}(mn)$ & $\mathcal{O}(K m n r)$ \\
        AdaMuon & $\mathcal{O}(2mn)$ & $\mathcal{O}(K m n r)$ \\
        Zeta & $\mathcal{O}(2mn)$ & $\mathcal{O}(K m n r)$ \\
        \bottomrule
    \end{tabular}
\end{table}
Thus, compared with AdamW, Zeta trades linear-time diagonal adaptation for a spectral orthogonalization cost dominated by Newton-Schulz iteration. Compared with Muon, Zeta incurs one additional second-moment state, increasing storage from $\mathcal{O}(mn)$ to $\mathcal{O}(2mn)$. Compared with AdaMuon, however, Zeta has the same asymptotic storage and time complexity, indicating that its empirical gains come from a better ordering of adaptive whitening and orthogonalization rather than from higher asymptotic computational cost.

\subsection{Second-Moment Whitening vs. Sign-Based Adaptation in AdaMuon}\label{app:adam_vs_sign}

Although Zeta and AdaMuon share the same asymptotic complexity, their adaptive preprocessing steps are driven by fundamentally different geometric principles. To formalize this distinction, we first consider the ideal whitening transform for a gradient vector $g = \nabla_\theta L$. 

In the full-matrix setting, Newton's method defines the parameter update via the inverse Hessian $H^{-1}g$. Under the standard information-geometric approximation, the Hessian is replaced by the Fisher Information Matrix (FIM), $F$:
\begin{equation}
    H \approx F = \mathbb{E}\left[g g^\top\right].
\end{equation}
Theoretical whitening amounts to preconditioning the gradient by the inverse square root of the curvature operator, transforming the gradient into an isotropic space:
\begin{equation}
    \tilde{g}_{\text{ideal}} = F^{-1/2} g = \left( \mathbb{E}\left[g g^\top\right] \right)^{-1/2} g.
\end{equation}
This ensures that the covariance of the whitened gradients is approximately the identity matrix, i.e., $\mathrm{Cov}(\tilde{g}_{\text{ideal}}) \approx I$.

\textbf{Zeta: Diagonal Second-Moment Whitening.}
Storing and inverting the full Fisher matrix $F$ is infeasible at scale. However, as established in Theorem~\ref{thm:coord_whitening}, Zeta's Adam-style coordinate normalization acts as a diagonal approximation to ideal whitening. Let $M_t$ and $V_t$ be the exponential moving averages (EMA) of the first and second moments of the gradient, respectively. The Zeta preconditioned gradient is given by:
\begin{equation}
    \tilde{G}_t^{\text{Zeta}} = \frac{M_t}{\sqrt{V_t} + \varepsilon}.
\end{equation}
Asymptotically, $V_t$ estimates the diagonal elements of the uncentered Fisher matrix. Let $D = \mathrm{diag}(F) = \mathrm{diag}(\mathbb{E}[g \odot g])$, where $\odot$ is the Hadamard product. Zeta implicitly applies a preconditioner matrix $P_{\text{Zeta}}$:
\begin{equation}
    P_{\text{Zeta}} \approx D^{-1/2} = \mathrm{diag}\left(\mathbb{E}\left[g^2\right]\right)^{-1/2}.
\end{equation}
Thus, the transformation can be expressed as:
\begin{equation}
    \tilde{g}_{\text{Zeta}} \approx P_{\text{Zeta}} \, g = \mathrm{diag}\left(\mathbb{E}\left[g^2\right]\right)^{-1/2} g.
\end{equation}
While $P_{\text{Zeta}}$ is a diagonal approximation of $F^{-1/2}$, it strictly preserves the core statistical objective of whitening: coordinates with variance $\sigma_i^2$ are scaled by $1/\sigma_i$. It aligns with local second-order geometry rather than acting as a mere heuristic.

\textbf{AdaMuon: Instantaneous Sign-Based Adaptation.}
By contrast, the $\mathrm{sign}(\cdot)$ transformation in AdaMuon yields a fundamentally different mathematical object. The operation $\tilde{g}_{\text{AdaMuon}} = \mathrm{sign}(g)$ can be rewritten as applying a data-dependent, instantaneous diagonal preconditioner $P_{\text{AdaMuon}}$:
\begin{equation}
    \tilde{g}_{\text{AdaMuon}} = P_{\text{AdaMuon}} \, g, \quad \text{where} \quad P_{\text{AdaMuon}} = \mathrm{diag}\left(|g|\right)^{-1}.
\end{equation}
Comparing $P_{\text{Zeta}}$ and $P_{\text{AdaMuon}}$ reveals the core divergence:
\begin{align}
    P_{\text{Zeta}} &= \mathrm{diag}\left(\mathbb{E}\left[g \odot g\right]\right)^{-1/2} \quad \text{(Stable Second-Order Statistic)}, \\
    P_{\text{AdaMuon}} &= \mathrm{diag}\left(g \odot g\right)^{-1/2} \quad \text{(Instantaneous Realization)}.
\end{align}
AdaMuon discards the expectation operator $\mathbb{E}[\cdot]$. It does not approximate the Fisher diagonal or satisfy Theorem~\ref{thm:coord_whitening}. Geometrically, while Zeta performs a Mahalanobis-type spatial rescaling, AdaMuon performs an $L_\infty$-style normalization that forces $\|\tilde{g}_{\text{AdaMuon}}\|_{\infty} = 1$, equalizing step magnitudes and collapsing relative curvature information.

\textbf{Impact on Newton-Schulz Orthogonalization.}
This geometric distinction is critical for the subsequent Newton-Schulz (NS) orthogonalization. Let $A$ be the input matrix to the NS iteration, and $\kappa(A)$ be its condition number. Theorem~\ref{thm:ns_orthogonality} requires the input to be reasonably well-conditioned ($\kappa(A)$ bounded) for NS to reliably converge to the ideal orthogonal factor $Q$. 

Because Zeta explicitly reduces coordinate-wise anisotropy via $P_{\text{Zeta}}$, it systematically improves the condition number of the gradient matrix $G$:
\begin{equation}
    \kappa\left(P_{\text{Zeta}} G\right) < \kappa(G).
\end{equation}
Conversely, AdaMuon suppresses extreme values but maps $G \mapsto \tilde{G} \in \{-1, 1\}^{n \times m}$, which does not inherently approximate a curvature-based whitening transform and provides weaker theoretical guarantees for $\kappa(P_{\text{AdaMuon}} G)$. Therefore, under identical asymptotic cost, Zeta's adaptive stage provides a mathematically principled preconditioner that naturally fulfills the conditioning prerequisites for downstream spectral orthogonalization.

\section{Details for Experiment Settings}
\label{sec:more exp details}

In this section, we provide the detailed training configurations for all pretraining models and the unified evaluation setup for downstream tasks.

\noindent\textbf{Hardware.} All pretraining runs and inference-time evaluations are conducted on a single node equipped with 16 Ascend 910C accelerators, each with 64GB device memory.

\subsection{Pretraining Configurations}

Unless otherwise specified, all large language model pretraining experiments use a sequence length of 4K. For the Qwen-family models, the learning-rate warm-up stage is set to 200 iterations for 20K-step runs and 400 iterations for 40K-step runs.

\textbf{Qwen3-0.6B.} We train Qwen3-0.6B on 8 GPUs with sequence length $4096$, global batch size $256$, tensor parallelism $\mathrm{tp}=1$, and pipeline parallelism $\mathrm{pp}=4$. The total number of training iterations is $20{,}000$, and the warm-up length is $200$ iterations.

\textbf{Qwen3-1.7B.} We train Qwen3-1.7B on 8 GPUs with sequence length $4096$, global batch size $256$, tensor parallelism $\mathrm{tp}=1$, and pipeline parallelism $\mathrm{pp}=4$. The total number of training iterations is $20{,}000$, and the warm-up length is $200$ iterations.

\textbf{Qwen3-8B.} We train Qwen3-8B on 16 GPUs with sequence length $4096$, global batch size $256$, tensor parallelism $\mathrm{tp}=1$, and pipeline parallelism $\mathrm{pp}=4$. The total number of training iterations is $40{,}000$, and the warm-up length is $400$ iterations.

\textbf{Qwen3-MoE-1.3B-A0.6B.} We train Qwen3-MoE on 8 GPUs with sequence length $4096$, global batch size $256$, tensor parallelism $\mathrm{tp}=1$, pipeline parallelism $\mathrm{pp}=1$, expert parallelism $\mathrm{ep}=1$, and context parallelism $\mathrm{cp}=1$. The total number of training iterations is $20{,}000$, and the warm-up length is $200$ iterations.

\textbf{GPT2-Large.} We train GPT2-Large on 8 GPUs with sequence length $1024$, global batch size $480$, tensor parallelism $\mathrm{tp}=1$, and pipeline parallelism $\mathrm{pp}=1$. The total number of training iterations is $40{,}000$, and the warm-up length is $800$ iterations.

\subsection{Downstream Evaluation Setup}

For downstream evaluation, we adopt EvalScope\footnote{\url{https://github.com/modelscope/evalscope}} as the unified evaluation platform with Opencompass\footnote{\url{https://github.com/open-compass/opencompass}} serving as its backend. Across all downstream tasks, we set the maximum number of newly generated tokens to $128$.

\section{More Experimental Results}

\subsection{Layer-wise Results for the Motivation Experiment}

In this subsection, we report the layer-wise results for the motivation experiment. Figure~\ref{fig:motivation_layers_part1}, Figure~\ref{fig:motivation_layers_part2}, and Figure~\ref{fig:motivation_layers_part3} visualize the $p$-value statistics for all 29 layers, so that the effect of coordinate whitening can be examined throughout the entire network depth rather than only through the aggregated mean result shown in the main text.

\begin{figure*}[!h]
    \centering
    \includegraphics[width=0.32\textwidth]{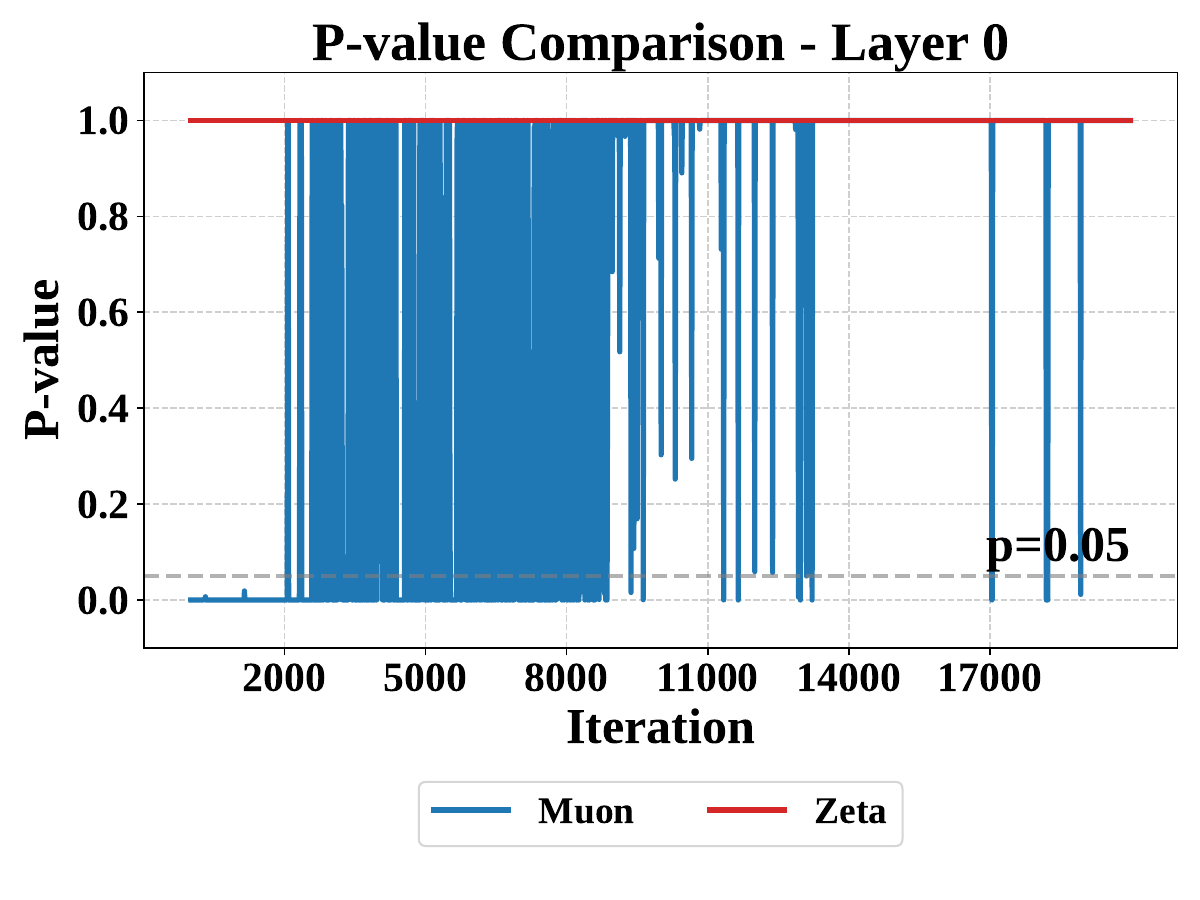}
    \includegraphics[width=0.32\textwidth]{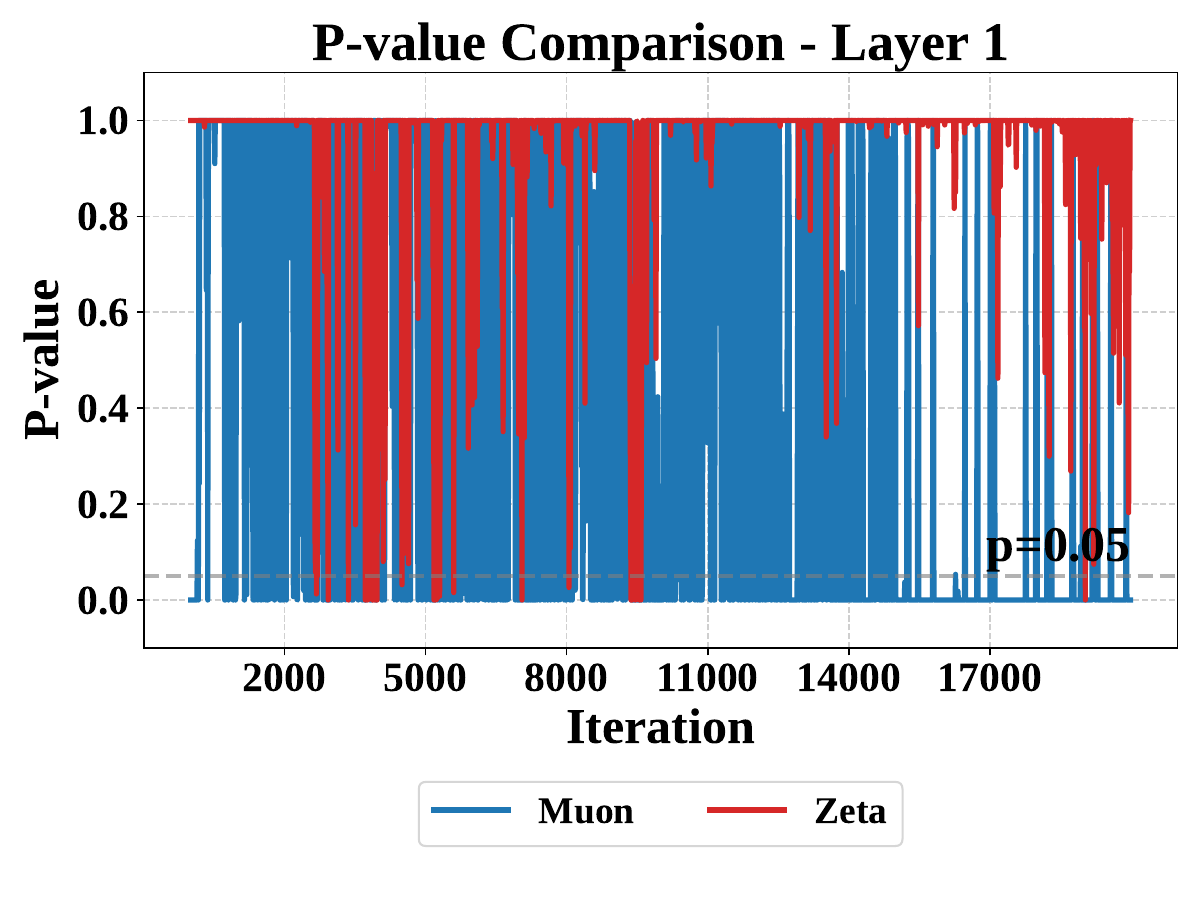}
    \includegraphics[width=0.32\textwidth]{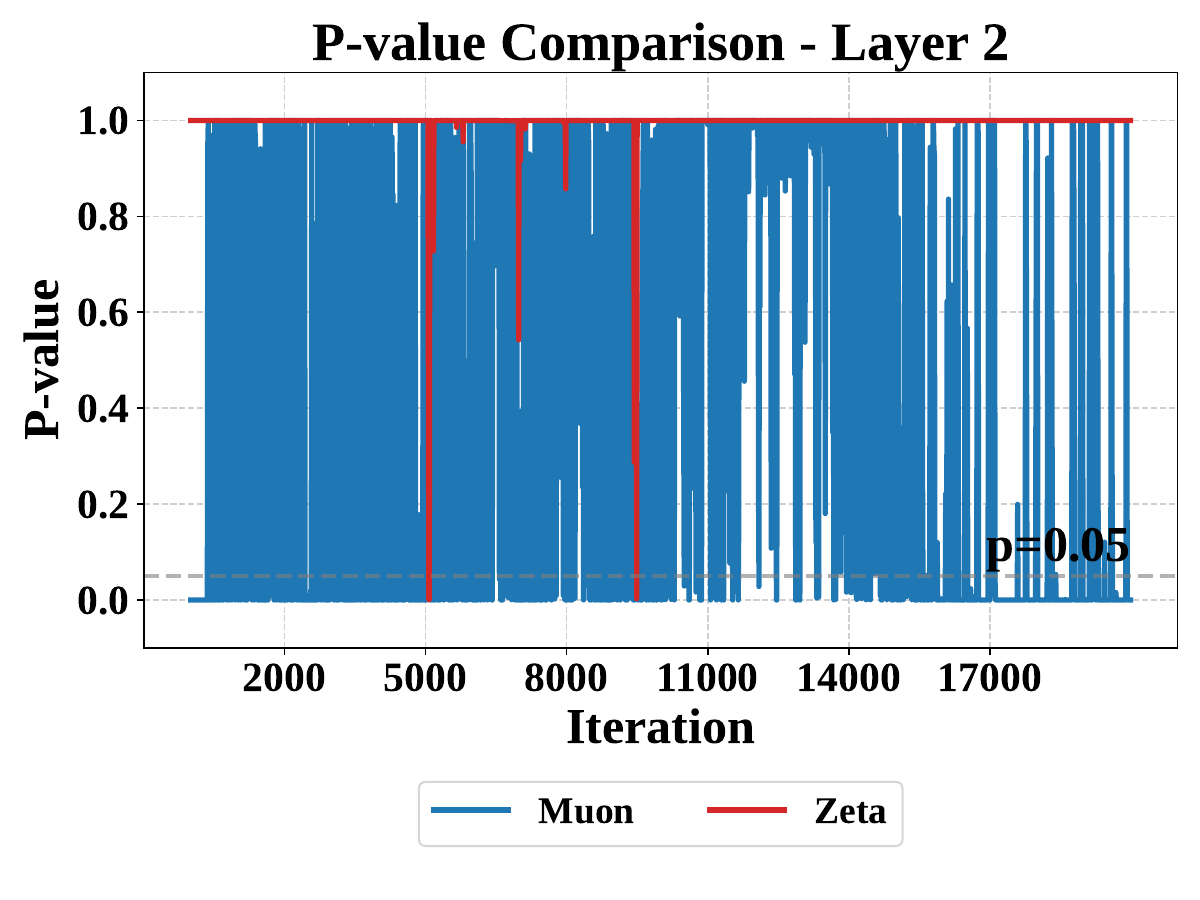}

    \includegraphics[width=0.32\textwidth]{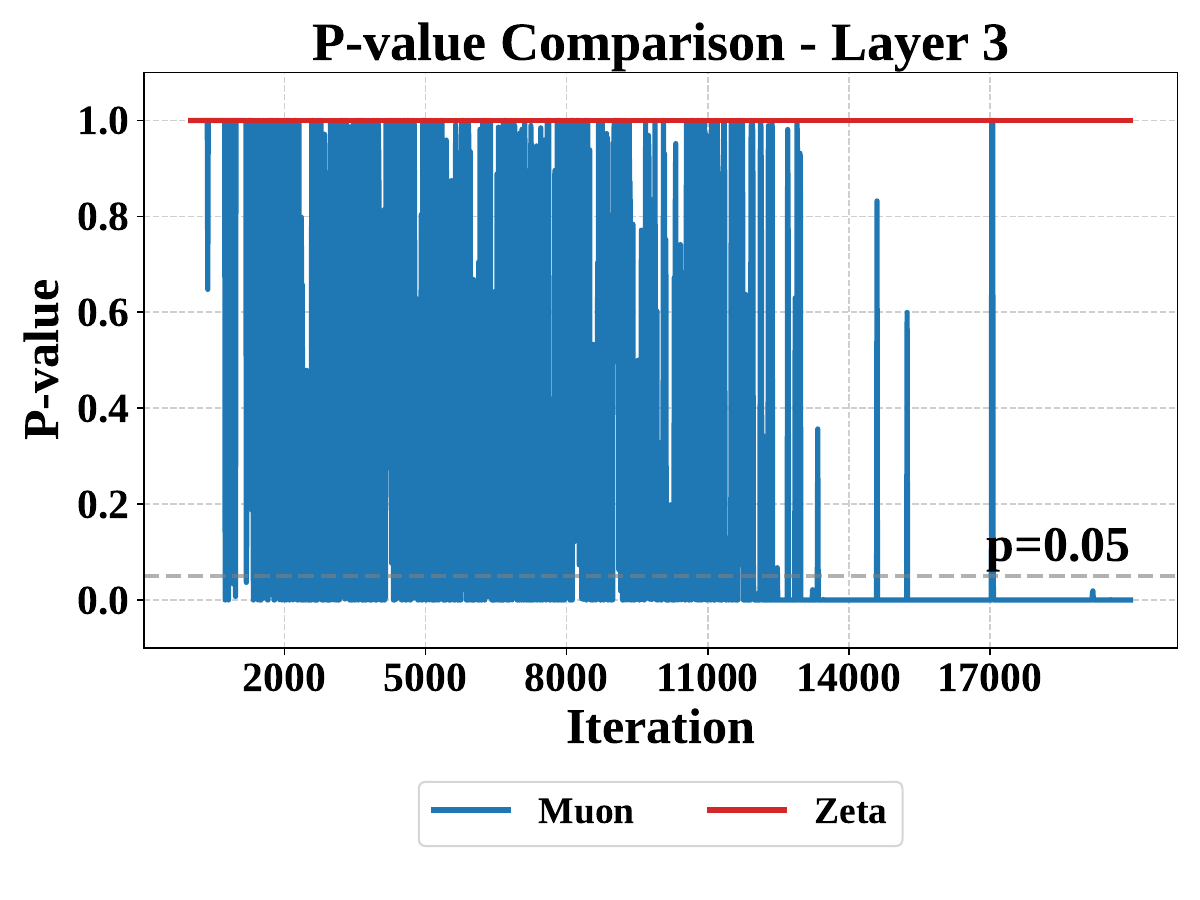}
    \includegraphics[width=0.32\textwidth]{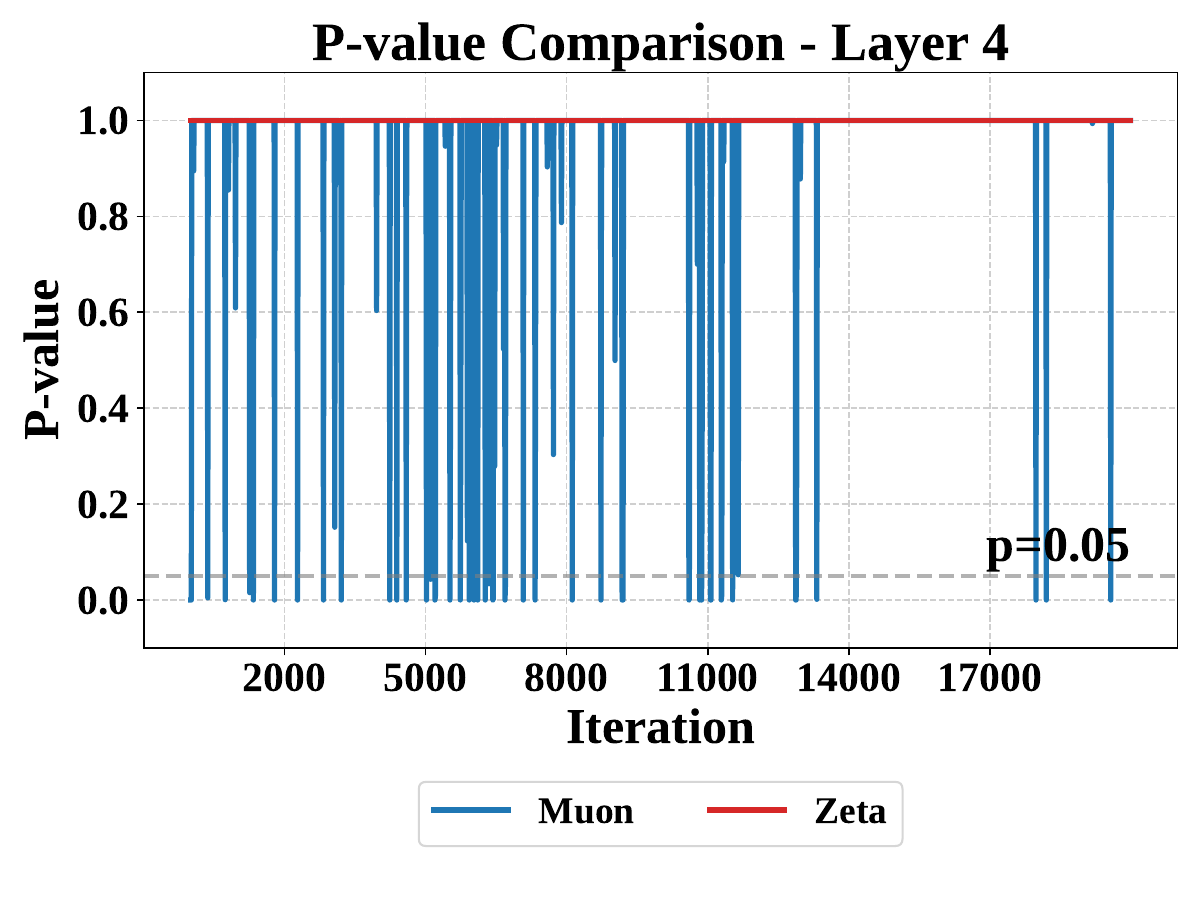}
    \includegraphics[width=0.32\textwidth]{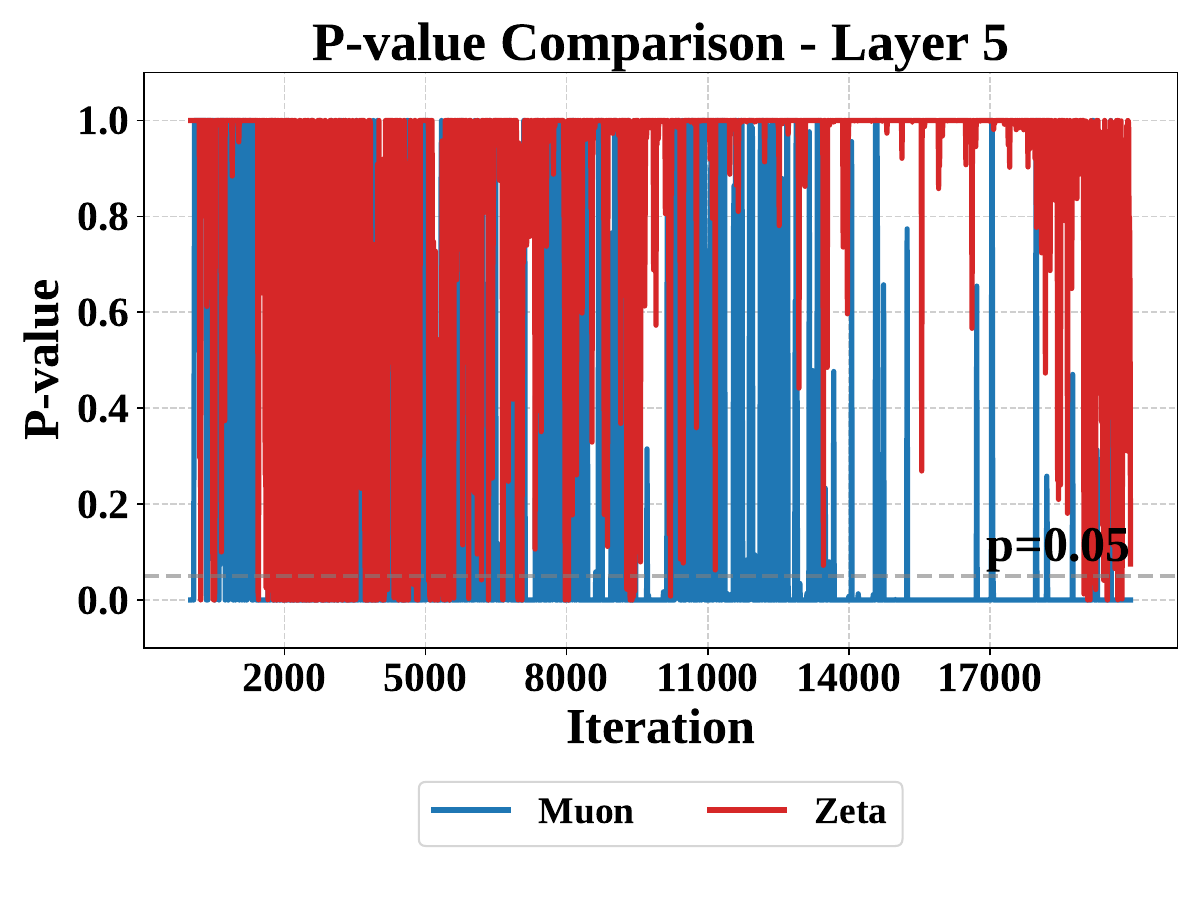}

    \includegraphics[width=0.32\textwidth]{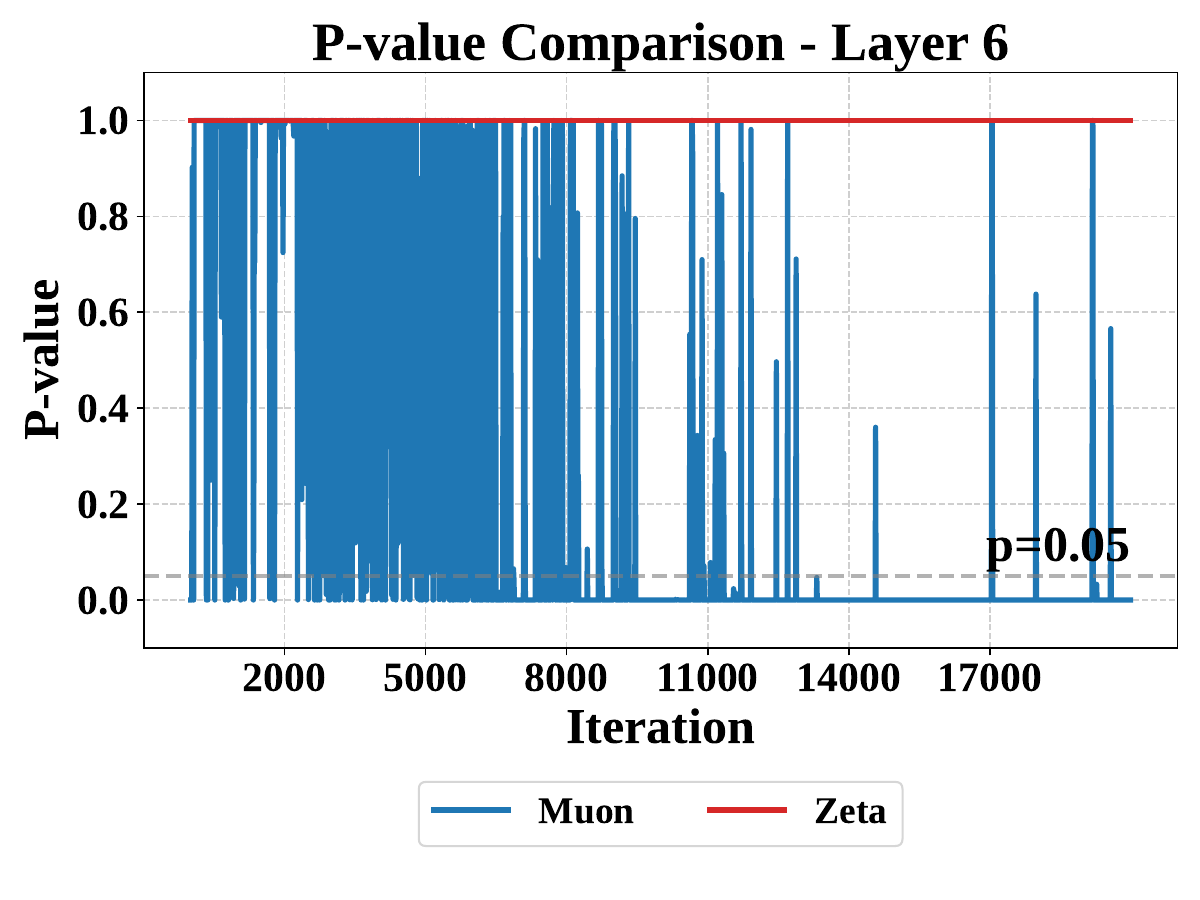}
    \includegraphics[width=0.32\textwidth]{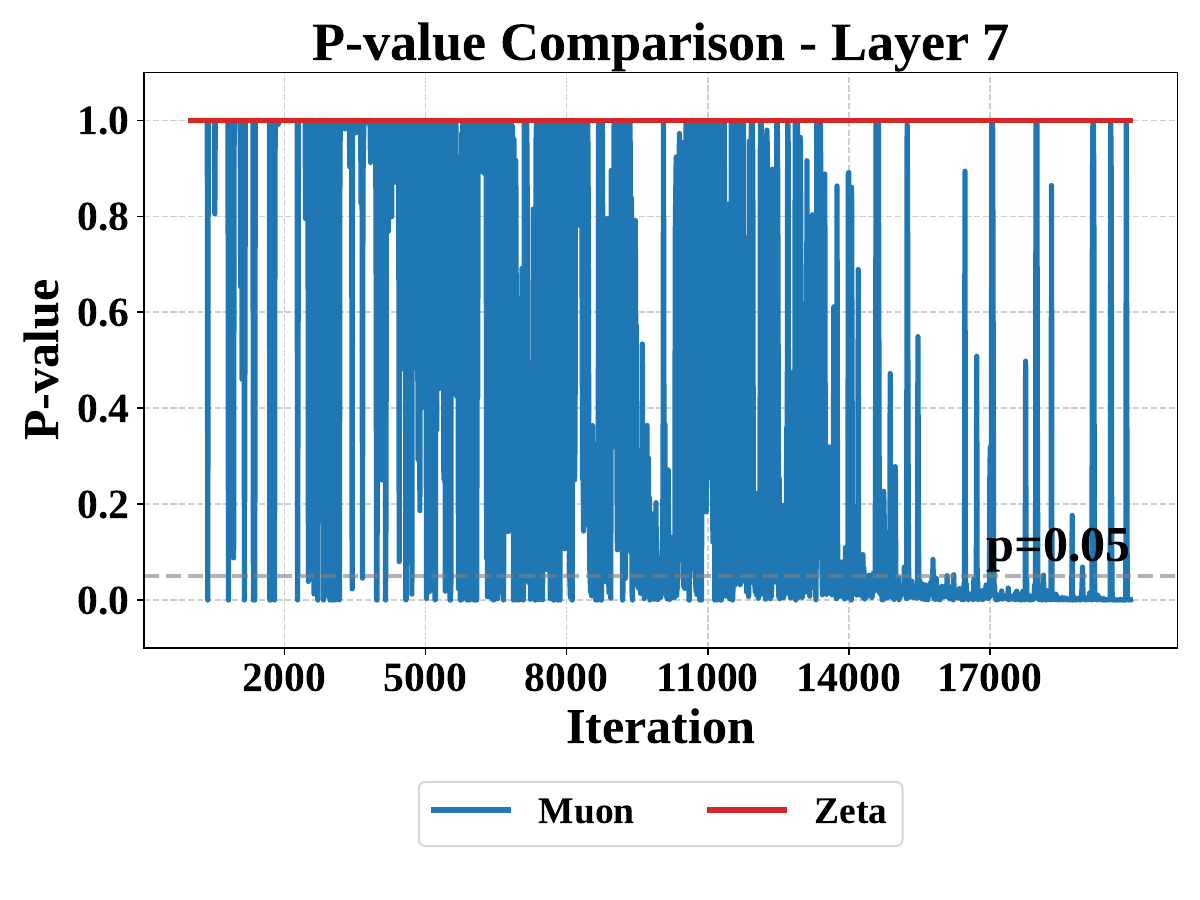}
    \includegraphics[width=0.32\textwidth]{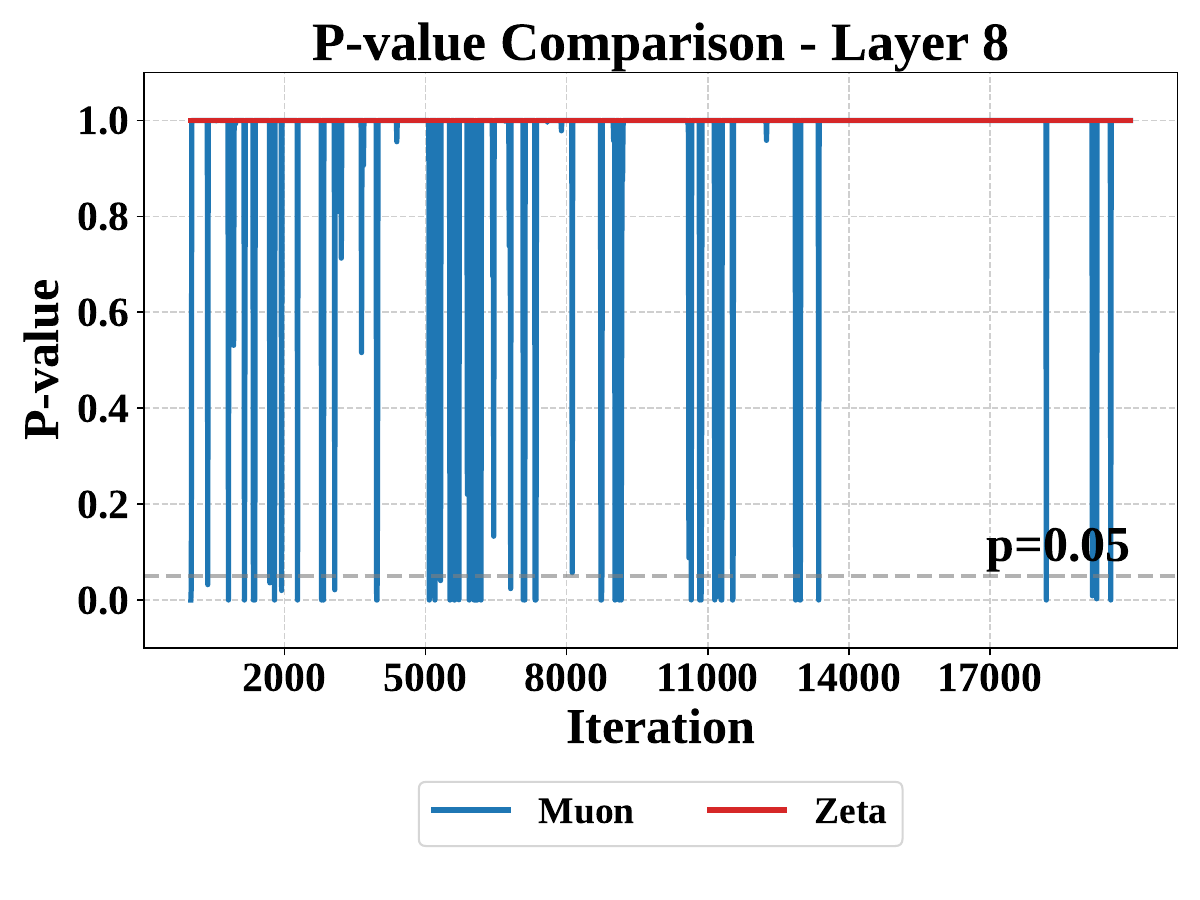}

    \caption{Layer-wise motivation-experiment results for layers 0--8. Each subplot reports the $p$-value statistics before and after coordinate whitening for the corresponding layer.}
    \label{fig:motivation_layers_part1}
\end{figure*}

\begin{figure*}[!h]
    \centering
    \includegraphics[width=0.32\textwidth]{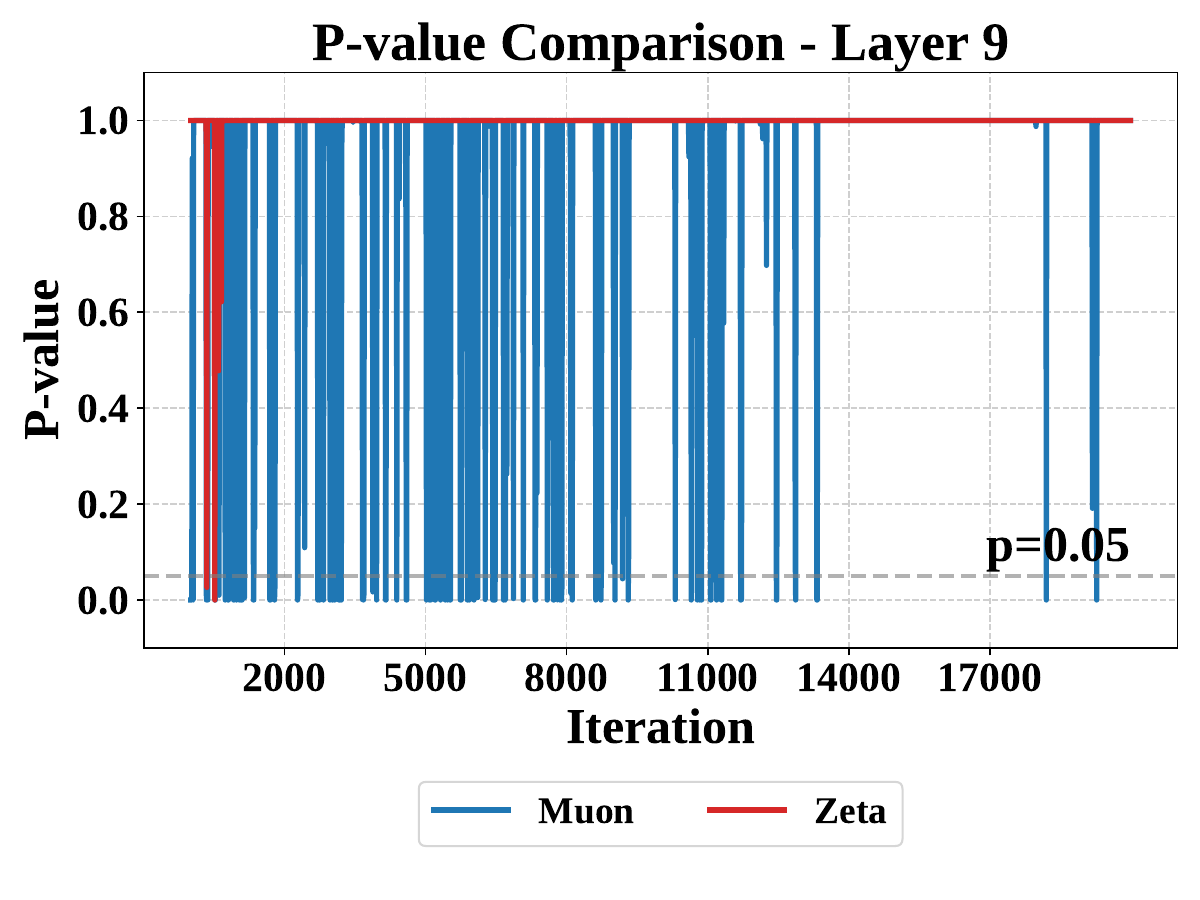}
    \includegraphics[width=0.32\textwidth]{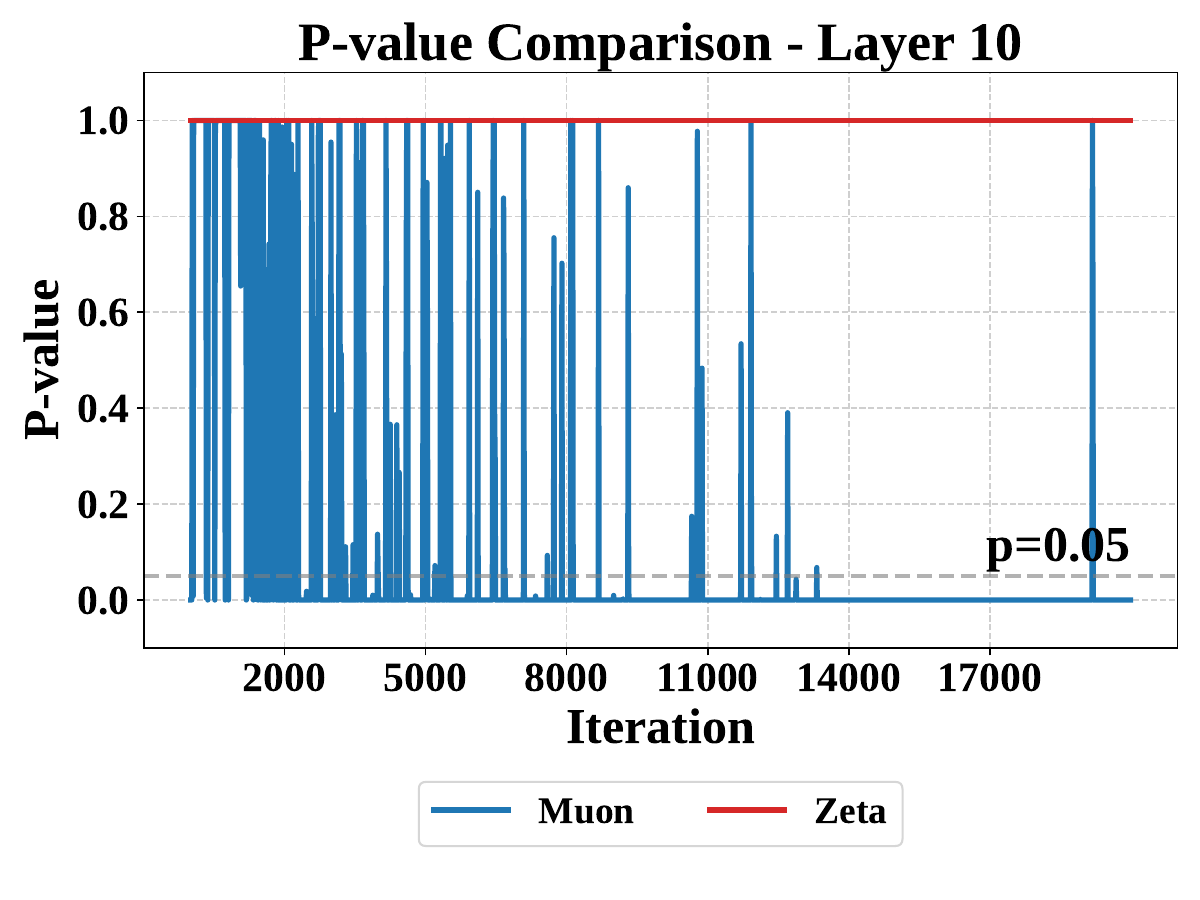}
    \includegraphics[width=0.32\textwidth]{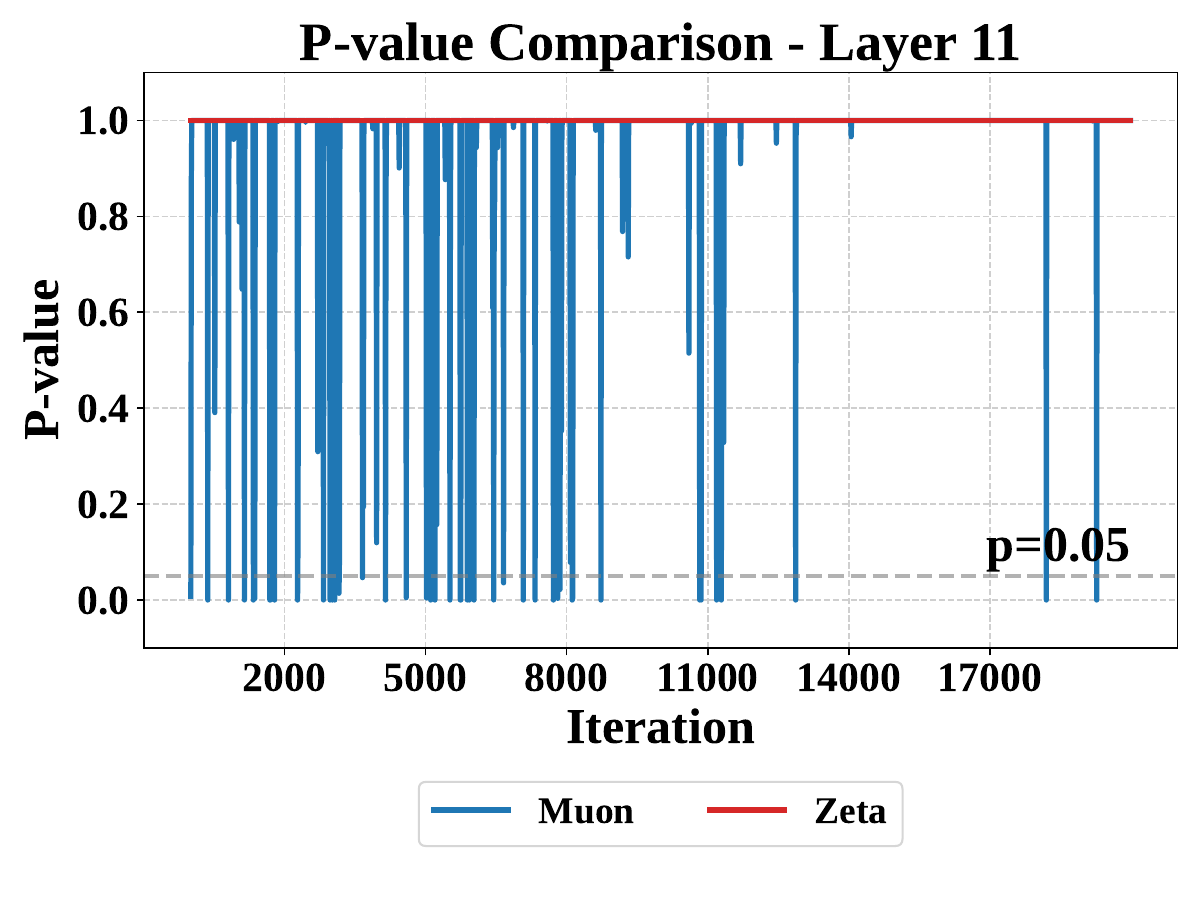}
    
    \includegraphics[width=0.32\textwidth]{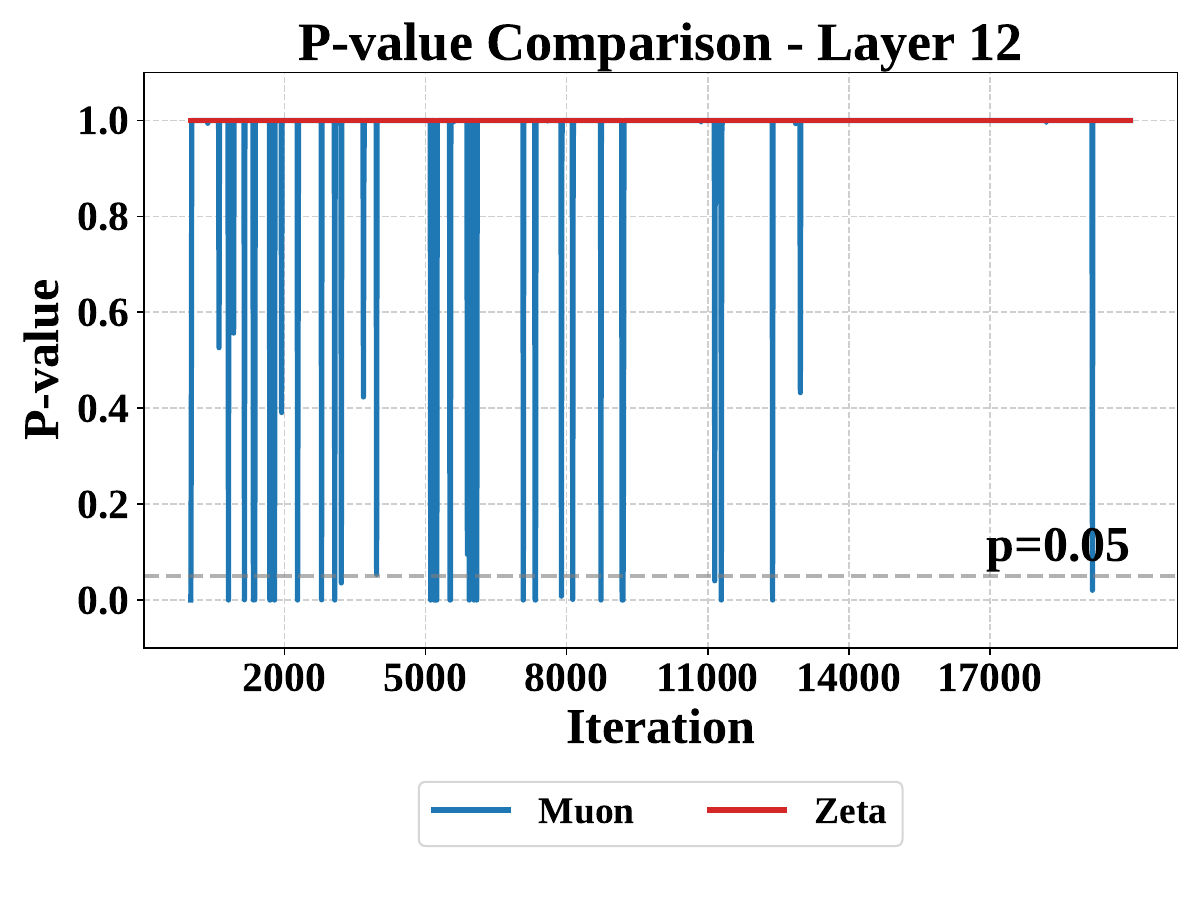}
    \includegraphics[width=0.32\textwidth]{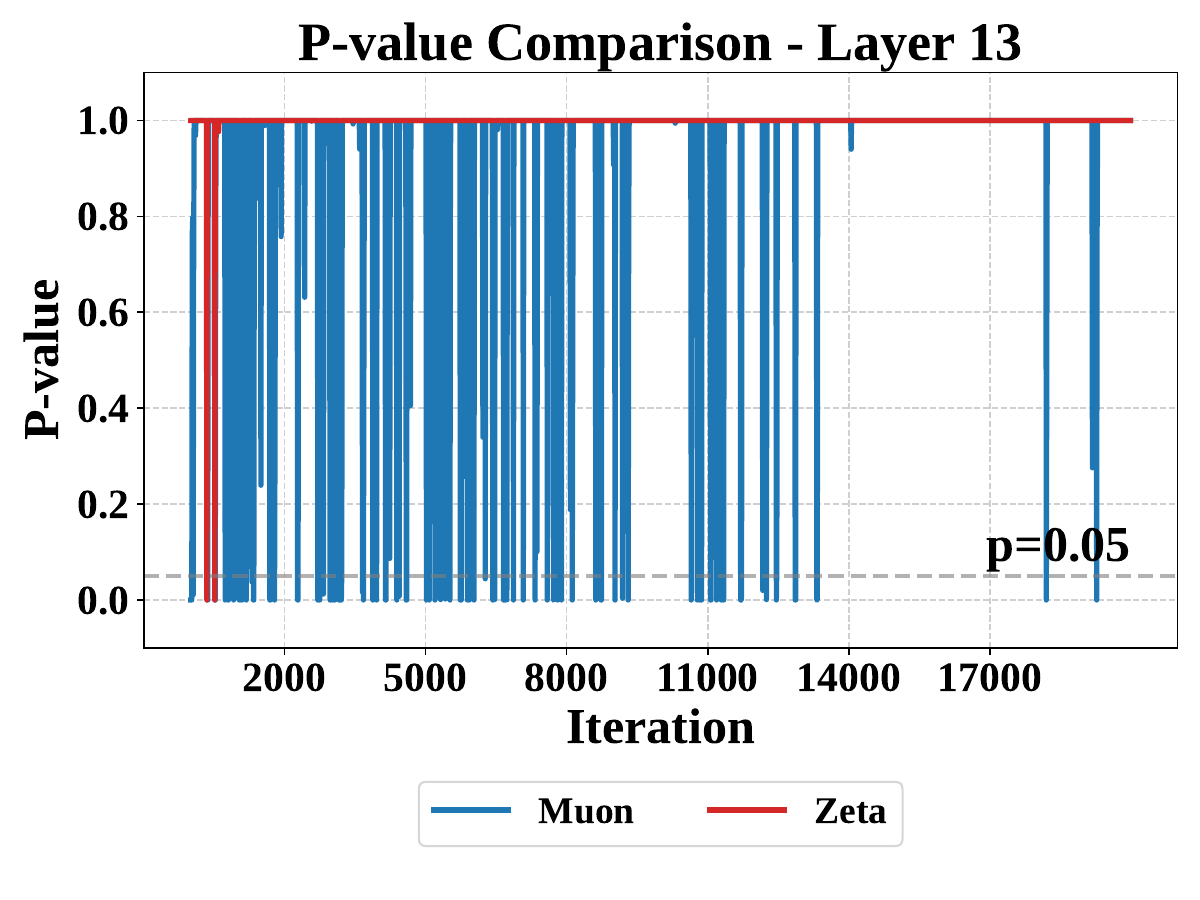}
    \includegraphics[width=0.32\textwidth]{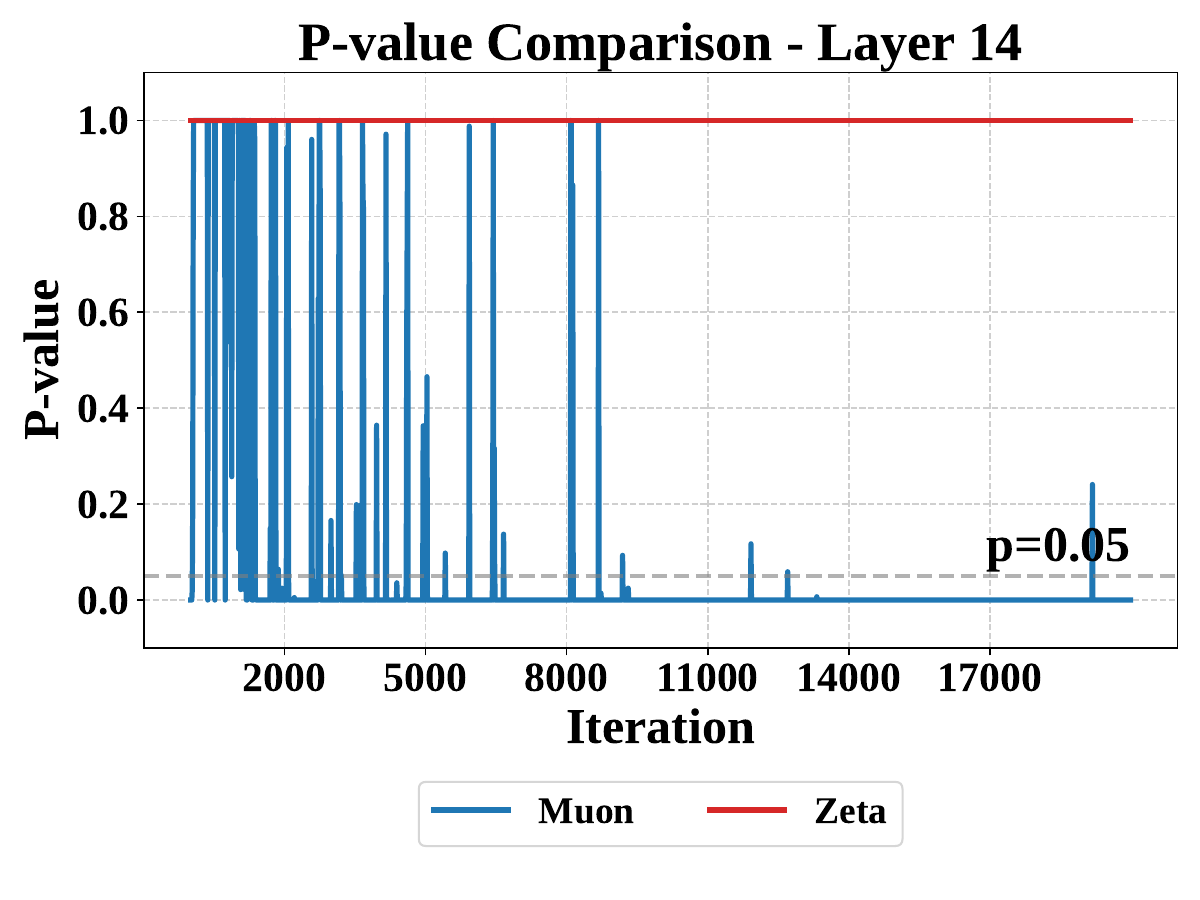}
    
    \includegraphics[width=0.32\textwidth]{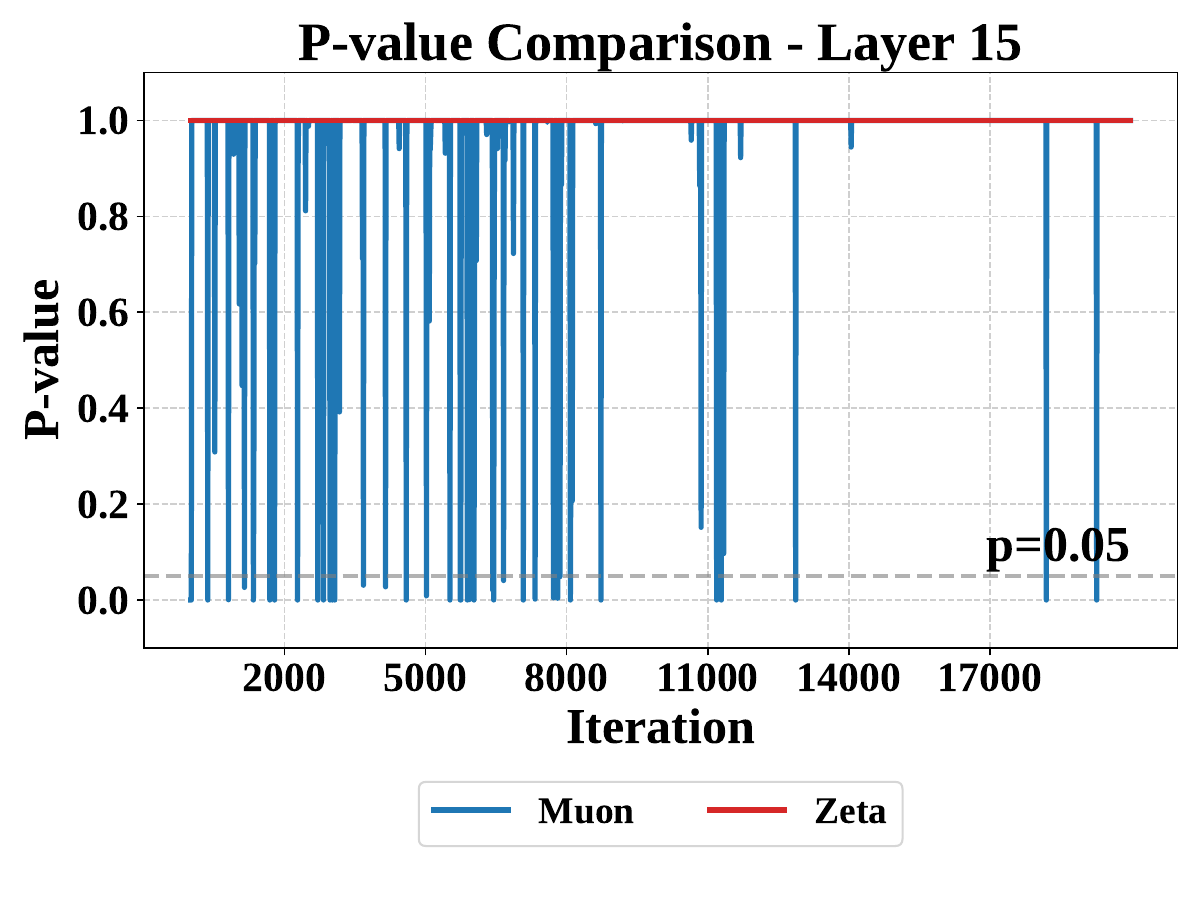}
    \includegraphics[width=0.32\textwidth]{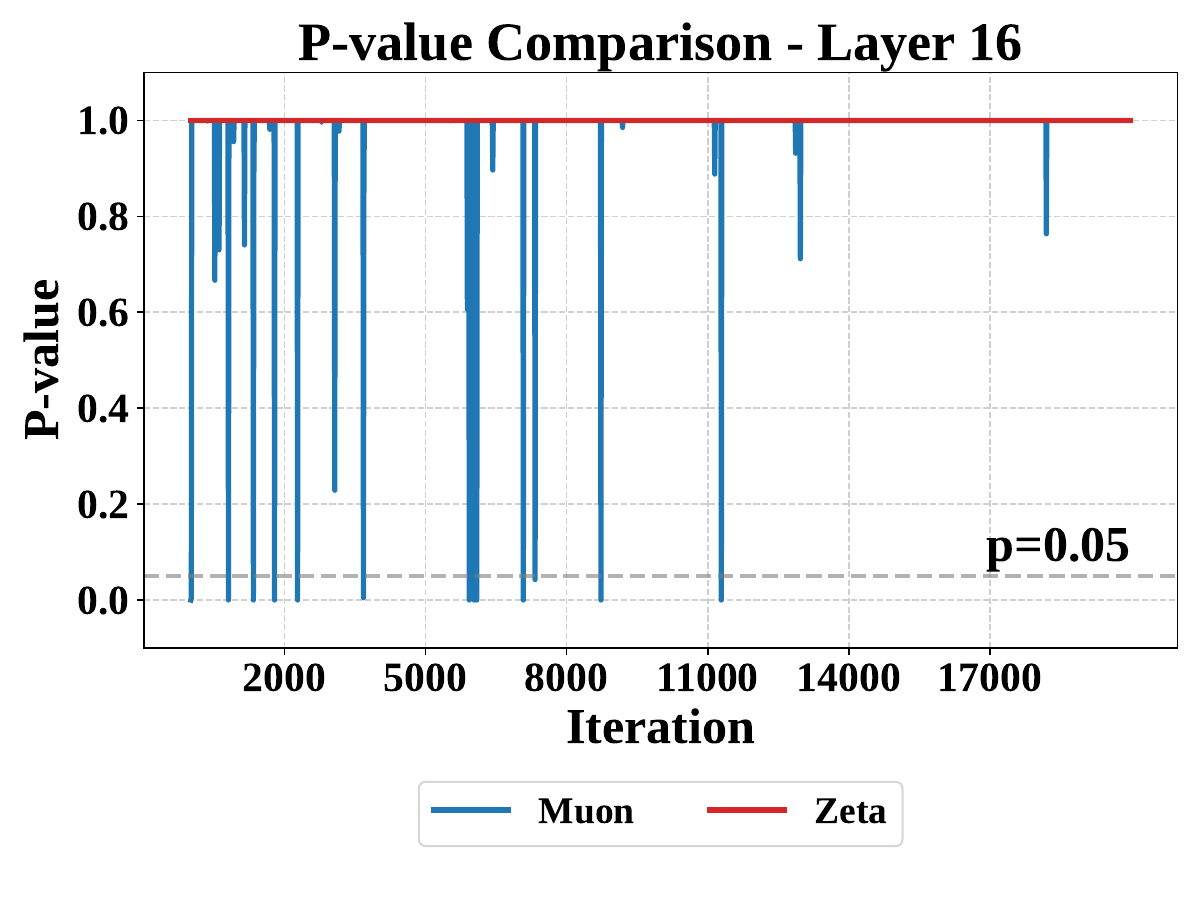}
    \includegraphics[width=0.32\textwidth]{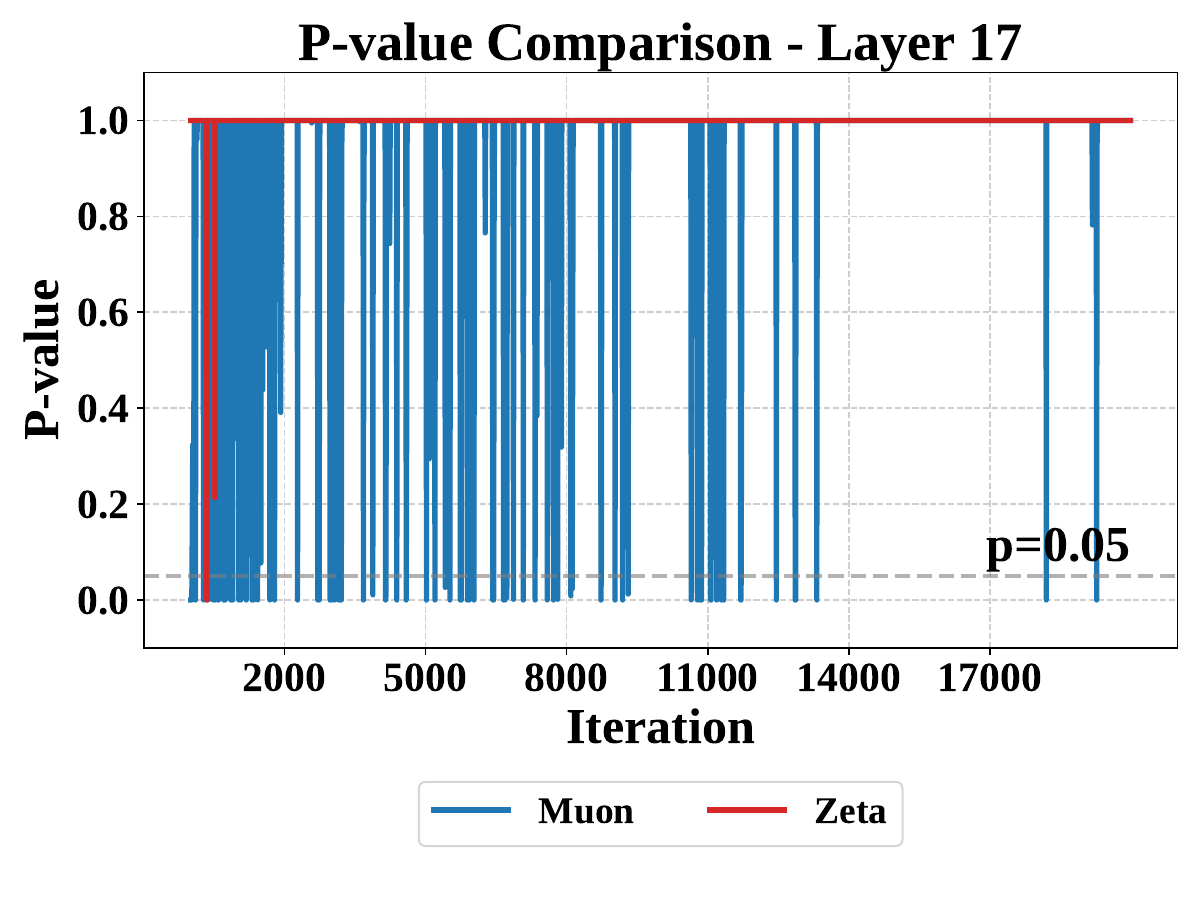}
    \caption{Layer-wise motivation-experiment results for layers 9--17. Each subplot reports the $p$-value statistics before and after coordinate whitening for the corresponding layer.}
    \label{fig:motivation_layers_part2}
\end{figure*}

\begin{figure*}[!h]
    \centering
    \includegraphics[width=0.32\textwidth]{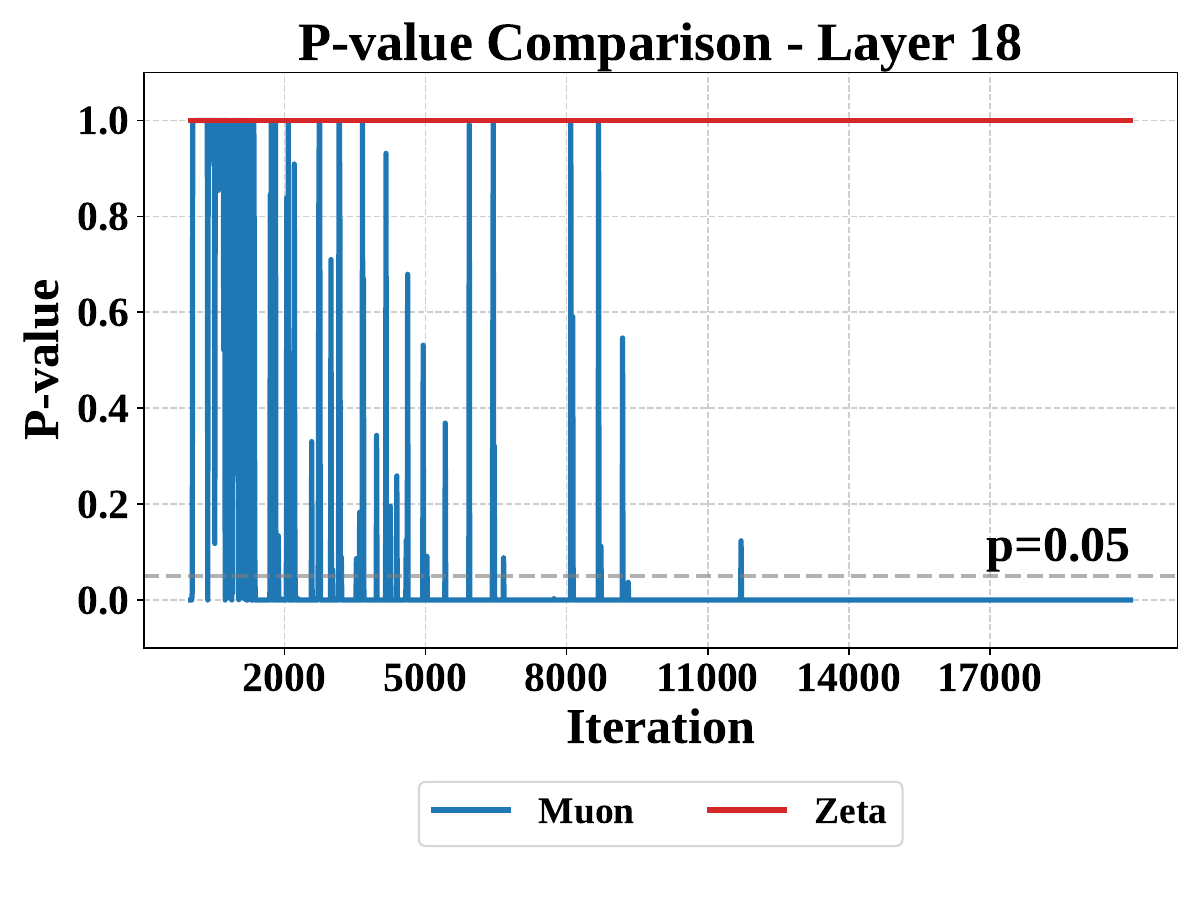}
    \includegraphics[width=0.32\textwidth]{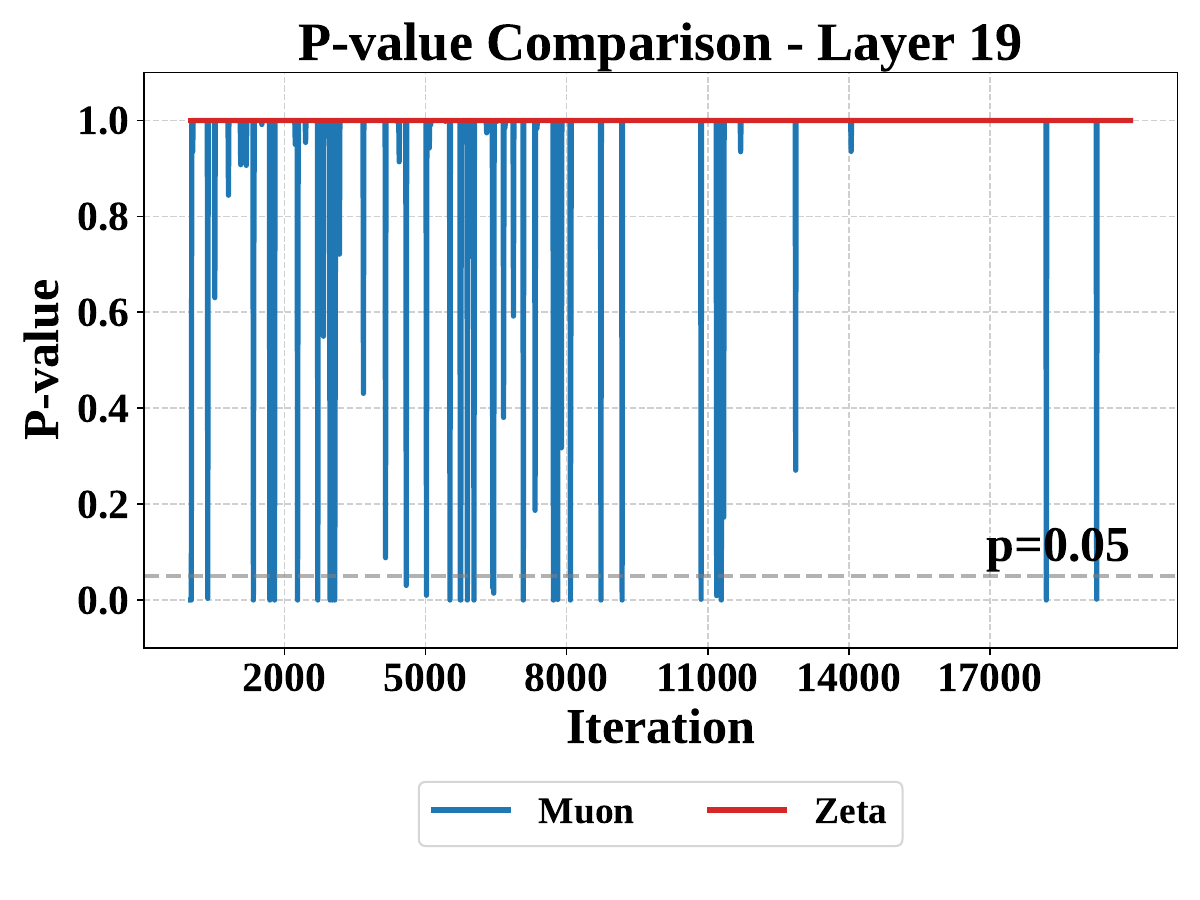}
    \includegraphics[width=0.32\textwidth]{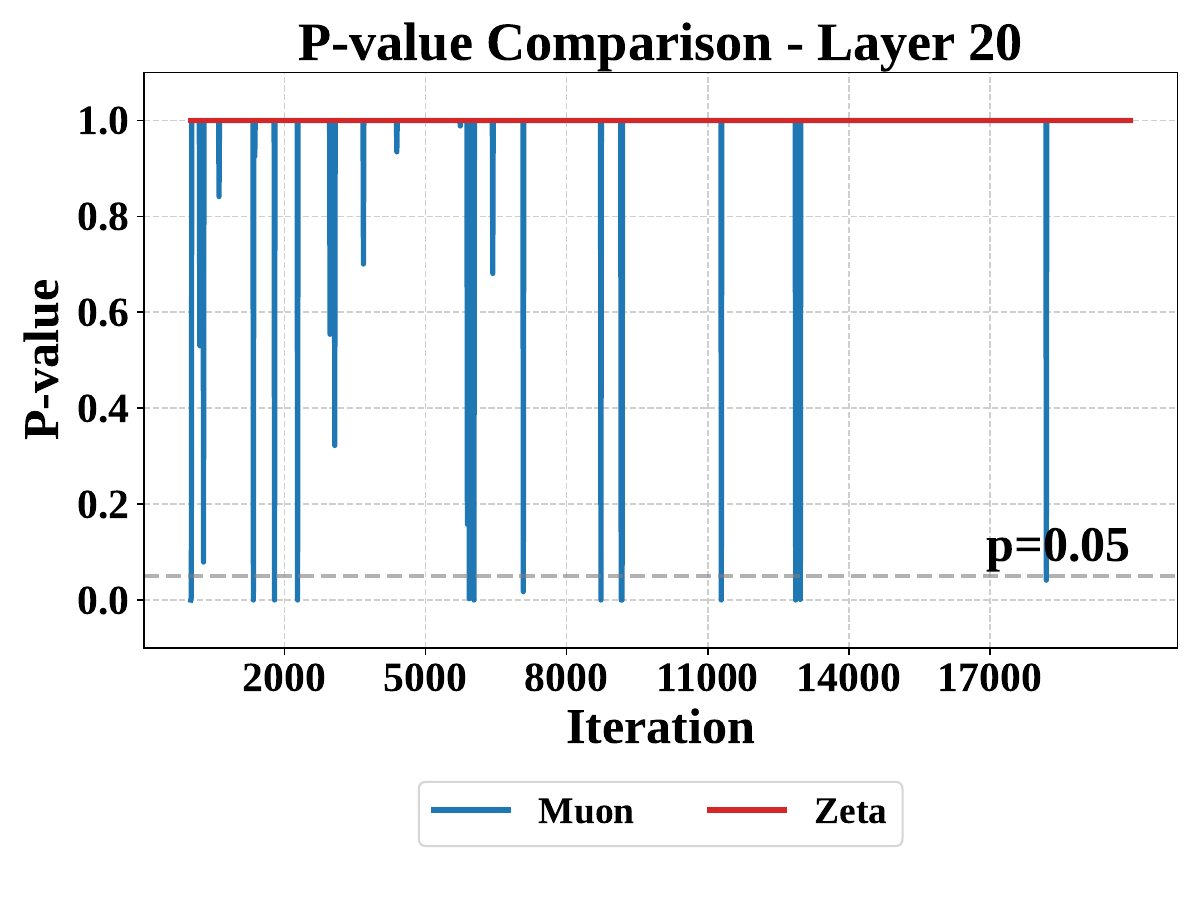}
    
    \includegraphics[width=0.32\textwidth]{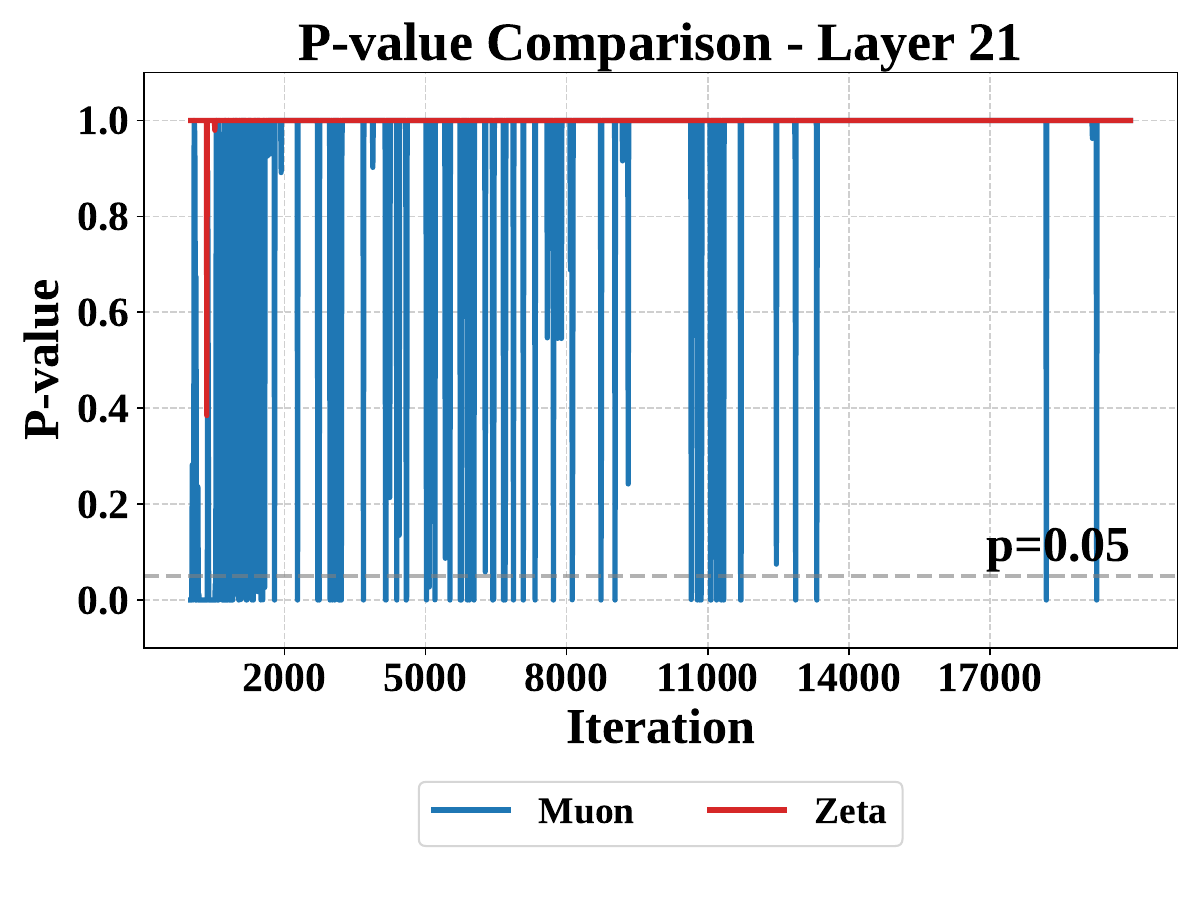}
    \includegraphics[width=0.32\textwidth]{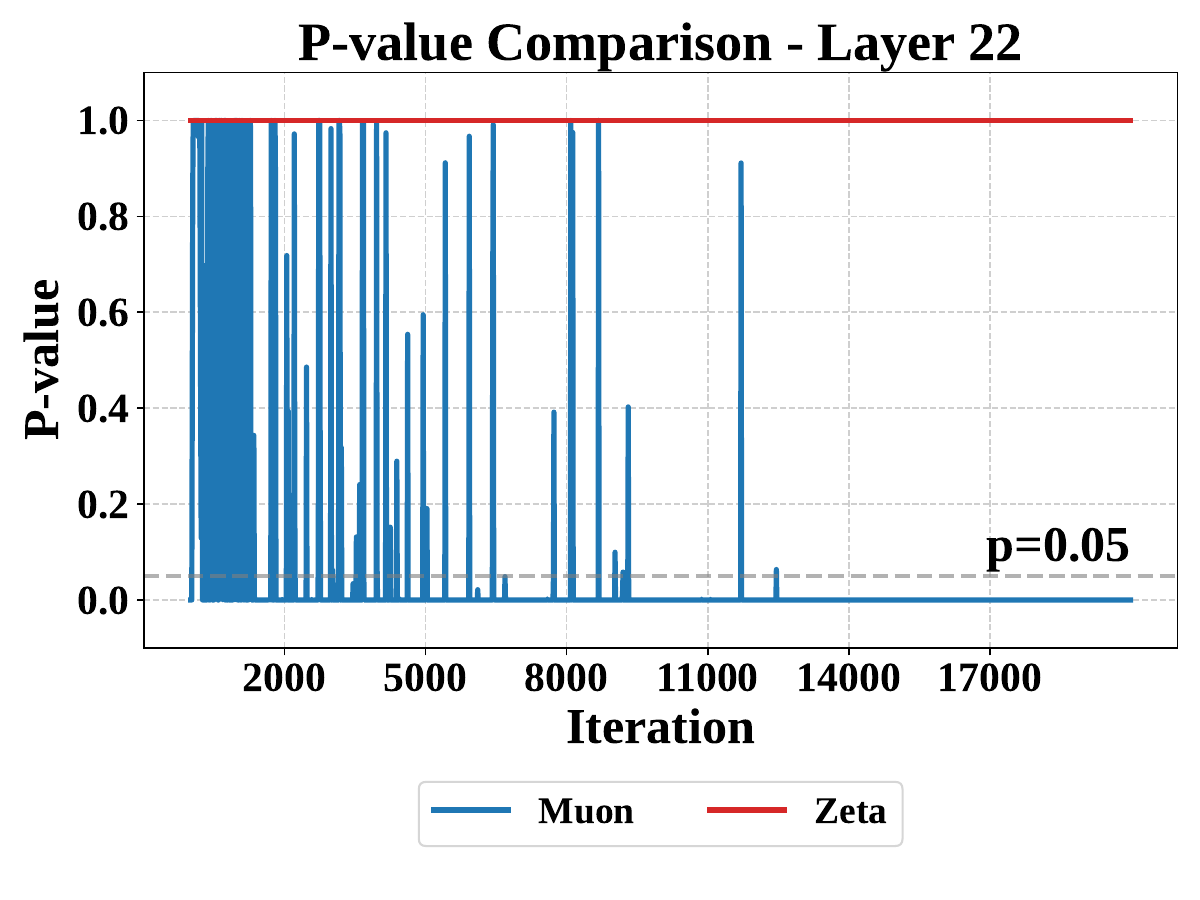}
    \includegraphics[width=0.32\textwidth]{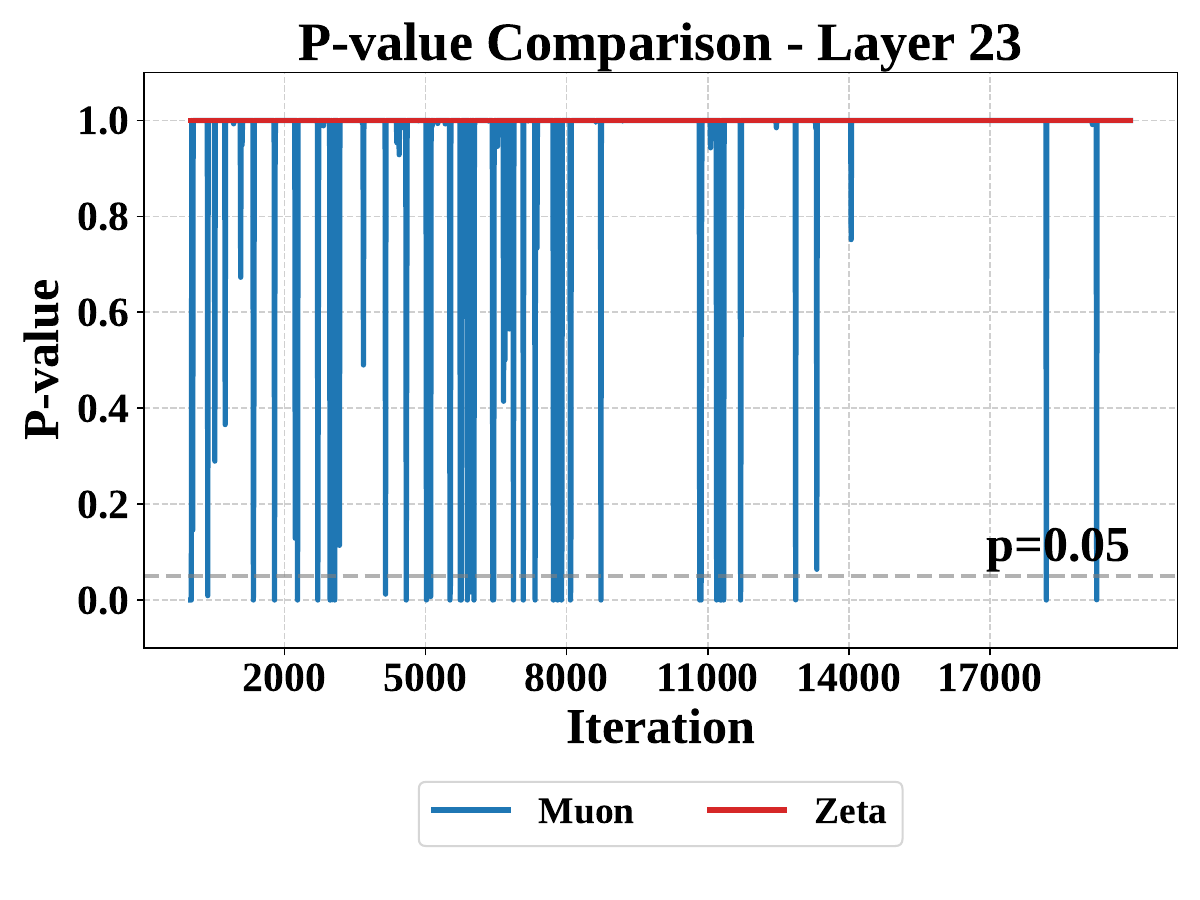}
    
    \includegraphics[width=0.32\textwidth]{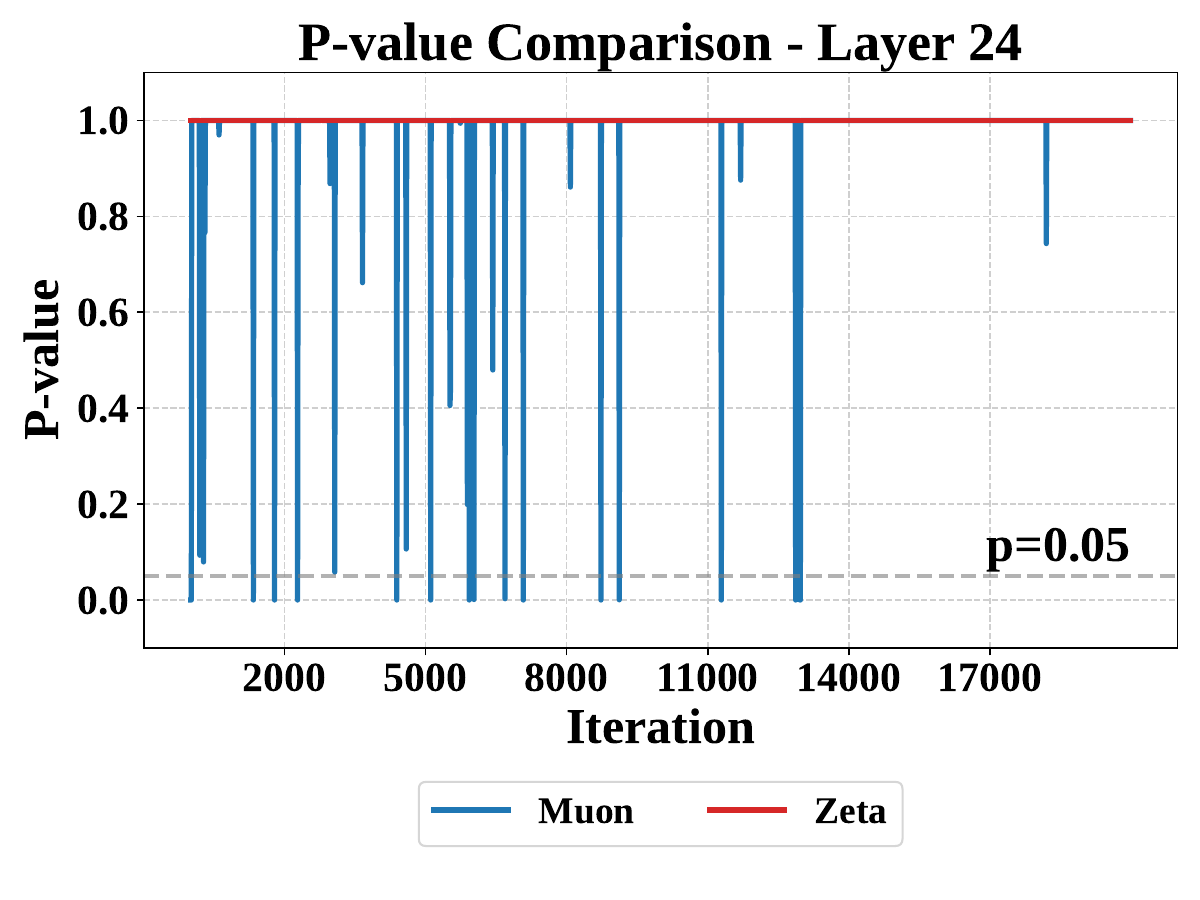}
    \includegraphics[width=0.32\textwidth]{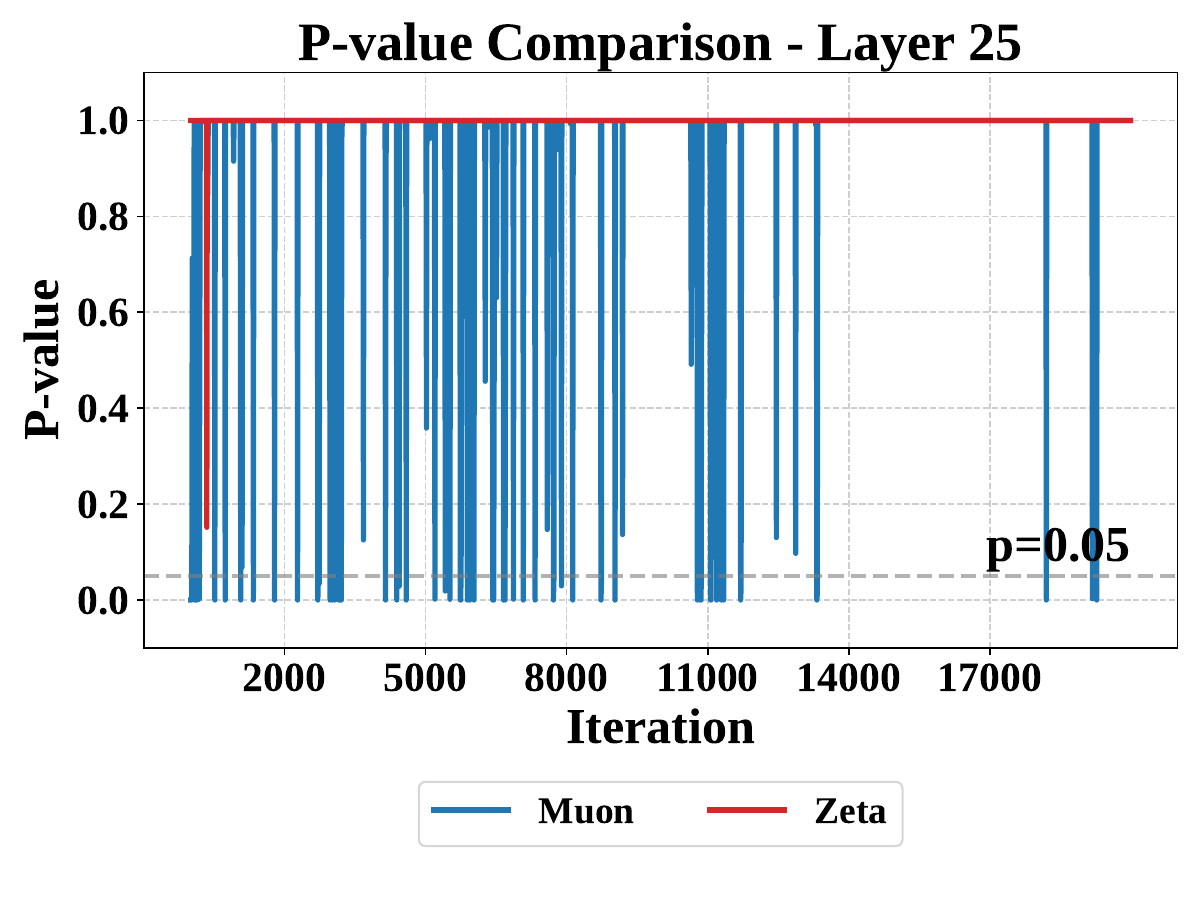}
    \includegraphics[width=0.32\textwidth]{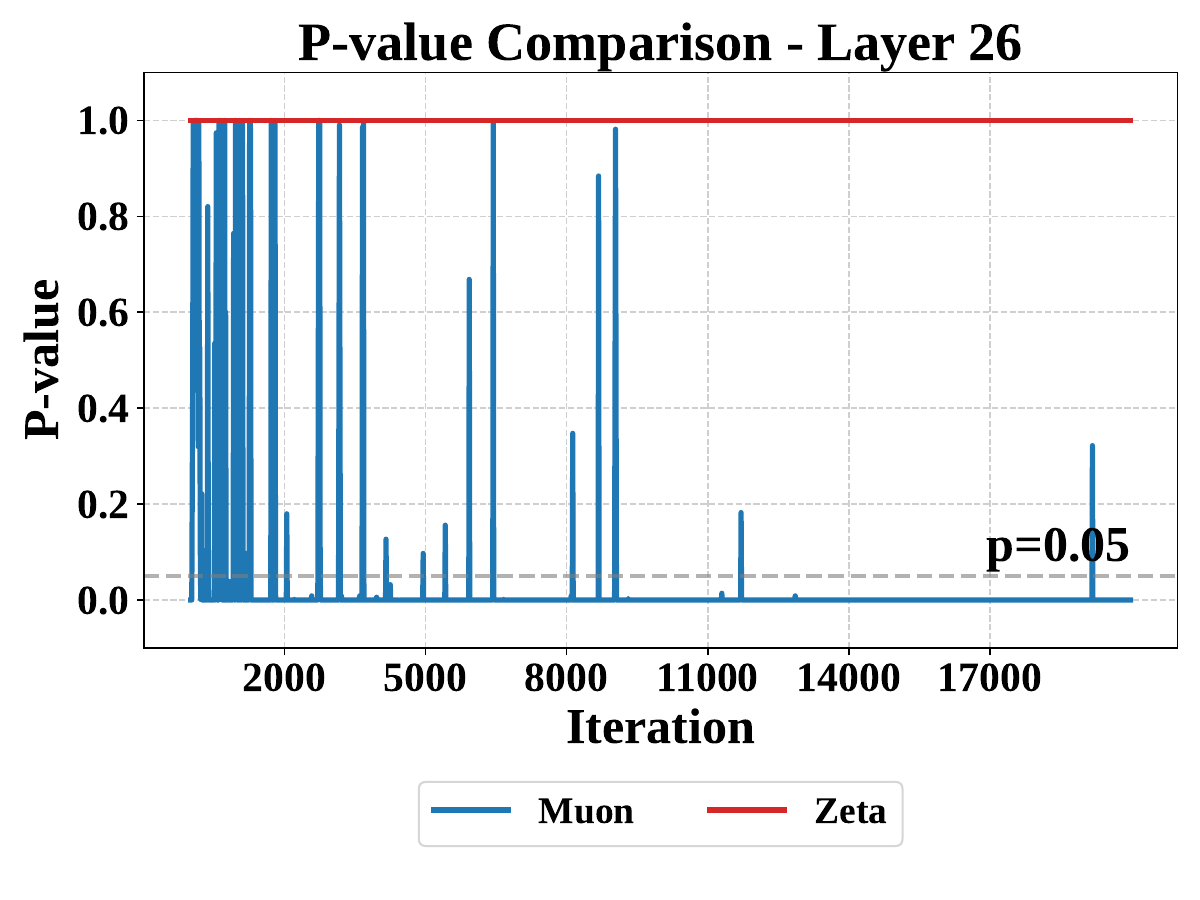}
    
    \includegraphics[width=0.32\textwidth]{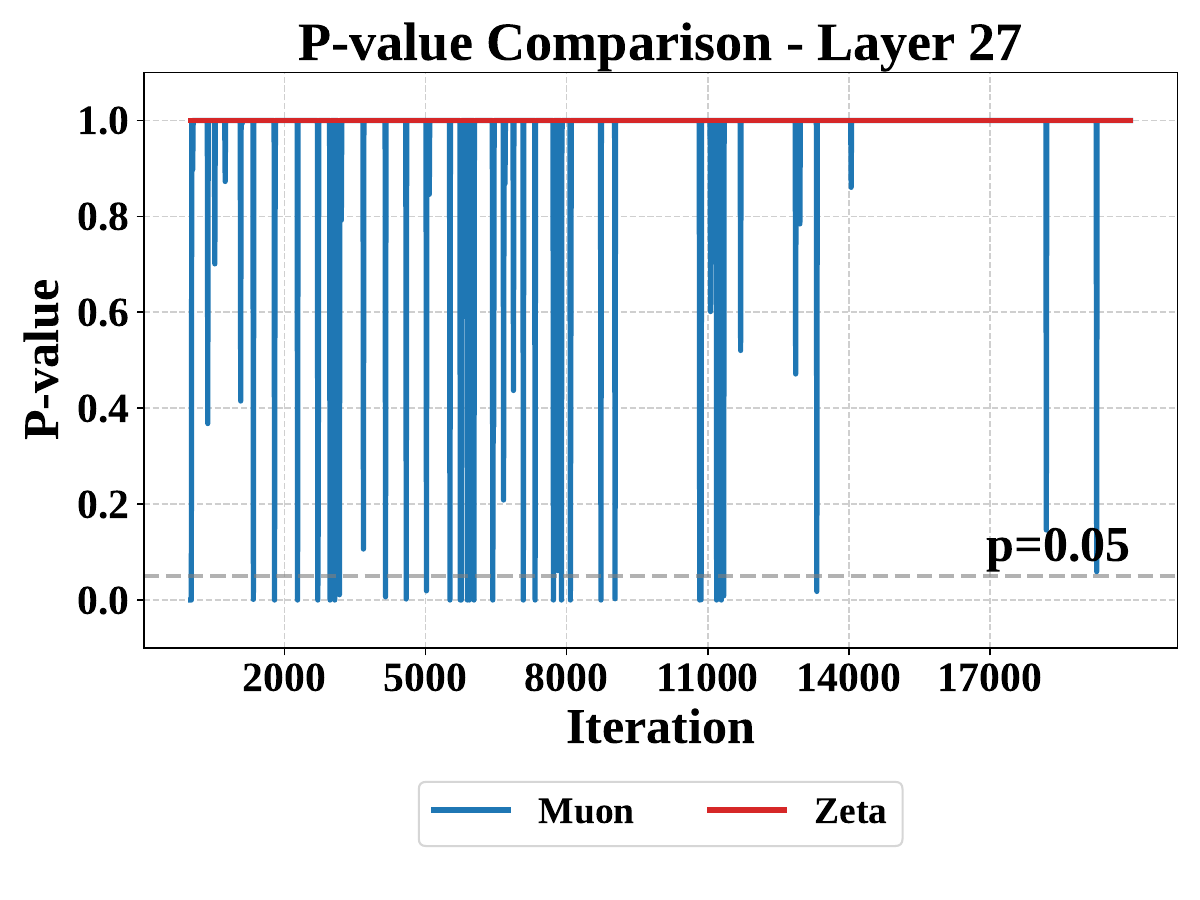}
    \includegraphics[width=0.32\textwidth]{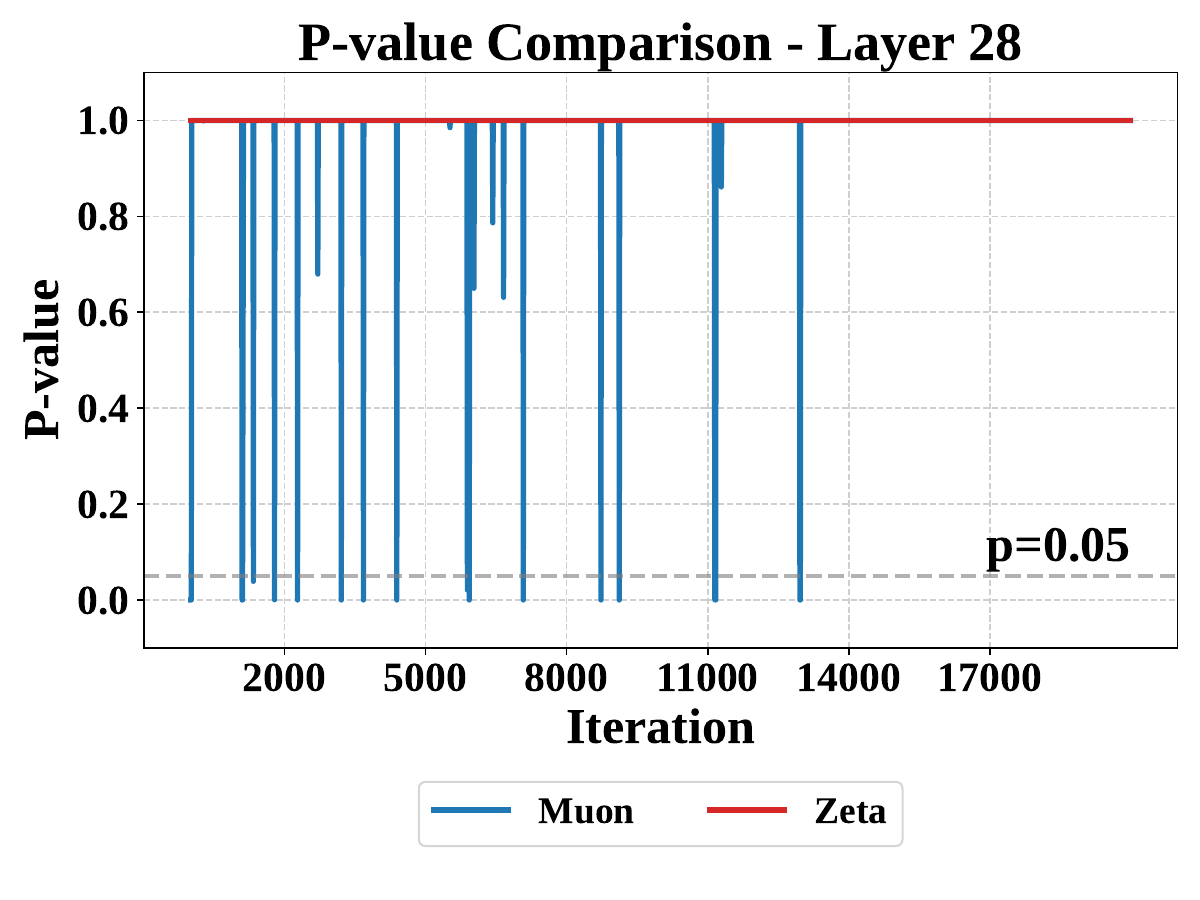}
    \caption{Layer-wise motivation-experiment results for layers 18--28. Each subplot reports the $p$-value statistics before and after coordinate whitening for the corresponding layer.}
    \label{fig:motivation_layers_part3}
\end{figure*}

\subsection{Learning-rate Sensitivity Results on Qwen3-0.6B}

In this subsection, we supplement the learning-rate tuning results for Qwen3-0.6B. Table~\ref{tab:qwen06b_lr_sensitivity} reports the final training loss under different learning rates, and Figure~\ref{fig:qwen06b_lr_sensitivity_curves} shows the corresponding training curves. Together, these results allow us to assess whether Zeta preserves its optimization advantage over a broader range of step sizes.

\begin{table}[!h]
    \centering
    \caption{Training loss on Qwen3-0.6B under different learning rates. Lower is better.}
    \label{tab:qwen06b_lr_sensitivity}
    \begin{tabular}{lcccc}
        \toprule
        Optimizer & $3\times10^{-4}$ & $9\times10^{-4}$ & $3\times10^{-3}$ & $9\times10^{-3}$ \\
        \midrule
        AdamW & 2.698 & 2.678 & 2.618 & 2.666 \\
        Muon & 2.641 & 2.598 & 2.610 & 2.644 \\
        AdaMuon & 2.652 & 2.598 & 2.580 & 2.589 \\
        Zeta & 2.626 & 2.583 & \textbf{2.564} & 2.590 \\
        \bottomrule
    \end{tabular}
\end{table}

\begin{figure}[!h]
    \centering
    \begin{minipage}{0.48\textwidth}
        \centering
        \includegraphics[width=\textwidth]{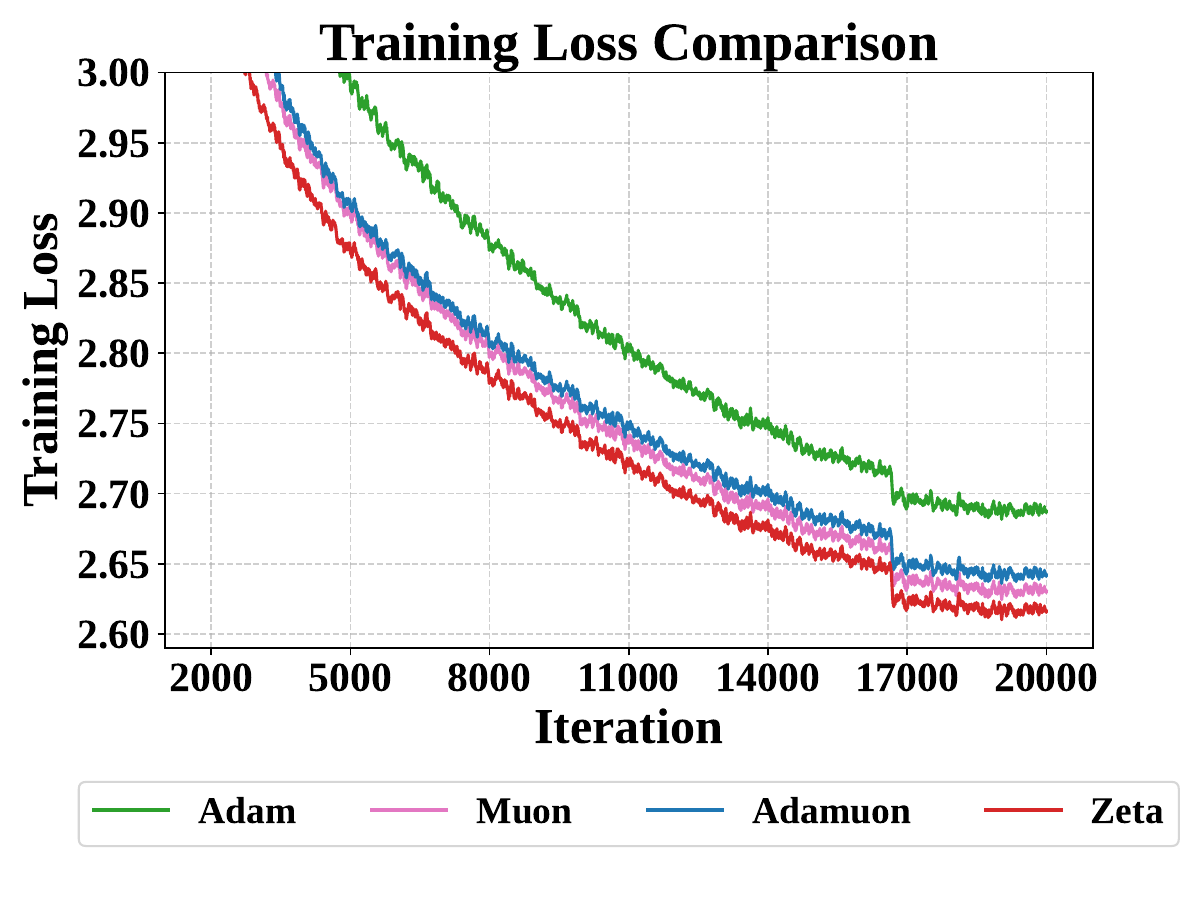}
    \end{minipage}
    \hfill
    \begin{minipage}{0.48\textwidth}
        \centering
        \includegraphics[width=\textwidth]{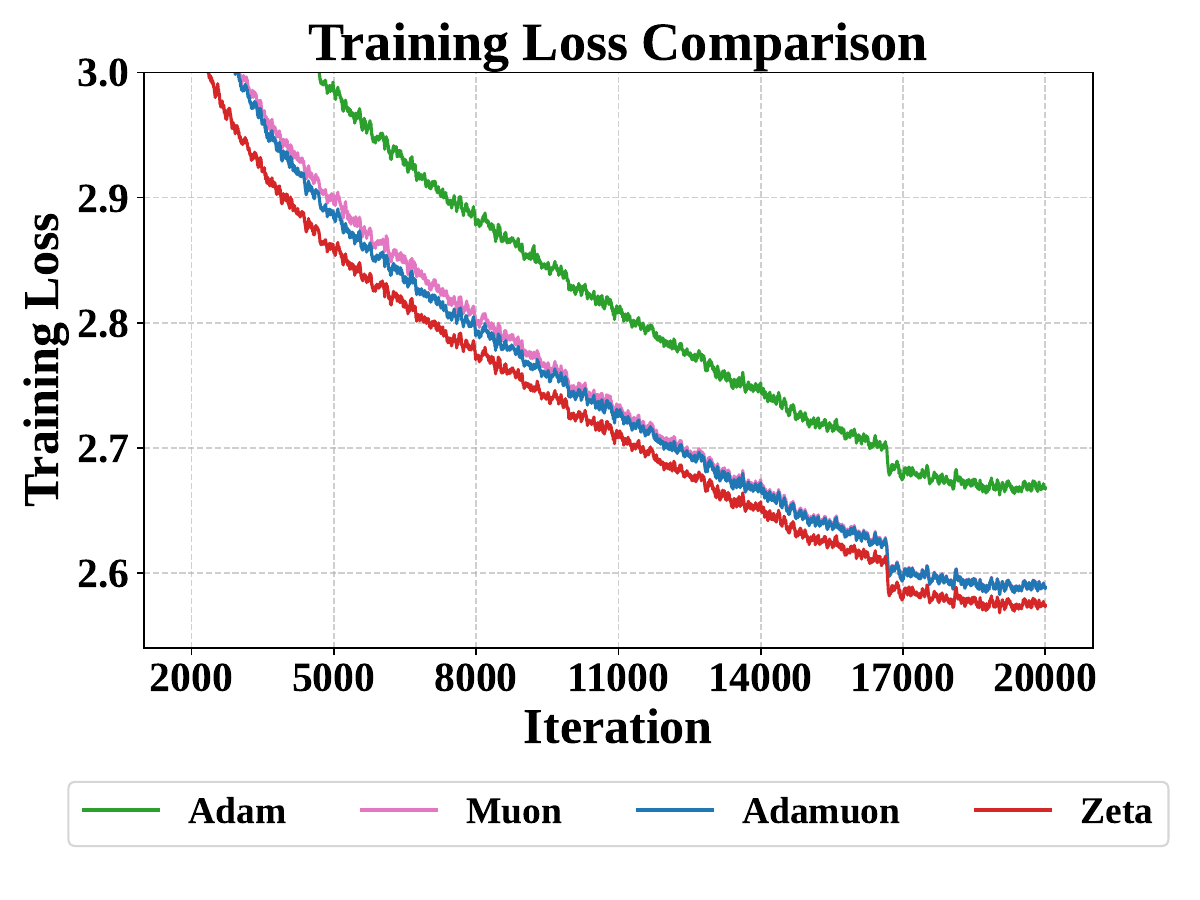}
    \end{minipage}

    \vspace{1ex}

    \begin{minipage}{0.48\textwidth}
        \centering
        \includegraphics[width=\textwidth]{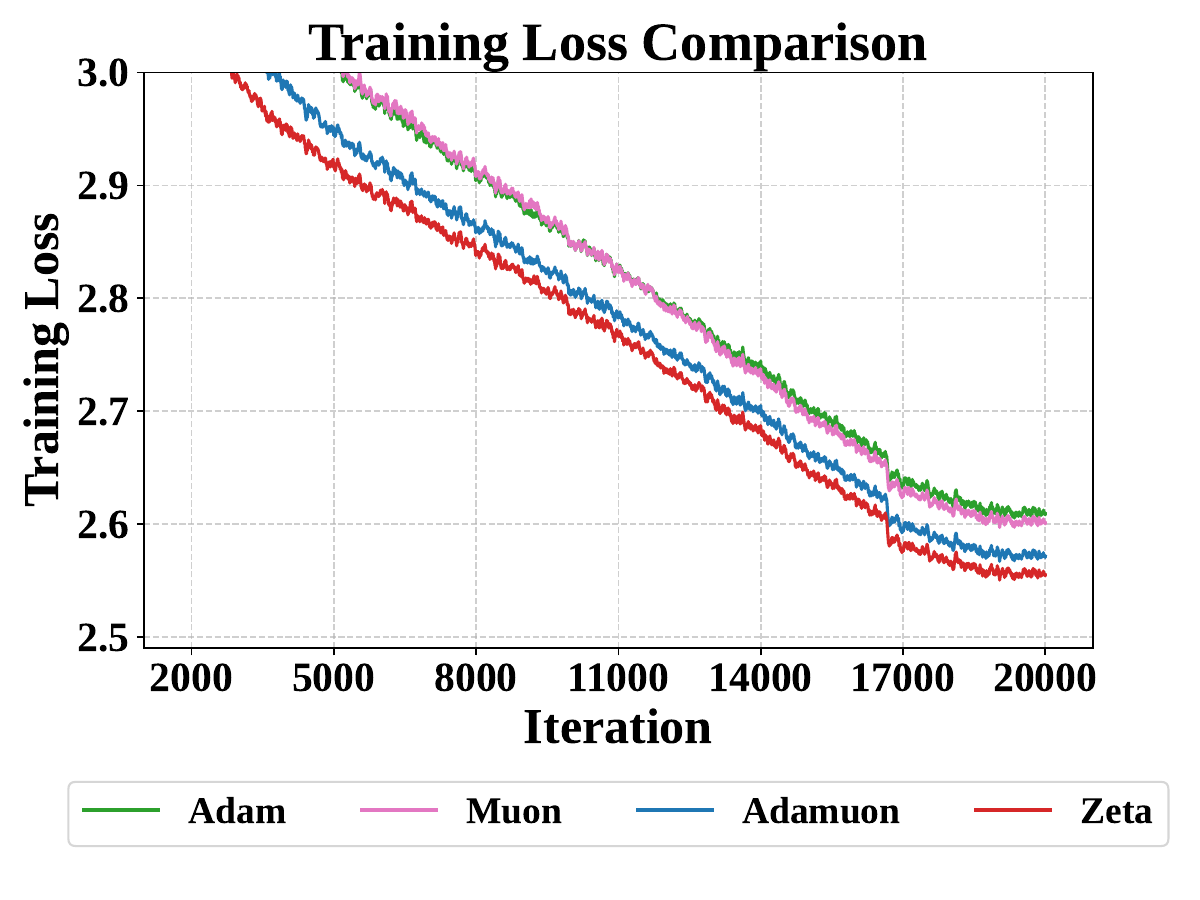}
    \end{minipage}
    \hfill
    \begin{minipage}{0.48\textwidth}
        \centering
        \includegraphics[width=\textwidth]{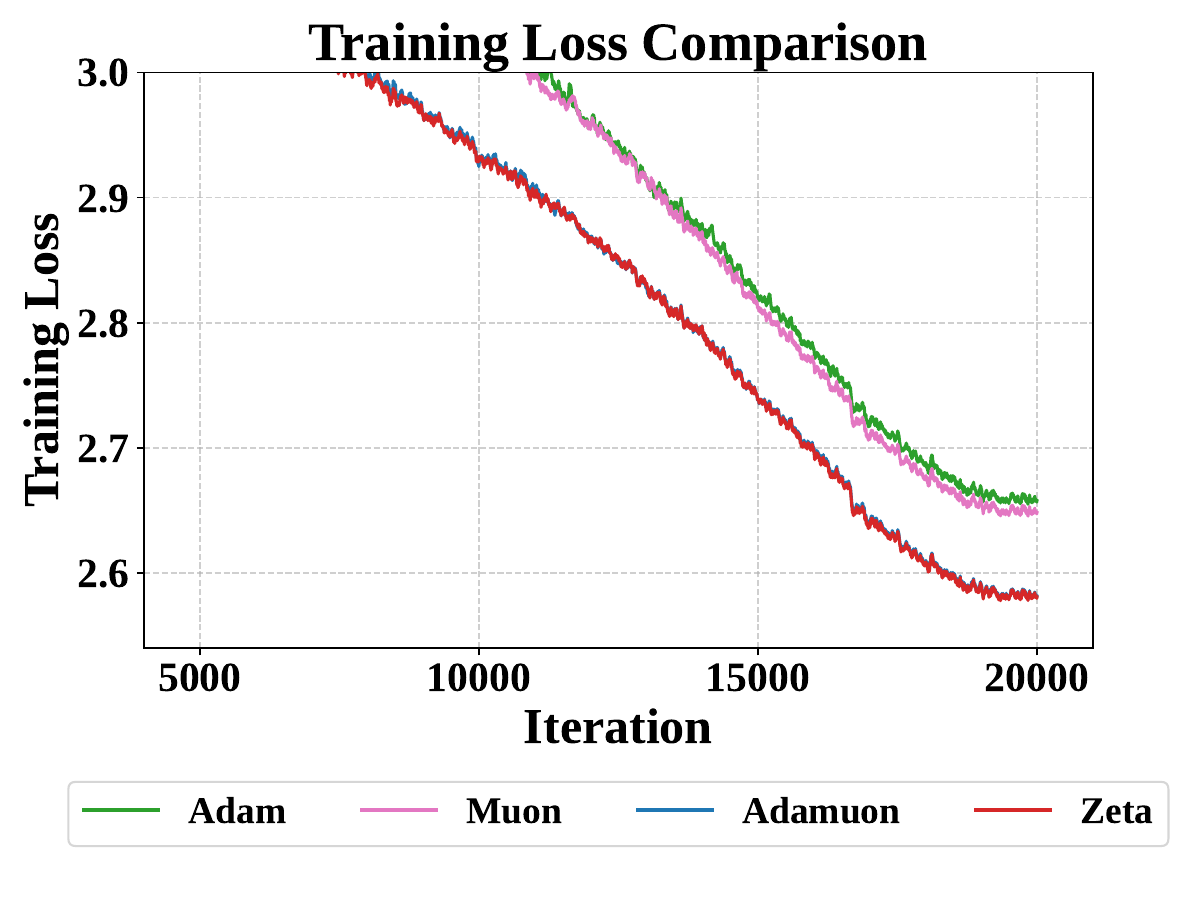}
    \end{minipage}

    \vspace{1ex}
    \caption{Training loss curves on Qwen3-0.6B under four learning rates: $3\times10^{-4}$ (top left), $9\times10^{-4}$ (top right), $3\times10^{-3}$ (bottom left), and $9\times10^{-3}$ (bottom right).}
    \label{fig:qwen06b_lr_sensitivity_curves}
\end{figure}

\subsection{Learning-rate Sensitivity Results on Vision Tasks}

In this subsection, we supplement the learning-rate tuning results on the vision tasks. Tables~\ref{tab:vision_lr_adamw}--\ref{tab:vision_lr_zeta} report the validation accuracy of AdamW, Muon, AdaMuon, and Zeta on both ViT-Tiny and ViT-Base under their separately tuned learning rates. Because the effective learning-rate ranges differ substantially across optimizers, we present one table per optimizer for a clearer comparison.

\begin{table}[!h]
    \centering
    \caption{Vision learning-rate sensitivity for AdamW. Top-1 validation accuracy (\%) on CIFAR-100.}
    \label{tab:vision_lr_adamw}
    \small
    \begin{tabular}{lcccc}
        \toprule
        \multicolumn{5}{c}{\textbf{ViT-Tiny}} \\
        \midrule
        LR & 0.009 & 0.0010 & 0.0020 &  \\
        Acc. & 56.42 & \textbf{57.02} & 54.92 &  \\
        \midrule
        \multicolumn{5}{c}{\textbf{ViT-Base}} \\
        \midrule
        LR & 0.0001 & 0.0002 & 0.0003 & 0.0004 \\
        Acc. & 57.05 & \textbf{57.98} & 52.10 & 42.58 \\
        \bottomrule
    \end{tabular}
\end{table}

\begin{table}[!h]
    \centering
    \caption{Vision learning-rate sensitivity for Muon. Top-1 validation accuracy (\%) on CIFAR-100.}
    \label{tab:vision_lr_muon}
    \small
    \begin{tabular}{lccccc}
        \toprule
        \multicolumn{6}{c}{\textbf{ViT-Tiny}} \\
        \midrule
        LR & 0.0020 & 0.0030 & 0.0040 & 0.0050 & 0.0060 \\
        Acc. & 63.65 & 63.82 & 63.83 & \textbf{64.68} & 64.15 \\
        \midrule
        \multicolumn{6}{c}{\textbf{ViT-Base}} \\
        \midrule
        LR & 0.0008 & 0.0009 & 0.0010 & 0.0020 &  \\
        Acc. & 64.10 & \textbf{64.53} & 63.69 & 62.95 &  \\
        \bottomrule
    \end{tabular}
\end{table}

\begin{table}[!h]
    \centering
    \caption{Vision learning-rate sensitivity for AdaMuon. Top-1 validation accuracy (\%) on CIFAR-100.}
    \label{tab:vision_lr_adamuon}
    \small
    \begin{tabular}{lccccccccc}
        \toprule
        \multicolumn{10}{c}{\textbf{ViT-Tiny}} \\
        \midrule
        LR & 0.0020 & 0.0030 & 0.0040 & 0.0050 & 0.0060 & 0.0070 & 0.0080 & 0.0090 & 0.0100 \\
        Acc. & 59.83 & 60.30 & 61.73 & 61.94 & 63.04 & 63.60 & 63.81 & \textbf{63.96} & 63.44 \\
        \midrule
        \multicolumn{10}{c}{\textbf{ViT-Base}} \\
        \midrule
        LR & 0.0010 & 0.0020 & 0.0030 & 0.0040 &  &  &  &  &  \\
        Acc. & 61.39 & 61.86 & \textbf{62.03} & 61.42 &  &  &  &  &  \\
        \bottomrule
    \end{tabular}
\end{table}

\begin{table}[!h]
    \centering
    \caption{Vision learning-rate sensitivity for Zeta. Top-1 validation accuracy (\%) on CIFAR-100.}
    \label{tab:vision_lr_zeta}
    \small
    \begin{tabular}{lccccc}
        \toprule
        \multicolumn{6}{c}{\textbf{ViT-Tiny}} \\
        \midrule
        LR & 0.0020 & 0.0030 & 0.0040 & 0.0050 & 0.0060 \\
        Acc. & 63.86 & 64.12 & 64.05 & \textbf{64.98} & 64.65 \\
        \midrule
        \multicolumn{6}{c}{\textbf{ViT-Base}} \\
        \midrule
        LR & 0.0008 & 0.0009 & 0.0010 & 0.0020 &  \\
        Acc. & 65.28 & \textbf{65.34} & 65.08 & 64.77 &  \\
        \bottomrule
    \end{tabular}
\end{table}

\section{Limitations and Further Directions}
\label{sec:limitation}

Although our experiments already cover multiple language and vision settings, the current study has not yet been extended to substantially larger-scale models or broader training regimes. Evaluating whether the same advantages persist under more extreme scaling, different data mixtures, and additional architectures remains an important direction for future work. It would also be valuable to further study how the benefits of dual whitening interact with training efficiency and system-level considerations in large-scale practice.

\section {Broader Impacts}
\label{sec:broader impacts}
The development of more reliable optimization methods for large-scale learning systems has important technical and societal implications. By improving the stability and efficiency of training, Zeta can reduce the computational cost required to obtain strong models, which may lower energy consumption and make large-scale experimentation more accessible to a broader range of researchers and institutions. In this sense, advances in optimizer design can contribute to more resource-efficient and reproducible machine learning practice.

At the same time, more effective optimization may also accelerate the development and deployment of increasingly capable AI systems. Such progress can amplify both beneficial and harmful downstream uses, depending on the application context. Stronger training methods may support scientific discovery, education, and productivity, but they may also be used in systems that generate misinformation, enable surveillance, or reinforce existing social biases. For this reason, improvements in optimization should be considered part of the broader responsible-AI pipeline rather than as purely neutral technical advances.

Our work is primarily methodological and does not introduce a new application-specific risk on its own. However, by improving matrix optimization, it may indirectly support the training of larger or more capable foundation models. We therefore encourage future work on efficient optimization to be accompanied by careful evaluation of environmental cost, fairness, misuse potential, and deployment safety. We also hope that clearer theoretical understanding of optimization mechanisms, such as the dual whitening principle studied here, can support more transparent and auditable development of large-scale learning systems.


\newpage
\section*{NeurIPS Paper Checklist}

\begin{enumerate}

\item {\bf Claims}
    \item[] Question: Do the main claims made in the abstract and introduction accurately reflect the paper's contributions and scope?
    \item[] Answer: \answerYes{} 
    \item[] Justification: The abstract and Section~\ref{sec:intro} state the core claims, motivation, scope, and empirical conclusions, and these claims are supported by the theory in Section~\ref{sec:theory} and the experiments in Sections~\ref{sec:experiments}--\ref{subsec:sensitivity}.
    \item[] Guidelines:
    \begin{itemize}
        \item The answer \answerNA{} means that the abstract and introduction do not include the claims made in the paper.
        \item The abstract and/or introduction should clearly state the claims made, including the contributions made in the paper and important assumptions and limitations. A \answerNo{} or \answerNA{} answer to this question will not be perceived well by the reviewers. 
        \item The claims made should match theoretical and experimental results, and reflect how much the results can be expected to generalize to other settings. 
        \item It is fine to include aspirational goals as motivation as long as it is clear that these goals are not attained by the paper. 
    \end{itemize}

\item {\bf Limitations}
    \item[] Question: Does the paper discuss the limitations of the work performed by the authors?
    \item[] Answer: \answerYes{} 
    \item[] Justification: We discuss the limitation in Appendix \ref{sec:limitation}
    \item[] Guidelines:
    \begin{itemize}
        \item The answer \answerNA{} means that the paper has no limitation while the answer \answerNo{} means that the paper has limitations, but those are not discussed in the paper. 
        \item The authors are encouraged to create a separate ``Limitations'' section in their paper.
        \item The paper should point out any strong assumptions and how robust the results are to violations of these assumptions (e.g., independence assumptions, noiseless settings, model well-specification, asymptotic approximations only holding locally). The authors should reflect on how these assumptions might be violated in practice and what the implications would be.
        \item The authors should reflect on the scope of the claims made, e.g., if the approach was only tested on a few datasets or with a few runs. In general, empirical results often depend on implicit assumptions, which should be articulated.
        \item The authors should reflect on the factors that influence the performance of the approach. For example, a facial recognition algorithm may perform poorly when image resolution is low or images are taken in low lighting. Or a speech-to-text system might not be used reliably to provide closed captions for online lectures because it fails to handle technical jargon.
        \item The authors should discuss the computational efficiency of the proposed algorithms and how they scale with dataset size.
        \item If applicable, the authors should discuss possible limitations of their approach to address problems of privacy and fairness.
        \item While the authors might fear that complete honesty about limitations might be used by reviewers as grounds for rejection, a worse outcome might be that reviewers discover limitations that aren't acknowledged in the paper. The authors should use their best judgment and recognize that individual actions in favor of transparency play an important role in developing norms that preserve the integrity of the community. Reviewers will be specifically instructed to not penalize honesty concerning limitations.
    \end{itemize}

\item {\bf Theory assumptions and proofs}
    \item[] Question: For each theoretical result, does the paper provide the full set of assumptions and a complete (and correct) proof?
    \item[] Answer: \answerYes{} 
    \item[] Justification: The theoretical assumptions are stated in Section~\ref{sec:theory}, and the full proofs and supporting lemma are provided in the Appendix \ref{sec:proofs} supplement after the main experiments.
    \item[] Guidelines:
    \begin{itemize}
        \item The answer \answerNA{} means that the paper does not include theoretical results. 
        \item All the theorems, formulas, and proofs in the paper should be numbered and cross-referenced.
        \item All assumptions should be clearly stated or referenced in the statement of any theorems.
        \item The proofs can either appear in the main paper or the supplemental material, but if they appear in the supplemental material, the authors are encouraged to provide a short proof sketch to provide intuition. 
        \item Inversely, any informal proof provided in the core of the paper should be complemented by formal proofs provided in appendix or supplemental material.
        \item Theorems and Lemmas that the proof relies upon should be properly referenced. 
    \end{itemize}

    \item {\bf Experimental result reproducibility}
    \item[] Question: Does the paper fully disclose all the information needed to reproduce the main experimental results of the paper to the extent that it affects the main claims and/or conclusions of the paper (regardless of whether the code and data are provided or not)?
    \item[] Answer: \answerYes{} 
    \item[] Justification: Yes. Section~\ref{sec:experiments} describes the model suites, datasets, optimizer baselines, and main training setup, while Sections~\ref{subsec:main_results}--\ref{subsec:sensitivity} and the Appendix \ref{sec:more exp details} report the evaluation settings, ablations, and sensitivity results needed to reproduce the paper's main findings.
    \item[] Guidelines:
    \begin{itemize}
        \item The answer \answerNA{} means that the paper does not include experiments.
        \item If the paper includes experiments, a \answerNo{} answer to this question will not be perceived well by the reviewers: Making the paper reproducible is important, regardless of whether the code and data are provided or not.
        \item If the contribution is a dataset and\slash or model, the authors should describe the steps taken to make their results reproducible or verifiable. 
        \item Depending on the contribution, reproducibility can be accomplished in various ways. For example, if the contribution is a novel architecture, describing the architecture fully might suffice, or if the contribution is a specific model and empirical evaluation, it may be necessary to either make it possible for others to replicate the model with the same dataset, or provide access to the model. In general. releasing code and data is often one good way to accomplish this, but reproducibility can also be provided via detailed instructions for how to replicate the results, access to a hosted model (e.g., in the case of a large language model), releasing of a model checkpoint, or other means that are appropriate to the research performed.
        \item While NeurIPS does not require releasing code, the conference does require all submissions to provide some reasonable avenue for reproducibility, which may depend on the nature of the contribution. For example
        \begin{enumerate}
            \item If the contribution is primarily a new algorithm, the paper should make it clear how to reproduce that algorithm.
            \item If the contribution is primarily a new model architecture, the paper should describe the architecture clearly and fully.
            \item If the contribution is a new model (e.g., a large language model), then there should either be a way to access this model for reproducing the results or a way to reproduce the model (e.g., with an open-source dataset or instructions for how to construct the dataset).
            \item We recognize that reproducibility may be tricky in some cases, in which case authors are welcome to describe the particular way they provide for reproducibility. In the case of closed-source models, it may be that access to the model is limited in some way (e.g., to registered users), but it should be possible for other researchers to have some path to reproducing or verifying the results.
        \end{enumerate}
    \end{itemize}

\item {\bf Open access to data and code}
    \item[] Question: Does the paper provide open access to the data and code, with sufficient instructions to faithfully reproduce the main experimental results, as described in supplemental material?
    \item[] Answer: \answerNo{} 
    \item[] Justification: We will release our code upon acceptance.
    \item[] Guidelines:
    \begin{itemize}
        \item The answer \answerNA{} means that paper does not include experiments requiring code.
        \item Please see the NeurIPS code and data submission guidelines (\url{https://neurips.cc/public/guides/CodeSubmissionPolicy}) for more details.
        \item While we encourage the release of code and data, we understand that this might not be possible, so \answerNo{} is an acceptable answer. Papers cannot be rejected simply for not including code, unless this is central to the contribution (e.g., for a new open-source benchmark).
        \item The instructions should contain the exact command and environment needed to run to reproduce the results. See the NeurIPS code and data submission guidelines (\url{https://neurips.cc/public/guides/CodeSubmissionPolicy}) for more details.
        \item The authors should provide instructions on data access and preparation, including how to access the raw data, preprocessed data, intermediate data, and generated data, etc.
        \item The authors should provide scripts to reproduce all experimental results for the new proposed method and baselines. If only a subset of experiments are reproducible, they should state which ones are omitted from the script and why.
        \item At submission time, to preserve anonymity, the authors should release anonymized versions (if applicable).
        \item Providing as much information as possible in supplemental material (appended to the paper) is recommended, but including URLs to data and code is permitted.
    \end{itemize}

\item {\bf Experimental setting/details}
    \item[] Question: Does the paper specify all the training and test details (e.g., data splits, hyperparameters, how they were chosen, type of optimizer) necessary to understand the results?
    \item[] Answer: \answerYes{} 
    \item[] Justification: Section~\ref{sec:experiments} reports the benchmark suite, optimizer baselines, model scales, and evaluation settings, and the Appendix \ref{sec:more exp details} supplements these with layer-wise motivation results, learning-rate sensitivity results, and additional tables.
    \item[] Guidelines:
    \begin{itemize}
        \item The answer \answerNA{} means that the paper does not include experiments.
        \item The experimental setting should be presented in the core of the paper to a level of detail that is necessary to appreciate the results and make sense of them.
        \item The full details can be provided either with the code, in appendix, or as supplemental material.
    \end{itemize}

\item {\bf Experiment statistical significance}
    \item[] Question: Does the paper report error bars suitably and correctly defined or other appropriate information about the statistical significance of the experiments?
    \item[] Answer: \answerYes{} 
    \item[] Justification: The experimental sections \ref{subsec:main_results}--\ref{subsec:sensitivity} present direct comparisons across optimizers on the same tasks and tuned settings, and the reported tables and figures use consistent metrics such as training loss, downstream accuracy, and top-1 validation accuracy.
    \item[] Guidelines:
    \begin{itemize}
        \item The answer \answerNA{} means that the paper does not include experiments.
        \item The authors should answer \answerYes{} if the results are accompanied by error bars, confidence intervals, or statistical significance tests, at least for the experiments that support the main claims of the paper.
        \item The factors of variability that the error bars are capturing should be clearly stated (for example, train/test split, initialization, random drawing of some parameter, or overall run with given experimental conditions).
        \item The method for calculating the error bars should be explained (closed form formula, call to a library function, bootstrap, etc.)
        \item The assumptions made should be given (e.g., Normally distributed errors).
        \item It should be clear whether the error bar is the standard deviation or the standard error of the mean.
        \item It is OK to report 1-sigma error bars, but one should state it. The authors should preferably report a 2-sigma error bar than state that they have a 96\% CI, if the hypothesis of Normality of errors is not verified.
        \item For asymmetric distributions, the authors should be careful not to show in tables or figures symmetric error bars that would yield results that are out of range (e.g., negative error rates).
        \item If error bars are reported in tables or plots, the authors should explain in the text how they were calculated and reference the corresponding figures or tables in the text.
    \end{itemize}

\item {\bf Experiments compute resources}
    \item[] Question: For each experiment, does the paper provide sufficient information on the computer resources (type of compute workers, memory, time of execution) needed to reproduce the experiments?
    \item[] Answer: \answerYes{} 
    \item[] Justification: We provide detailed information about on computing resources in Appendix \ref{sec:more exp details}
    \item[] Guidelines:
    \begin{itemize}
        \item The answer \answerNA{} means that the paper does not include experiments.
        \item The paper should indicate the type of compute workers CPU or GPU, internal cluster, or cloud provider, including relevant memory and storage.
        \item The paper should provide the amount of compute required for each of the individual experimental runs as well as estimate the total compute. 
        \item The paper should disclose whether the full research project required more compute than the experiments reported in the paper (e.g., preliminary or failed experiments that didn't make it into the paper). 
    \end{itemize}
    
\item {\bf Code of ethics}
    \item[] Question: Does the research conducted in the paper conform, in every respect, with the NeurIPS Code of Ethics \url{https://neurips.cc/public/EthicsGuidelines}?
    \item[] Answer: \answerYes{} 
    \item[] Justification: The paper meets the NeurIPS Code of Ethics.
    \item[] Guidelines:
    \begin{itemize}
        \item The answer \answerNA{} means that the authors have not reviewed the NeurIPS Code of Ethics.
        \item If the authors answer \answerNo, they should explain the special circumstances that require a deviation from the Code of Ethics.
        \item The authors should make sure to preserve anonymity (e.g., if there is a special consideration due to laws or regulations in their jurisdiction).
    \end{itemize}

\item {\bf Broader impacts}
    \item[] Question: Does the paper discuss both potential positive societal impacts and negative societal impacts of the work performed?
    \item[] Answer: \answerYes{} 
    \item[] Justification: We discuss broader impacts in Appendix \ref{sec:broader impacts}
    \item[] Guidelines:
    \begin{itemize}
        \item The answer \answerNA{} means that there is no societal impact of the work performed.
        \item If the authors answer \answerNA{} or \answerNo, they should explain why their work has no societal impact or why the paper does not address societal impact.
        \item Examples of negative societal impacts include potential malicious or unintended uses (e.g., disinformation, generating fake profiles, surveillance), fairness considerations (e.g., deployment of technologies that could make decisions that unfairly impact specific groups), privacy considerations, and security considerations.
        \item The conference expects that many papers will be foundational research and not tied to particular applications, let alone deployments. However, if there is a direct path to any negative applications, the authors should point it out. For example, it is legitimate to point out that an improvement in the quality of generative models could be used to generate Deepfakes for disinformation. On the other hand, it is not needed to point out that a generic algorithm for optimizing neural networks could enable people to train models that generate Deepfakes faster.
        \item The authors should consider possible harms that could arise when the technology is being used as intended and functioning correctly, harms that could arise when the technology is being used as intended but gives incorrect results, and harms following from (intentional or unintentional) misuse of the technology.
        \item If there are negative societal impacts, the authors could also discuss possible mitigation strategies (e.g., gated release of models, providing defenses in addition to attacks, mechanisms for monitoring misuse, mechanisms to monitor how a system learns from feedback over time, improving the efficiency and accessibility of ML).
    \end{itemize}
    
\item {\bf Safeguards}
    \item[] Question: Does the paper describe safeguards that have been put in place for responsible release of data or models that have a high risk for misuse (e.g., pre-trained language models, image generators, or scraped datasets)?
    \item[] Answer: \answerNA{} 
    \item[] Justification: This paper poses no such risks.
    \item[] Guidelines:
    \begin{itemize}
        \item The answer \answerNA{} means that the paper poses no such risks.
        \item Released models that have a high risk for misuse or dual-use should be released with necessary safeguards to allow for controlled use of the model, for example by requiring that users adhere to usage guidelines or restrictions to access the model or implementing safety filters. 
        \item Datasets that have been scraped from the Internet could pose safety risks. The authors should describe how they avoided releasing unsafe images.
        \item We recognize that providing effective safeguards is challenging, and many papers do not require this, but we encourage authors to take this into account and make a best faith effort.
    \end{itemize}

\item {\bf Licenses for existing assets}
    \item[] Question: Are the creators or original owners of assets (e.g., code, data, models), used in the paper, properly credited and are the license and terms of use explicitly mentioned and properly respected?
    \item[] Answer: \answerYes{} 
    \item[] Justification: We strictly follow the license of the assets.
    \item[] Guidelines:
    \begin{itemize}
        \item The answer \answerNA{} means that the paper does not use existing assets.
        \item The authors should cite the original paper that produced the code package or dataset.
        \item The authors should state which version of the asset is used and, if possible, include a URL.
        \item The name of the license (e.g., CC-BY 4.0) should be included for each asset.
        \item For scraped data from a particular source (e.g., website), the copyright and terms of service of that source should be provided.
        \item If assets are released, the license, copyright information, and terms of use in the package should be provided. For popular datasets, \url{paperswithcode.com/datasets} has curated licenses for some datasets. Their licensing guide can help determine the license of a dataset.
        \item For existing datasets that are re-packaged, both the original license and the license of the derived asset (if it has changed) should be provided.
        \item If this information is not available online, the authors are encouraged to reach out to the asset's creators.
    \end{itemize}

\item {\bf New assets}
    \item[] Question: Are new assets introduced in the paper well documented and is the documentation provided alongside the assets?
    \item[] Answer: \answerNA{}{} 
    \item[] Justification: The paper does not release new assets.
    \item[] Guidelines:
    \begin{itemize}
        \item The answer \answerNA{} means that the paper does not release new assets.
        \item Researchers should communicate the details of the dataset\slash code\slash model as part of their submissions via structured templates. This includes details about training, license, limitations, etc. 
        \item The paper should discuss whether and how consent was obtained from people whose asset is used.
        \item At submission time, remember to anonymize your assets (if applicable). You can either create an anonymized URL or include an anonymized zip file.
    \end{itemize}

\item {\bf Crowdsourcing and research with human subjects}
    \item[] Question: For crowdsourcing experiments and research with human subjects, does the paper include the full text of instructions given to participants and screenshots, if applicable, as well as details about compensation (if any)? 
    \item[] Answer: \answerNA{} 
    \item[] Justification: The paper does not involve crowdsourcing nor research with human subjects.
    \item[] Guidelines:
    \begin{itemize}
        \item The answer \answerNA{} means that the paper does not involve crowdsourcing nor research with human subjects.
        \item Including this information in the supplemental material is fine, but if the main contribution of the paper involves human subjects, then as much detail as possible should be included in the main paper. 
        \item According to the NeurIPS Code of Ethics, workers involved in data collection, curation, or other labor should be paid at least the minimum wage in the country of the data collector. 
    \end{itemize}

\item {\bf Institutional review board (IRB) approvals or equivalent for research with human subjects}
    \item[] Question: Does the paper describe potential risks incurred by study participants, whether such risks were disclosed to the subjects, and whether Institutional Review Board (IRB) approvals (or an equivalent approval/review based on the requirements of your country or institution) were obtained?
    \item[] Answer: \answerNA{} 
    \item[] Justification: The paper does not involve crowdsourcing nor research with human subjects.
    \item[] Guidelines:
    \begin{itemize}
        \item The answer \answerNA{} means that the paper does not involve crowdsourcing nor research with human subjects.
        \item Depending on the country in which research is conducted, IRB approval (or equivalent) may be required for any human subjects research. If you obtained IRB approval, you should clearly state this in the paper. 
        \item We recognize that the procedures for this may vary significantly between institutions and locations, and we expect authors to adhere to the NeurIPS Code of Ethics and the guidelines for their institution. 
        \item For initial submissions, do not include any information that would break anonymity (if applicable), such as the institution conducting the review.
    \end{itemize}

\item {\bf Declaration of LLM usage}
    \item[] Question: Does the paper describe the usage of LLMs if it is an important, original, or non-standard component of the core methods in this research? Note that if the LLM is used only for writing, editing, or formatting purposes and does \emph{not} impact the core methodology, scientific rigor, or originality of the research, declaration is not required.
    \item[] Answer: \answerNA{} 
    \item[] Justification:  This paper does not involve LLMs.
    \item[] Guidelines:
    \begin{itemize}
        \item The answer \answerNA{} means that the core method development in this research does not involve LLMs as any important, original, or non-standard components.
        \item Please refer to our LLM policy in the NeurIPS handbook for what should or should not be described.
    \end{itemize}

\end{enumerate}

\end{document}